\def\1{\bm{1}}
\def\mG{{\bm{G}}}
\def\mI{{\bm{I}}}
\def\mJ{{\bm{J}}}
\def\mT{{\bm{T}}}
\DeclareMathAlphabet{\mathsfit}{\encodingdefault}{\sfdefault}{m}{sl}
\SetMathAlphabet{\mathsfit}{bold}{\encodingdefault}{\sfdefault}{bx}{n}
\def\gL{{\mathcal{L}}}
\def\gM{{\mathcal{M}}}
\def\gP{{\mathcal{P}}}
\def\gT{{\mathcal{T}}}
\def\gV{{\mathcal{V}}}
\def\gX{{\mathcal{X}}}
\def\gZ{{\mathcal{Z}}}
\newcommand{\E}{\mathbb{E}}
\newcommand{\R}{\mathbb{R}}
\DeclareMathOperator*{\argmin}{arg\,min}
\definecolor{rightcolor}{HTML}{000000}
\definecolor{wrongcolor}{HTML}{000000}
\newcommand{\lacl}{\color{brown}}
\newcommand{\blue}{\color{blue}}
\newcommand{\purple}{\color{purple}}
 \let\mathscr\relax 
\newcommand{\vecx}{\mathbf{x}}
\newcommand{\vecz}{\mathbf{z}}
\newcommand{\vecv}{\mathbf{x}}
\newcommand{\vecs}{\mathbf{z}}
\newcommand{\paper}{Causal Representation Learning from Multimodal Biomedical Observations}
\newcommand{\given}{\: | \:}
\newcommand{\ind}{\perp\!\!\!\!\perp} 
\newtheorem{theorem}{Theorem}[section]
\newtheorem{proposition}[theorem]{Proposition}
\theoremstyle{definition}
\newtheorem{definition}[theorem]{Definition}
\newtcolorbox{empheqboxed}{colback=gray!20, 
 colframe=white,
 width=\textwidth,
 sharpish corners,
 top=1mm, %
 bottom=0pt,
 left=2pt,
 right=2pt
}
\title{\paper}
\author{%
Yuewen Sun$^{1,2}\thanks{Equal contribution.}$, Lingjing Kong$^{2*}$, Guangyi Chen$^{1,2}$, Loka Li$^{1}$, Gongxu Luo$^{1}$, Zijian Li$^{1}$,\\
\textbf{Yixuan Zhang$^{1}$, Yujia Zheng$^{2}$, Mengyue Yang$^{3}$, Petar Stojanov$^{4}$, Eran Segal$^{1}$, } \\
\textbf{Eric P. Xing$^{1, 2}$, Kun Zhang$^{1,2}$} \\
$^{1}$Mohamed bin Zayed University of Artificial Intelligence, $^{2}$Carnegie Mellon University, \\
$^{3}$University of Bristol, $^{4}$Broad Institute of MIT and Harvard \\
}
\begin{document}

\maketitle
\tikzset{
  latent/.style={circle, draw, minimum size=25pt},
  obs/.style={circle, draw, fill=gray!50, minimum size=25pt},
  edge/.style={->, > = latex'}
}
\addtocontents{toc}{\protect\setcounter{tocdepth}{0}}

\begin{abstract}
Prevalent in biomedical applications (e.g., human phenotype research), multimodal datasets can provide valuable insights into the underlying physiological mechanisms.
However, current machine learning (ML) models designed to analyze these datasets often lack interpretability and identifiability guarantees, which are essential for biomedical research.
Recent advances in causal representation learning have shown promise in identifying interpretable latent causal variables with formal theoretical guarantees. 
Unfortunately, most current work on multimodal distributions either relies on restrictive parametric assumptions or yields only coarse identification results, limiting their applicability to biomedical research that favors a detailed understanding of the mechanisms.

In this work, we aim to develop flexible identification conditions for multimodal data and principled methods to facilitate the understanding of biomedical datasets.
Theoretically, we consider a nonparametric latent distribution (c.f., parametric assumptions in previous work) that allows for causal relationships across potentially different modalities. 
We establish identifiability guarantees for each latent component, extending the subspace identification results from previous work.
Our key theoretical contribution is the structural sparsity of causal connections between modalities, which, as we will discuss, is natural for a large collection of biomedical systems.
Empirically, we present a practical framework to instantiate our theoretical insights. 
We demonstrate the effectiveness of our approach through extensive experiments on both numerical and synthetic datasets. 
Results on a real-world human phenotype dataset are consistent with established biomedical research, validating our theoretical and methodological framework.
\end{abstract}

\section{Introduction}\label{introduction}
Multimodal datasets provide rich and comprehensive insights into complex biomedical systems, offering the potential to provide a deeper understanding of physiological mechanisms.
For example, the human phenotype dataset~\citep{levine2024genome} contains measurements from multiple modalities, including anthropometric data, sleep monitoring, and genetic information.
Proper analysis of such data can potentially uncover the underlying mechanisms that drive phenotypic diversity and disease susceptibility, leading to the discovery of novel molecular markers and the development of predictive models for disease. 
Recent advances in large-scale models have made it possible to exploit large biomedical datasets for various tasks such as protein structure prediction~\citep{jumper2021highly,lin2023evolutionary}, gene-disease association identification~\citep{diaz2023applying,zagirova2023biomedical}, and novel drug candidate discovery~\citep{pal2023chatgpt,zheng2024large}.

Despite the impressive performance of these models, their trustworthiness remains a contentious issue~\citep{zheng2023large}. 
A major concern lies in their lack of interpretability, which poses significant challenges in biomedical research and hinders the safe and ethical application of these models.
For example, in clinical decision-making~\citep{hager2024evaluation}, if the model recommends a specific treatment plan for a patient, it is important for clinicians to understand the rationale behind the recommendation. 
Without such transparency, it is difficult to trust the model's output or integrate these systems into critical decision-making workflows.
Although several explainable models have been developed for multimodal datasets~\citep{tang2023explainable}, this area remains largely underexplored.

Fortunately, recent advances in causal representation learning (CRL)~\citep{scholkopf2021toward} have shown promise in identifying latent causal structures from raw observations, making it well-suited for biomedical applications. 
For example, a plethora of CRL studies~\citep{hyvarinen2019nonlinear,khemakhem2020variational, zhang2024causal, buchholz2024learning, von2024nonparametric, zhang2024identifiability, li2024causal, ahuja2023interventional} effectively utilize temporal information or domain indices to identify latent causal models and apply them in fMRI data.
Recently, a growing body of CRL research has investigated multimodal distributions~\citep{yao2023multi,morioka2023connectivity,morioka2024causal,daunhaweridentifiability,sturma2023unpaired,gresele2020incomplete}.
These works leverage shared information across modalities to establish identifiability guarantees for latent variables~\citep{yao2023multi,morioka2024causal,daunhaweridentifiability}.
Despite these advancements, some aspects of these works are still limited.
For instance, \citet{von2021self,daunhaweridentifiability,yao2023multi} only focus on identifying latent subspaces that are directly shared by multiple modalities. 
In practice, however, many informative latent variables may influence multiple modalities indirectly through intermediate latent mechanisms.
Moreover, such subspace identifiability loses track of the intricate causal influences among individual components, leading to a limited view of the underlying latent mechanism.  
\citet{morioka2024causal,gresele2020incomplete,morioka2023connectivity} rely on specific assumptions about latent variable distributions (e.g., independence or exponential family).
These constraints significantly limit their applicability for biomedical datasets that involve complex interactions among latent factors.

\begin{wrapfigure}{l}{0.36\textwidth}
\vspace{-4mm}
\setlength{\belowcaptionskip}{-10pt} 
\setlength{\abovecaptionskip}{-10pt} 
\centering
\includegraphics[width=0.36\textwidth]{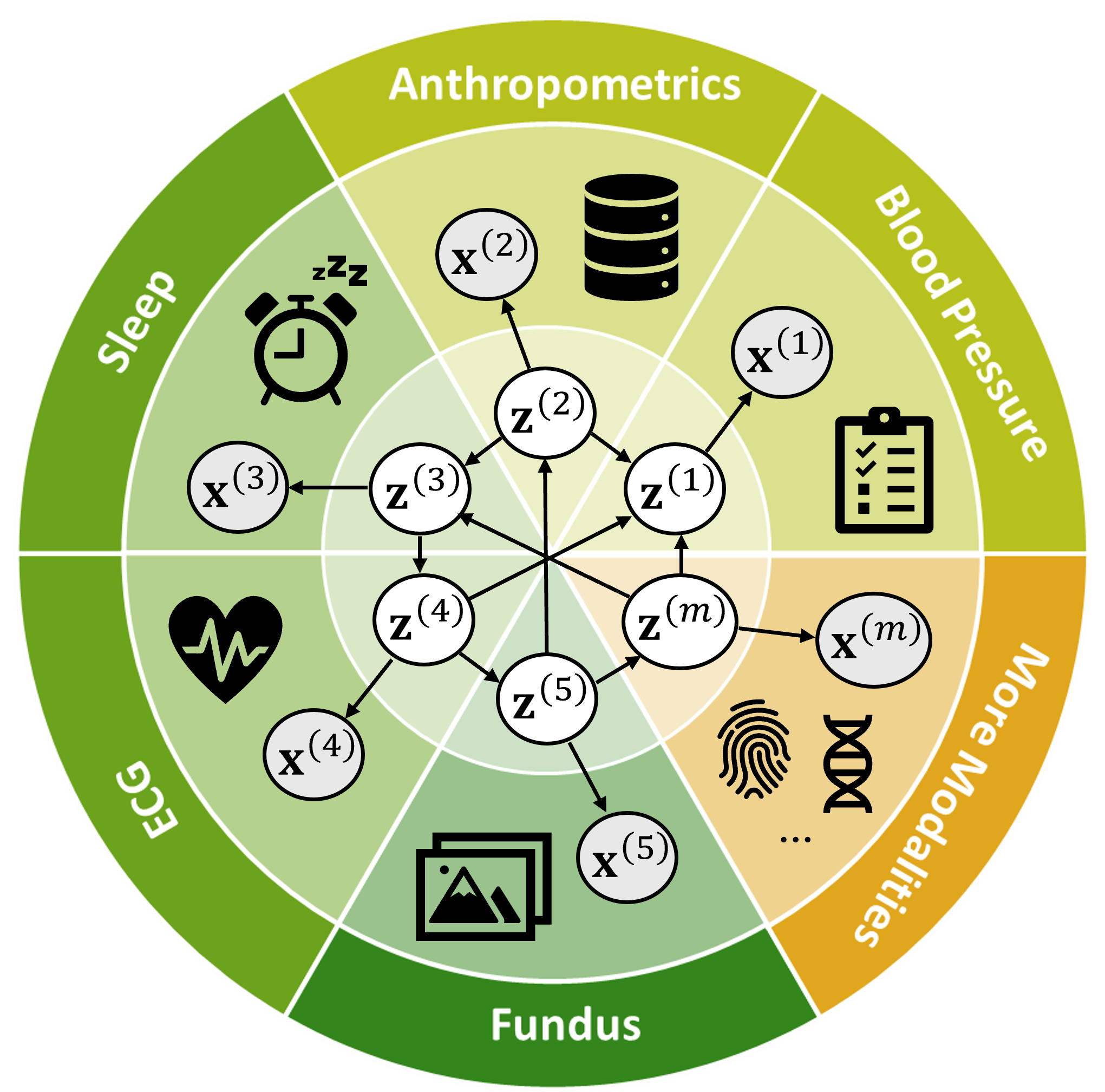}
\caption{Multimodal data with causal latent variables.}
\label{fig:intro}
\end{wrapfigure}

In this work, we aim to develop identification theory with \textit{multimodal biomedical} datasets in mind, and design \textit{principled and interpretable} models to facilitate analyzing such datasets.
We assume that observations $\vecx^{(m)}$ in each modality $m$ are generated by a specific set of latent components $\{z^{(m)}_{i}\}_{i}$, and allow for flexible causal relationships among latent components from potentially different modalities, such that $ z^{(m)}_{i} \rightarrow z^{(n)}_{j}$ for $m\neq n$, $ i \neq j$, as shown in Figure~\ref{fig:intro}.
\textbf{Theoretically}, we provide identifiability guarantees for each latent component $ z^{(m)}_{i} $, thus generalizing the subspace identification results in \citet{yao2023multi,daunhaweridentifiability} while avoiding independence or parametric restrictions on the latent distribution $ p(\{\vecz^{(m)}\}_{m}) $ as in \citet{morioka2023connectivity,morioka2024causal,gresele2020incomplete}.
In particular, we first show that any latent subspace $ \vecz^{(m)} $ can be identified as long as $ \vecz^{(m)} $ exerts sufficient influences on other modalities, which is weaker than assuming that $ \vecz^{(m)} $ is directly shared across multiple modalities as in \citet{daunhaweridentifiability,yao2023multi}.
Based on this subspace identification, we leverage the sparsity of the causal connections between modalities to further identify each latent component $\{z^{(m)}_{i}\}_{i}$.
This notion of causal sparsity has been explored in recent work~\citep{lachapelle2023synergies,xu2024sparsity,zheng2022identifiability} in other causal identification settings and has been shown realistic in many biomedical systems~\citep{busiello2017explorability,milo2002network,babu2004structure,west2002modelling,banavar1999size}, as we will discuss further in Section~\ref{sec:theorem}.

\textbf{Empirically}, we develop a theoretically grounded estimation framework to recover the latent components in each modality.
Our model implements our theoretical conditions (in particular, conditional independence and sparsity constraints) on top of normalizing flow~\citep{huang2018neural,kobyzev2020normalizing} within the encoder-decoder framework. 
Extensive experiments on both numerical and synthetic datasets demonstrate its effectiveness.
Most notably, our framework enables the identification of latent causal variables that capture complex biomedical interactions and facilitates the analysis of potential causal mechanisms across modalities, which are important for clinical decision-making. 
The evaluation results on a real-world human phenotype dataset provide novel insights into the relationships between modalities, and the discovered causal relationships align with the findings from biomedical research, highlighting our contributions to the biomedical domain. 
\section{Related Work}\label{sec:related_work}
\vspace{-2mm}
\paragraph{ML models for biomedical research.} 
For biomedical applications, ML models are developed to extract informative representations to facilitate downstream tasks, including DNA sequence modeling~\citep{zhou2024dnabert2efficientfoundationmodel,nguyen2023hyenadnalongrangegenomicsequence,Dalla-Torre2023.01.11.523679}, protein structure prediction~\citep{jumper2021highly,lin2023evolutionary}, and disease detection~\citep{zhou2023foundation,jang2024disease}. 
The success of large language models (LLMs) has significantly advanced sequence modeling for DNA, RNA, and proteins~\citep{celaj2023rna,shulgina2024rna,nguyen2024sequence,li2023codonbert,chen2023self,lin2023evolutionary}, yet these methods primarily operate on a single modality, limiting their applicability to the multimodal datasets, which are commonly encountered in biomedical research.
Although several studies have explored integrating multimodal biomedical data~\citep{garauluis2024multimodaltransferlearningbiological,pei2024biot5+,taylor2022galacticalargelanguagemodel}, these approaches often lack theoretical guarantees, raising concerns about the reliability of their results.
In this paper, we leverage causal principles to develop theoretically sound ML models for multimodal biomedical data, aiming to provide reliable and interpretable insights into complex biomedical systems.

\vspace{-2mm}
\paragraph{Multimodal representation learning.} 
Multimodal representation learning~\citep{zhang2020multimodal,manzoor2023multimodality} refers to the process of learning representations from multiple data modalities (e.g., text, image, audio) for specific tasks. 
Recent advances have leveraged contrastive learning techniques to improve the alignment of latent spaces across different modalities~\citep{daunhaweridentifiability,wang2022vlmixer,radford2021learning,khosla2020supervised}.
Methods like CLIP~\citep{radford2021learning} and Contrastive Predictive Coding~\citep{oord2018representation} have demonstrated the ability to recover shared latent factors across modalities by (implicitly) maximizing mutual information between representations.
However, challenges remain in achieving finding modality-specific representations, which requires novel approaches that preserve the unique characteristics of each modality.

\vspace{-2mm}
\paragraph{Identifiable CRL.}
CRL aims to identify high-level causal variables from low-level observations, integrating principles from both machine learning and causality~\citep{scholkopf2021toward}, and can be viewed as an extension of causal discovery~\citep{spirtes2001causation,licausal,luo2025gene,li2024federated,ziu2024psi}. 
CRL methods with identifiability guarantees can be classified based on the assumptions they impose, including sparsity constraints~\citep{xu2024sparsity,zheng2022identifiability,zheng2023generalizing,lachapelle2024nonparametric}, interventional/multi-distribution settings~\citep{hyvarinen2019nonlinear,khemakhem2020variational,zhang2024causal,kong2023partial,buchholz2024learning, von2024nonparametric,zhang2024identifiability,li2024causal,varici2023score,ahuja2023interventional,jiang2023learning}, and of particular relevance to our work, multimodality~\citep{yao2023multi,morioka2023connectivity,morioka2024causal,daunhaweridentifiability,sturma2023unpaired,gresele2020incomplete}. 
To provide a clearer comparison, Table~\ref{tab:related_works} summarizes representative works in the multimodality category and highlights their differences from our work.

\vspace{-2mm}
\paragraph{Empirical CRL for multimodal applications.} 
In contrast to the previously discussed works that emphasize identifiability, another line of multimodal CRL research prioritizes practical applications in various domains, without addressing theoretical identifiability. 
\citet{mao2022towards} assume independent latent variables and introduce a two-module amortized variational algorithm to learn representations from medical images and biomedical data. 
\citet{zheng2024multi} develop a contrastive learning-based approach to extract modality-specific and modality-invariant representations from time-series tabular and textual data for root cause analysis. 
\citet{rawls2021integrated} leverage behavioral and psychiatric phenotyping alongside high-resolution neuroimaging data from the Human Connectome Project~\citep{van2013wu}, and perform greedy fast causal inference~\citep{ogarrio2016hybrid} to investigate causal relations in alcohol use disorder.
In contrast, our work establishes formal identification theory and integrates the theoretical insights into our estimation model.

\vspace{-4mm}
\begin{table}[ht]
\small
\centering
\caption{
\small
\textbf{Related work on multimodal causal representation learning.} This table considers whether a method can accommodate more than two modalities, whether the latent variable distribution is nonparametric, whether it allows dependency among latent variables, and whether identifiability is component-wise.}\label{tab:related_works}
\resizebox{\textwidth}{!}{
\begin{tabular}{lcccc}    
\toprule
\textbf{Related work} & \textbf{$>2$ Modalities} & \textbf{Nonparam. Dist.}  & \textbf{Latent Dependency} & \textbf{Component-wise Iden.} \\
\midrule
\cite{gresele2020incomplete} & \checkmark & \texttimes & \texttimes & \checkmark \\
\cite{von2021self} & \texttimes & \checkmark & \checkmark & \texttimes \\
\cite{daunhaweridentifiability} & \texttimes & \checkmark & \checkmark & \texttimes \\
\cite{yao2023multi}& \checkmark & \checkmark & \checkmark & \texttimes \\
\cite{morioka2024causal} & \checkmark & \texttimes & \checkmark & \checkmark  \\
\textbf{Our work} & \checkmark & \checkmark & \checkmark & \checkmark \\
\bottomrule
\end{tabular}}
\end{table}

\section{Latent Multimodal Causal Models}\label{sec:formulation}
Real-world biomedical datasets often integrate multiple modalities, each characterizing a unique yet interrelated aspect of the subject.
For instance, the human phenotype dataset~\citep{shilo202110} consists of tabular data, time series, images, and text, capturing diverse biomedical measurements such as anthropometrics, sleep monitoring, and genetic information.
Understanding the latent factors behind each modality and their interplay can provide valuable insights into underlying biomedical mechanisms, ultimately facilitating the advancement of biomedical technologies.
With this goal in mind, we formalize the multimodal data-generating processes as follows.

\vspace{-1.5mm}
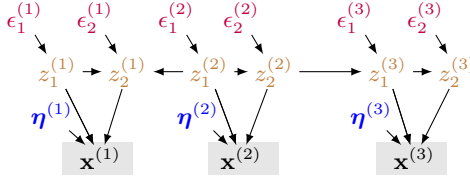
\begin{figure}[ht]
\centering
\begin{minipage}{0.5\textwidth}
\centering
\begin{tikzpicture}[scale=.72, line width=0.4pt, inner sep=0.2mm, shorten >=.1pt, shorten <=.1pt]
\fill[gray!20] (0.1, -0.1) rectangle (1.4, 0.5);
\fill[gray!20] (2.8, -0.1) rectangle (4.1, 0.5);
\fill[gray!20] (5.9, -0.1) rectangle (7.2, 0.5);
\draw (0.8, 0.2) node(X1)  {{\footnotesize\,$\vecx^{(1)}$\,}};
\draw (3.4, 0.2) node(X2)  {{\footnotesize\,$\vecx^{(2)}$\,}};
\draw (6.6, 0.2) node(X3)  {{\footnotesize\,$\vecx^{(3)}$\,}};
\draw (-0.1, 1) node(eta1)  {{\footnotesize\blue\,$\bm\eta^{(1)}$\,}};
\draw (2.6, 1) node(eta2)  {{\footnotesize\blue\,$\bm\eta^{(2)}$\,}};
\draw (5.8, 1) node(eta3)  {{\footnotesize\blue\,$\bm\eta^{(3)}$\,}};

\draw (0.0, 1.8) node(Z11)  {{\footnotesize\lacl\,$z^{(1)}_1$\,}};
\draw (1.3, 1.8) node(Z12)  {{\footnotesize\lacl\,$z^{(1)}_2$\,}};
\draw (2.8, 1.8) node(Z21)  {{\footnotesize\lacl\,$z^{(2)}_1$\,}};
\draw (4.0, 1.8) node(Z22)  {{\footnotesize\lacl\,$z^{(2)}_2$\,}};
\draw (6.1, 1.8) node(Z31)  {{\footnotesize\lacl\,$z^{(3)}_1$\,}};
\draw (7.4, 1.8) node(Z32)  {{\footnotesize\lacl\,$z^{(3)}_2$\,}};

\draw (-0.6, 2.8) node(eps11)  {{\footnotesize\purple\,$\epsilon^{(1)}_{1}$\,}};
\draw (0.7, 2.8) node(eps12)  {{\footnotesize\purple\,$\epsilon^{(1)}_{2}$\,}};
\draw (2.2, 2.8) node(eps21)  {{\footnotesize\purple\,$\epsilon^{(2)}_{1}$\,}};
\draw (3.4, 2.8) node(eps22)  {{\footnotesize\purple\,$\epsilon^{(2)}_{2}$\,}};
\draw (5.5, 2.8) node(eps31)  {{\footnotesize\purple\,$\epsilon^{(3)}_{1}$\,}};
\draw (6.8, 2.8) node(eps32)  {{\footnotesize\purple\,$\epsilon^{(3)}_{2}$\,}};

\draw[-latex] (Z11) -- (X1);
\draw[-latex] (Z11) -- (X1);
\draw[-latex] (Z12) -- (X1);
\draw[-latex] (Z21) -- (X2);
\draw[-latex] (Z21) -- (X2);
\draw[-latex] (Z22) -- (X2);
\draw[-latex] (Z31) -- (X3);
\draw[-latex] (Z31) -- (X3);
\draw[-latex] (Z32) -- (X3);

\draw[-latex] (Z11) -- (Z12);\draw[-latex] (Z21) -- (Z12);
\draw[-latex] (Z21) -- (Z22);\draw[-latex] (Z22) -- (Z31);
\draw[-latex] (Z31) -- (Z32);

\draw[-latex] (eta1) -- (X1);
\draw[-latex] (eta2) -- (X2);
\draw[-latex] (eta3) -- (X3);

\draw[-latex] (eps11) -- (Z11);
\draw[-latex] (eps12) -- (Z12);
\draw[-latex] (eps21) -- (Z21);
\draw[-latex] (eps22) -- (Z22);
\draw[-latex] (eps31) -- (Z31);
\draw[-latex] (eps32) -- (Z32);
\end{tikzpicture}
\end{minipage}
\begin{minipage}{0.32\textwidth}
\caption{An illustrative example of the hypothesis space underlying the biomedical system.}
\label{fig:graph_example}
\end{minipage}
\end{figure}
\vspace{-3mm}

\paragraph{Data-generating processes.}
Let $\vecx:= [ \vecx^{(1)}, \dots, \vecx^{(M)}]$ be a set of observations/measurements from $M$ modalities, where $\vecx^{(m)} \in \R^{ d(\vecx^{(m)}) }$ represents the observation from modality $m$ with dimensionality $ d(\vecx^{(m)}) $.
Let $\vecz = [ \vecz^{(1)}, \dots, \vecz^{(M)} ]$ be the set of causally related latent variables underlying $M$ modalities.
Specifically, the data generation process (Figure~\ref{fig:graph_example}) can be formulated as 
\begin{alignat}{2}
z^{(m)}_{i} &:= g_{z^{(m)}_{i}} ( \text{Pa}(z^{(m)}_{i}), \epsilon^{(m)}_{i}), &\quad& \text{(latent causal relations)} \label{eq:z_generation} \\
\vecx^{(m)} &:= g_{\vecx^{(m)}} (\vecz^{(m)}, \bm\eta^{(m)}), &\quad& \text{(generating functions)} \label{eq:x_generation}
\end{alignat}
where $\text{Pa}(\cdot)$ denotes the parents of a variable.
Since we allow for causal relations to exist within and across modalities, $\text{Pa}(\cdot)$ potentially includes latent variables across multiple modalities.
The differentiable function $ g_{\vecz} $ encodes the latent causal graph connecting the latent components, and its Jacobian matrix $ \mJ_{g_{\vecz}} $ can be permuted into a strictly triangular matrix.
We denote $\epsilon_{i}^{(m)}$ as the exogenous variable for $ z^{(m)}_{i} $, where all exogenous variables are mutually independent.
$\bm\eta^{(m)}$ represents domain-specific information independent of other components.

\vspace{-2mm}
\paragraph{Example.}
In healthcare, different modalities capture complementary physiological aspects.
A chest X-ray $\vecx^{(m)}$ may reflect latent factors such as lung density, cardiac silhouette, and ribcage structure, represented by $\vecz^{(m)}$. 
These latent factors can causally influence those in other modalities, $\vecz^{(n)}$, such as pulmonary function parameters (e.g., forced vital capacity) and cardiovascular biomarkers (e.g., left ventricular mass).
These, in turn, may affect electrical activity recorded in an ECG, represented by $\vecx^{(n)}$, by modulating heart rate variability and conduction patterns.

\vspace{-2mm}
\paragraph{Goal.}
As outlined previously, we aim to learn the latent variables underlying each modality and their causal relations.
Formally, consider two specifications of the data-generating process in Eq.~\eqref{eq:z_generation} and Eq.~\eqref{eq:x_generation}: $ \bm\theta:=\{ g_{\vecx^{(m)}}, g_{\vecz^{(m)}}, p( \bm\epsilon^{(m)} ) \}_{m=1}^{M}$ and $ \hat{\bm\theta} := \{ \hat{g}_{\vecx^{(m)}}, \hat{g}_{\vecz^{(m)}}, \hat{p}( \bm\epsilon^{(m)} ) \}_{m=1}^{M} $, both of which fit the marginal distribution $ p( \vecx ) $.
Our objective is to show that, \textit{given the same value of $ \vecx $}, each estimated latent component $ \hat{z}_{i}^{(m)} $ is equivalent to its true counterpart $ z^{(m)}_{i} $ up to an invertible transformation $ h_{i}^{(m)} $, i.e., $ \hat{z}_{i}^{(m)} = h_{i}^{(m)} ( z^{(m)}_{i} )$. \footnote{
    Please see Appendix~\ref{app:notations} for details on the notion of identifiability.
}
This \textit{component-wise identifiability} ensures that latent components (e.g., gene types, nutrient levels) are disentangled from the observed measurements $\vecx$ while preserving their original information.
Once component-wise identifiability is achieved, one can readily apply standard causal discovery algorithms (e.g., PC~\citep{spirtes2001causation}) to the identified components $ \hat{z}^{(m)}_{i} $ to infer the graphical structures.
The choice of structure learning algorithms can be tailored to the assumed graph class (e.g., potentially non-DAGs), and this step is orthogonal to our contribution.
These structures characterize the interactions between all latent components across modalities, which is particularly desirable for biomedical applications.
\vspace{-2mm}
\section{Identification Theory} \label{sec:theorem}
As motivated in Section~\ref{sec:formulation}, we address the component-wise identifiability of latent components $ z^{(m)}_{i} $.

\vspace{-1mm}
\paragraph{Remarks on the problem.}
Identification for multimodal distributions often leverages the structure among the available modalities.
However, component-wise identification, especially in the general nonparametric setting, is challenging.
\citet{daunhaweridentifiability,von2021self,yao2023multi} require certain information redundancy: the information of the latent variables should be fully shared and preserved by the observations of at least two modalities -- that is, we can express $\vecz^{(m)}$ as functions of $\vecx^{(m_{1})} $ and $\vecx^{(m_{2})} $ individually.
Moreover, the identification can only be achieved up to \textit{subspaces} (i.e., groups of latent components) determined by the sharing pattern.
Often, however, the latent components may not be fully shared by multiple modalities.
For example, in health monitoring, while sleep monitoring data (e.g., sleep stages or duration) may not fully encode genetic predispositions, genetic factors may still influence sleep disorders, such as insomnia and circadian rhythm disruptions.
In this case, the subspace identification may fall short of providing detailed interpretations of biomedical systems and the mechanisms encoded in the graphical structures over individual causal components.

For work that achieves component-wise identifiability, \citet{morioka2023connectivity,morioka2024causal} assume that the latent distribution $ p( \{\vecz^{(m)} \}_{m=1}^{M} ) $ follows an exponential family form with additive causal influences from multiple parents, which may be restrictive in general cases. 
For instance, in brain imaging studies, fMRI data and EEG data capture different neural activities, and the interactions between brain regions are often highly nonlinear.
Clearly, for general multimodal distributions (Figure~\ref{fig:graph_example}), we cannot access the information redundancy assumed in \citet{daunhaweridentifiability,von2021self,yao2023multi} and the nicely-behaved latent causal models in parametric assumptions~\citep{morioka2023connectivity,morioka2024causal}.

\paragraph{Our high-level approach.}
We divide the problem into two parts: we first identify latent subspaces $ \vecz^{(m)} $ (Section~\ref{subsec:subspace_identification}) and further disentangle identified subspaces into components $ z^{(m)}_{i} $ (Section~\ref{subsec:component_identification}).
For the subspace identification, we only assume that the information of the subspace $ \vecz^{(m)} $ is preserved in its corresponding observation $ \vecx^{(m)} $ and exerts sufficient influence on other modalities' observations $ \vecx^{(-m)} $, thus weakening the redundancy assumption in previous work~\citep{daunhaweridentifiability,yao2023multi}.
For the component-wise identification, we leverage a natural notion of structural sparsity in the literature~\citep{zheng2022identifiability,lachapelle2024nonparametric} -- the dependency among all the modalities should be explained with a minimal number of causal edges among latent subspaces $ \{ \vecz^{(m)}\}_{m=1}^{M} $.
This allows us to further disentangle each subspace into components, without resorting to parametric assumptions~\citep{morioka2023connectivity,morioka2024causal}.

\paragraph{Notations.}
We denote the dimensionality and the component indices of a given argument with $d(\cdot)$ and $I(\cdot)$, respectively.
The notation $ -m $ represents the complement of modality $m$, while superscripts and subscripts enclosed in parentheses, such as $(m)$, explicitly index modality $m$.
We denote sub-matrices using the notation $ [\cdot]_{R, C} $, where $ R $ and $ C $ are index sets corresponding to row and column selections, respectively. 
In this notation, setting $R$ (or $C$) to $:$ indicates the inclusion of all indices along the corresponding dimension.

\subsection{Identifying Latent Subspaces} \label{subsec:subspace_identification}
As previously discussed, we now provide the subspace identifiability.
Formally, we would like to show that the estimated latent subspace $\hat{\vecz}^{(m)} $ for any modality $m$ and its true counterpart $\vecz^{(m)} $ are equivalent up to an invertible map $h^{(m)}(\cdot)$, i.e., $ \hat{\vecz}^{(m)} = h^{(m)} ( \vecz^{(m)} )$.

Given the data-generating process Eq.~\eqref{eq:x_generation}, the task is to remove modality-specific information $ \bm\eta^{(m)} $ from the observational data $ \vecx^{(m)}$ while retaining the latent variables $ \vecz^{(m)} $ causally related to other modalities.
In light of this, we express the relations between the latent variables $ \vecz^{(m)} $ and the observation of its own modality $ \vecx^{(m)} $ and other modalities $ \vecx^{(-m)} $ as Eq.~\eqref{eq:basis_generating}.
\begin{align} \label{eq:basis_generating}
    \vecv^{(m)} = g_{\vecx^{(m)}} (\vecz^{(m)}, \bm\eta^{(m)}), \quad \vecv^{(-m)} = \tilde{g}_{\vecx^{(-m)}} (\vecz^{(m)}, \tilde{\bm\eta}^{(-m)}),
\end{align}
where $\tilde{\bm\eta}^{(-m)} $ denotes all the information necessary to generate the complement group $ \vecx^{(-m)} $ beyond $\vecz^{(m)} $.
\footnote{
    We use $\tilde{\cdot}$ to differentiate $ \tilde{\bm\eta}^{(-m)} $ from a collection of modality-specific variables $ \bm\eta$ defined in Eq.~\eqref{eq:x_generation}.
}
Consequently, $ \tilde{\bm\eta}^{(-m)} $ may admit causal/statistical relations with $\vecz^{(m)} $.
We denote the joint map of $ g_{\vecx^{(m)}} $ and $ \tilde{g}_{\vecx^{(-m)}} $ as $ \tilde{g}^{(m)}: ( \vecz^{(m)}, \bm\eta^{(m)}, \tilde{\bm\eta}^{(-m)} ) \mapsto \vecx $.

\begin{restatable}[Subspace Identifiability Conditions]{condition}{subspaceconditions}\label{cond:subspace_identification}{\ }
\begin{enumerate}[label=A\arabic*,leftmargin=*]
    \item \label{asmp:smoothness_invertibility} [Smoothness \& Invertibility]: The generating functions $g_{\vecx^{(m)}}$ and $\tilde{g}^{(m)}$ are smooth and have smooth inverse functions.

    \item \label{asmp:linear_independence} [Linear Independence]: The generating function $\tilde{g}_{\vecx^{(-m)}}$ is smooth and its Jacobian columns corresponding to $ \vecz^{(m)} $ (i.e., $[\mJ_{\tilde{g}_{\vecx^{(-m)}}}]_{:, I(\vecz^{(m)})}$) are linearly independent almost anywhere.
\end{enumerate}

\end{restatable}

\paragraph{Discussion on the conditions.}
Condition~\ref{cond:subspace_identification}-\ref{asmp:smoothness_invertibility} requires that the information of the latent variables $ \vecz^{(m)} $ is preserved in its observation $ \vecx^{(m)} $, so that the identification of latent variables is well-defined~\citep{hyvarinen2019nonlinear,khemakhem2020variational,von2021self,kong2023partial,yao2023multi,daunhaweridentifiability}.
Since this holds for any modality $m$, the observations $ \vecx^{(-m)} $ should collectively preserve the information of other modalities $\vecz^{(-m)} $.

Condition~\ref{cond:subspace_identification}-\ref{asmp:linear_independence} formalizes the notation of a minimal connectivity over modalities: $ \vecz^{(m)} $ should also exert sufficient influence on other modalities $ \vecz^{(-m)} $, so that the other modality observations $ \vecx^{(-m)} $ could be informative to identify $ \vecz^{(m)} $.
This condition excludes degenerate scenarios where the causal influences between modalities are nearly negligible and is equivalent to local invertibility of $\vecz^{(m)}$, which is strictly weaker than the global invertibility assumption in previous work~\citep{daunhaweridentifiability,von2021self,yao2023multi} (e.g., $y=x^{2}$ is locally invertible but not globally so), as discussed earlier.

\begin{restatable}[Subspace Identifiability]{theorem}{subspaceidentification}\label{thm:subspace_identification}
    Let $ \bm\theta := \{ g_{\vecx^{(m)}}, \tilde{g}_{\vecz^{(-m)}}, p( \bm\epsilon^{(m)}), p( \tilde{\bm\epsilon}^{(-m)} ) \}_{m=1}^{M} $ and $ \hat{\bm\theta} := \{ \hat{g}_{\vecx^{(m)}}, \hat{\tilde{g}}_{\vecz^{(-m)}}, p( \hat{\bm\epsilon}^{(m)}), p( \hat{\tilde{\bm\epsilon}}^{(-m)} ) \}_{m=1}^{M} $ be two specifications of the data-generating process in Eq.~\eqref{eq:basis_generating}.
    Suppose that they generate identical observational distributions (i.e., $ p( \vecx ) = \hat{p}( \vecx )) $, $ \bm\theta $ satisfies Condition~\ref{cond:subspace_identification}, and $ \hat{\bm\theta} $ satisfies Condition~\ref{cond:subspace_identification}-\ref{asmp:smoothness_invertibility}.
    The latent subspace $\hat{\vecz}^{(m)} $ for any group $m$ and its counterpart $\vecz^{(m)} $ are equivalent up to an invertible map $h^{(m)}(\cdot)$, i.e., $ \hat{\vecz}^{(m)} = h^{(m)} ( \vecz^{(m)} )$.
\end{restatable}

\paragraph{Interpretation and proof sketch.}
Theorem~\ref{thm:subspace_identification} states that one can disentangle the modality-specific information $ \bm\eta^{(m)} $ and the latent variables $ \vecz^{(m)} $ contained in the observation $ \vecx^{(m)} $ (which is a mixture of both).
To achieve this, we leverage the fact that $ \bm\eta^{(m)} $ has no influence on other modalities $ \vecx^{(-m)} $, while $ \vecz^{(m)} $ has a non-trivial influence on $ \vecx^{(-m)}$, as characterized in Condition~\ref{cond:subspace_identification}-\ref{asmp:linear_independence}.
This crucial distinction provides sufficient footprints to disentangle these two subspaces for each modality, yielding the intended result.

\subsection{Identifying Latent Components} \label{subsec:component_identification}
Proceeding from the subspace identifiability (Theorem~\ref{thm:subspace_identification}), we now further disentangle each subspace into individual components $ z^{(m)}_{i} $ as outlined in Section~\ref{sec:formulation}. 
As foreshadowed, our key condition entails the sparsity of the graphical structures between modalities.
Such dependency structures are captured in the generating function $ g_{\vecz} $ defined component-wise in Eq.~\eqref{eq:z_generation}, in particular its partial derivatives.
We now introduce Condition~\ref{cond:component_identification}, which facilitates component-wise identification.

\paragraph{Additional notations.}
\begin{wrapfigure}{r}{4.6cm}
\centering
\vspace{-0.4cm}
\includegraphics[width=4.6cm]{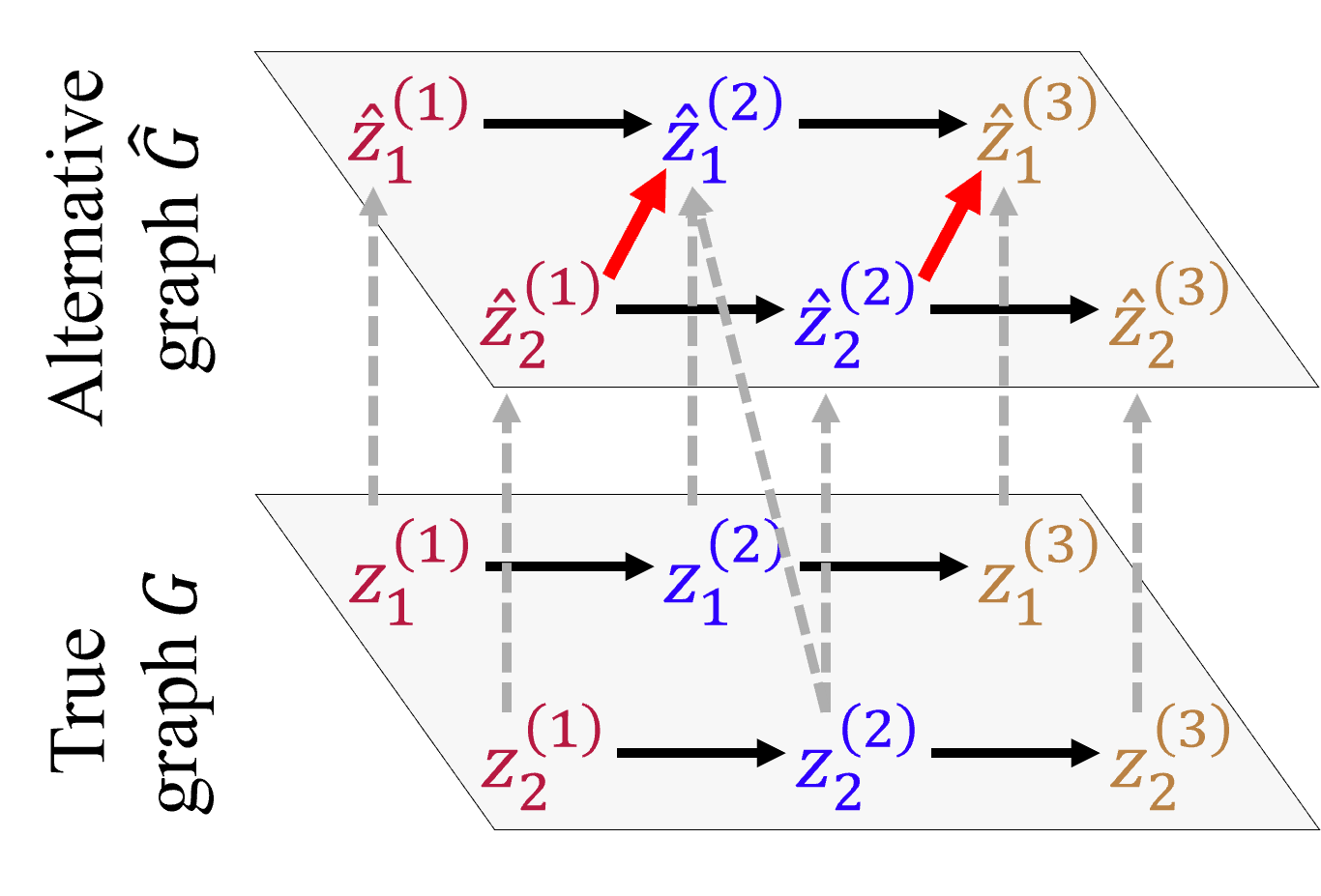}
\vspace{-0.8cm}
\caption{
\footnotesize
\textbf{Denser graph $\hat{\mG}$.}
}
\vspace{-0.8cm}
\label{fig:sparsity_example}
\end{wrapfigure}
We denote the indices of the non-zero matrix entries by $ \text{Supp} (\cdot) $.
We denote the collection of partial derivatives among all latent components $ \frac{ \partial z_{i}^{(m)} }{ \partial z_{j}^{(n)} } $ as a matrix function $ \mG (\vecz, \bm\epsilon) \in \R^{ d(\vecz) \times d(\vecz) } $.
We adopt $ \text{diag} (\cdot) $ to denote matrices consisting of equally-sized square matrices on its diagonal and define $\mT $ to possess the structure $ \mT = \text{diag} ( \mT_{1}, \dots, \mT_{M} ) $ with invertible $ \mT_{m} \in \R^{d(\vecz^{(m)}) \times d(\vecz^{(m)})}$.
We denote the class of generalized permutation matrices of dimensionality $ d(\vecz) $ as $ \gP ( d(\vecz) ) $.

\vspace{0.4cm}

\begin{restatable}[Component Identifiability Conditions]{condition}{componentconditions} \label{cond:component_identification} {\ }
    Over the domain of $ (\vecz, \bm\epsilon) $, for any modality $m$ and any $ \mT \not\in \gP( d(\vecz) )$, we have
    \begin{align} \label{eq:global_density}
        \sum_{ m \neq n \in [M] } 
            \norm{
                \mT_{m}^{-1} \left[ \mG \right]_{ (m), (n) } \mT_{n} 
            }_{0} > \sum_{ m \neq n \in [M] } 
            \norm{
                 \left[ \mG \right]_{ (m), (n) } 
            }_{0}.        
    \end{align}
\end{restatable}

\paragraph{Discussion on the conditions.}
Condition~\ref{cond:component_identification} stipulates sparse cross-modality causal connections among latent components $ \vecz $. 
Under this condition, if a latent component $ \hat{z}^{(m)}_{i} $ is a function of two components $ \hat{z}^{(m)}_{j} $ and $\hat{z}^{(m)}_{k} $ (when component-wise identification breaks down), the cross-modality causal connections in $ \mG $ are guaranteed to be denser than those in $ \hat{\mG} $.
We give a simple example to aid intuition in Figure~\ref{fig:sparsity_example}: for a causal graph with three modalities, $ z^{(1)}_{1} \rightarrow z^{(2)}_{1} \rightarrow z^{(3)}_{1} $ and $ z^{(1)}_{2} \rightarrow z^{(2)}_{2} \rightarrow z^{(3)}_{2}$, suppose that $ \hat{z}^{(2)}_{1} $ is a non-trivial mixture of $ z^{(2)}_{1} $ and $z^{(2)}_{2}$ and other components are correctly identified, i.e.,  $ [\hat{z}^{(1)}_{1}, \hat{z}^{(1)}_{2}, \hat{z}^{(2)}_{1}, \hat{z}^{(2)}_{2}, \hat{z}^{(3)}_{1}, \hat{z}^{(3)}_{2}] = [z^{(1)}_{1}, z^{(1)}_{2}, h(z^{(2)}_{1}, z^{(2)}_{2}), z^{(2)}_{2}, z^{(3)}_{1}, z^{(3)}_{2}] $.
As a consequence, the alternative causal graph $\hat{\mG}$ would include additional edges $ \hat{z}^{(1)}_{2} \rightarrow \hat{z}^{(2)}_{1} $ and $ \hat{z}^{(2)}_{2} \rightarrow \hat{z}^{(3)}_{1} $, giving rise to a strictly denser graph.
In Theorem~\ref{thm:component_identification}, we show that this sparse structure could give us the desired component-wise identifiability under a proper sparse regularization constraint. 
The availability of multiple modalities greatly improves the feasibility of such sparsity conditions, especially with a large number of modalities, because the entanglement is limited within a single modality (owing to Theorem~\ref{subsec:subspace_identification}) and all other modalities can be leveraged to provide space for sparse connections.

Sparsity conditions have been embraced by the causal representation learning community~\citep{lachapelle2024nonparametric,moran2022identifiable,fumero2023leveraging,xu2024sparsity}.
Especially relevant to our work is \citet{zheng2022identifiability}.
As discussed above, we are obliged to deal with causal structures among all latent variables. 
In contrast, \citet{zheng2022identifiability} assumes the sparsity of the causal connections between the latent variables and the observed variables -- the directions (from the latent to the observed variables) are given and the children are directly observed.
Notably, sparse properties manifest in biomedical systems of our interest, including gene regulatory networks~\citep{milo2002network,babu2004structure,nacher2013structural,liu2011controllability}, metabolic systems~\citep{west2002modelling,banavar1999size}, and other living systems~\citep{busiello2017explorability}, evidencing the plausibility of Condition~\ref{cond:component_identification} for biomedical applications.

\begin{restatable}[Component-wise Identifiability]{theorem}{componentidentification}\label{thm:component_identification}
    Let $ \bm\theta := ( \{ g_{\vecx^{(m)}}, g_{\vecz^{(m)}}, p( \bm\epsilon^{(m)} ) \}_{m=1}^{M} ) $ and $ \hat{\bm\theta} := ( \{ \hat{g}_{\vecx^{(m)}}, \hat{g}_{\vecz^{(m)}}, \hat{p}( \bm\epsilon^{(m)} ) \}_{m=1}^{M} ) $ be two specifications of the data-generating process in Eq.~\eqref{eq:z_generation} and Eq.~\eqref{eq:x_generation}.
    Suppose that they generate identical observational distributions (i.e., $ p( \vecx ) = \hat{p}( \vecx )) $ and $ \bm\theta $ satisfies Condition~\ref{cond:subspace_identification} and Condition~\ref{cond:component_identification}.
    If $ \hat{\bm\theta} $ satisfies the following sparse regularization condition: 
    \begin{align} \label{eq:sparsity_constraint}
        \sum_{m \neq n \in[M]} \norm{ [ \hat{\mG} ]_{ (m), (n) } }_{0} \leq \sum_{m \neq n \in[M]} \norm{ [ \mG ]_{ (m), (n) } }_{0},
    \end{align}
    each component $z^{(m)}_{i} $ and its counterpart $\hat{z}^{(m)}_{ \pi(i) } $ are equivalent up to an invertible map $h(\cdot)$, i.e., $ \hat{z}^{(m)}_{ \pi(i) }  = h ( z^{(m)}_{i} )$ under a permutation $ \pi $ over $ [d (\vecz^{(m)})] $.
\end{restatable}

\paragraph{Interpretation and proof sketch.}
The key idea of Theorem~\ref{thm:component_identification} is that for sparse causal graphs (as characterized in Condition~\ref{cond:component_identification}), the mixing of latent components in any modality would introduce unnecessary causal edges connecting the other modalities.
As the sparsity regularization Eq.~\eqref{eq:sparsity_constraint} selects alternative models $ \hat{\bm{ \theta }} $ that are not denser than the model $ \bm{\theta} $, the mixing within each modality would be excluded.
Consequently, each latent component $ \hat{z}^{(m)} _{i} $ is a function of a unique component $ z^{(m)}_{j}$, yielding the desired component-wise identifiability.

\vspace{-2mm}
\paragraph{Implications.}
In the context of biomedical applications, Theorem~\ref{thm:component_identification} indicates that under appropriate constraints, each component $ \hat{z}^{(m)}_{i} $ in our estimation uniquely captures the information of an intrinsic biomedical factor behind the medical measurements (e.g., genetic predisposition). 
Therefore, the learned representation enjoys strong interpretability under theoretical guarantees, which is often lacking in existing biomedical models, as noted in Section~\ref{sec:related_work}.
Theorem~\ref{thm:subspace_identification} and Theorem~\ref{thm:component_identification} provide insights for practical model design, which we employ in our architecture in Section~\ref{sec:model}.

\paragraph{Shared latent variables.}
Certain applications may involve latent variables that are shared across modalities.
In such scenarios, we can employ contrastive learning objectives and the associated theoretical guarantees in previous work~\citep{yao2023multi,daunhaweridentifiability,von2021self} as a pre-processing procedure and treat such shared latent variables as separate modalities in our implementation.
Please refer to Appendix~\ref{ap:extended_subspace_theorem}, ~\ref{ap:extended_component_theorem}, and~\ref{ap:shared} for detailed discussion and results.
\section{Estimation Model Architectures} \label{sec:model}
Given the identifiability results, we further propose an estimation framework (shown in Figure~\ref{fig:framework}) that enforces the proposed assumptions as constraints to identify the latent variables in each modality.

\vspace{-5pt}
\begin{figure}[ht]
\centering
\begin{minipage}{0.4\textwidth}
\centering
\includegraphics[width=\textwidth]{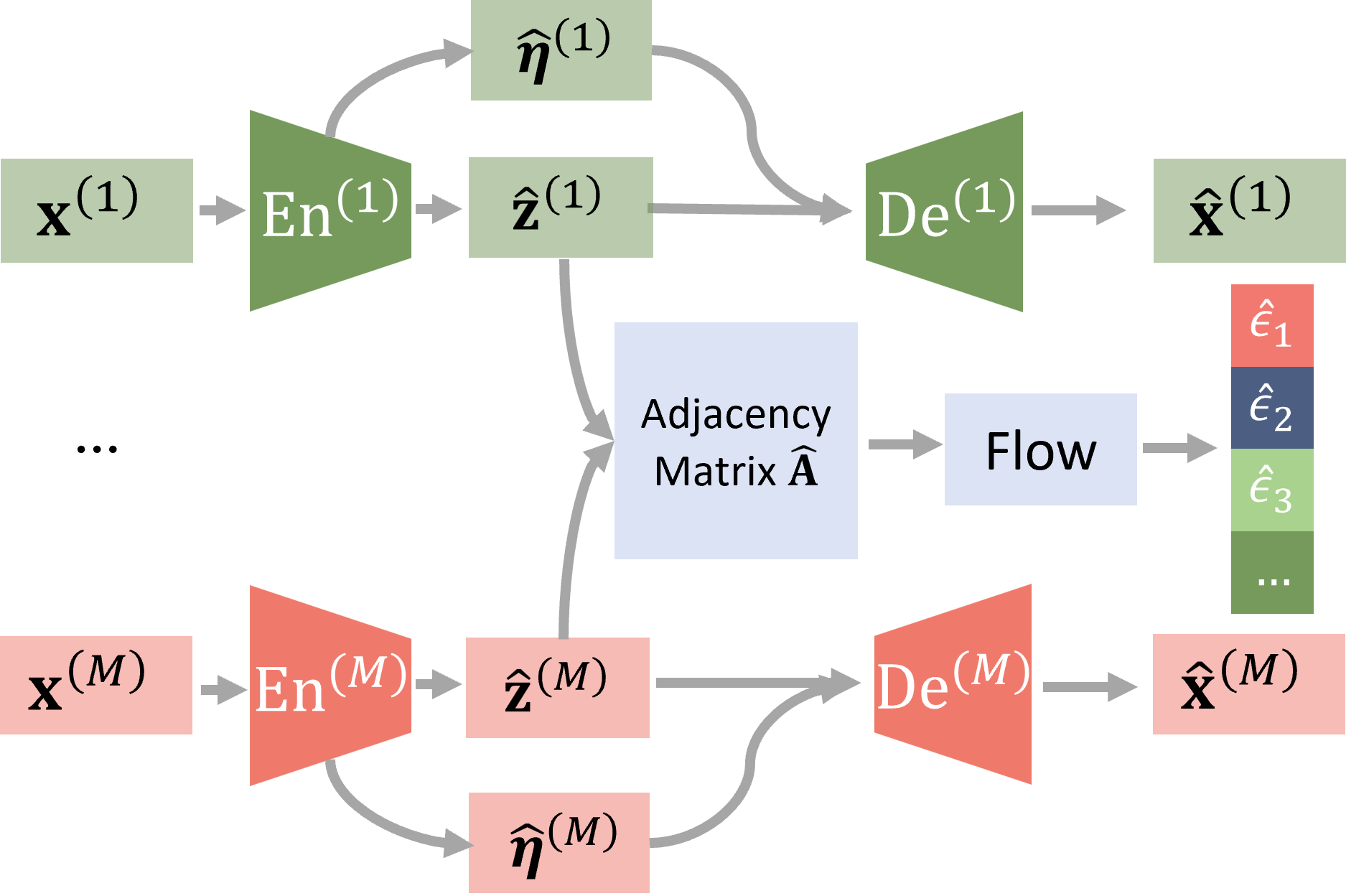}
\end{minipage}
\quad
\begin{minipage}{0.55\textwidth}
\centering
\vspace{-5pt}
\caption{\textbf{Estimation framework}. 
Given multimodal observations ${(\vecx^{(1)},\hdots,\vecx^{(M)})}$, the latent variables and domain-specific information in modality $m$ are inferred as $\hat{\vecz}^{(m)}$ and $\hat{\eta}^{(m)}$ by individual encoders. 
The observations are then reconstructed with corresponding decoders as $\hat{\vecx}^{(m)}$.
We enforce independence conditions by minimizing the KL divergence term $D_{\text{KL}} \left( [\{\hat{\eta}^{(m)}\}^M_{m=1},\{\hat{\epsilon}_i\}_{i=1}^{d(\vecz)}] || \mathcal{N}(\mathbf{0}, \mI) \right) $.
We enforce the sparsity constraint by minimizing the $\mathcal{L}_1$ norm in the inferred adjacency matrix $\hat{\mathbf{A}}$. 
}
\label{fig:framework}
\end{minipage}
\end{figure}
\vspace{-5pt}

\paragraph{Encoder and decoder.}
Each modality $\vecx^{(m)}$ is given as an input to the corresponding encoder and outputs the estimated latent $\hat{\vecz}^{(m)}$ and domain-specific information $\hat{\eta}^{(m)}$.
They are then concatenated and passed to the corresponding decoder to reconstruct the observations as $\hat{\vecx}^{(m)}$. 
The reconstruction loss is calculated using the mean squared error (MSE) as $\mathcal{L}_{\text{Recon}} = \sum^M_{m=1}||\vecx^{(m)}-\hat{\vecx}^{(m)}||^2_2$.

\paragraph{Conditional independence constraints.}
Given Eq~\eqref{eq:basis_generating}, we enforce the conditional independence condition $ \vecx^{(m)}\ind\vecx^{(n)}\given\vecz^{(m)} $ and the independence condition on $ \eta^{(m)} \ind \vecz^{{(m)}} $ by enforcing independence among components in $\gamma=[\{\hat{\eta}^{(m)}\}^M_{m=1},\{\hat{\epsilon}_i\}_{i=1}^{d(\vecz)} ]$. 
Such equivalence is shown in Propositions~\ref{pp:ConInd} and~\ref{pp:NoiseInd}, and proofs are provided in Appendix~\ref{app:CIRealization}.
Specifically, we minimize the KL divergence loss between the posterior and a Gaussian prior distribution: $\mathcal{L}_{\text{Ind}} = D_{\text{KL}}(p(\gamma) || \mathcal{N} (\mathbf{0}, \mI) )$.

\begin{restatable}{proposition}{conind}[Conditional Independence Condition]\label{pp:ConInd}
Let $\vecx^{(m)}$ and $\vecx^{(n)}$ be two different multimodal observations. 
$\vecz^{(m)}\subset\vecz$ are the set of block-identifying latent variables, and 
$\eta^{(m)}\subset\eta$ are domain-specific information in modality $m$. 
We have $\vecx^{(m)}\ind\vecx^{(n)}\given\vecz^{(m)} \Longleftrightarrow \eta^{(m)}\ind\eta^{(n)}$.
\end{restatable}

\begin{restatable}{proposition}{NoiseInd}[Independent Noise Condition]\label{pp:NoiseInd}
Let $\vecz$ and $\eta$ be the block-identified latent variables and domain-specific information, respectively, across all modalities.
Denote $\epsilon$ as the exogenous variables in the latent causal structure.
We have $\eta\ind\vecz \Longleftrightarrow \eta\ind\epsilon$.
\end{restatable}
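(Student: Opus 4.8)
The plan is to exploit the deterministic, invertible relationship between the latent variables $\vecz$ and their generating noise $\bm\epsilon$ that is built into the data-generating process of Eq.~\eqref{eq:z_generation}. First I would argue that the structural assignments collectively define a reduced-form map $\vecz = G(\bm\epsilon)$ that is a bijection (indeed a diffeomorphism). Concretely, because $g_{\vecz}$ encodes a DAG whose Jacobian $\mJ_{g_{\vecz}}$ can be permuted to a strictly triangular matrix, the latent components admit a topological ordering in which each $z^{(m)}_{i}$ is a function of strictly earlier components and its own exogenous term $\epsilon^{(m)}_{i}$ only. Solving these assignments in topological order expresses $\vecz$ as a triangular transformation of $\bm\epsilon$, and the invertibility of each assignment in its noise argument makes $G$ globally invertible with smooth inverse $G^{-1}: \vecz \mapsto \bm\epsilon$. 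Thus $\vecz$ and $\bm\epsilon$ are deterministic invertible functions of one another.

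Given this, the equivalence follows from a standard measure-theoretic fact: a measurable bijection between two random vectors induces identical generated $\sigma$-algebras. Since $\vecz = G(\bm\epsilon)$ and $\bm\epsilon = G^{-1}(\vecz)$ with $G, G^{-1}$ measurable, we have $\sigma(\vecz) = \sigma(\bm\epsilon)$. Independence of $\eta$ and $\vecz$ is, by definition, independence of the $\sigma$-algebras $\sigma(\eta)$ and $\sigma(\vecz)$; substituting $\sigma(\vecz) = \sigma(\bm\epsilon)$ immediately yields independence of $\sigma(\eta)$ and $\sigma(\bm\epsilon)$, i.e., $\eta \ind \bm\epsilon$. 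The converse direction is identical because the argument is symmetric in $\vecz$ and $\bm\epsilon$. In density terms, one could equivalently invoke the change-of-variables formula: writing $p(\eta, \vecz) = p(\eta, \bm\epsilon) \abs{\det \mJ_{G}}^{-1}$ and noting that the Jacobian factor depends on $\bm\epsilon$ alone, the product factorization $p(\eta, \vecz) = p(\eta)\, p(\vecz)$ holds if and only if $p(\eta, \bm\epsilon) = p(\eta)\, p(\bm\epsilon)$.

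The main obstacle is verifying that $G$ is genuinely a bijection rather than merely differentiable, since the independence transfer requires a true measurable inverse. The triangular structure of $\mJ_{g_{\vecz}}$ guarantees that the map is recursively solvable, but global invertibility additionally requires each structural function $g_{z^{(m)}_{i}}$ to be invertible with respect to its own noise coordinate $\epsilon^{(m)}_{i}$ on the relevant domain; I would make this regularity explicit, noting that it is implicit in the smoothness assumptions accompanying Eq.~\eqref{eq:z_generation} together with the mutual independence of the $\epsilon^{(m)}_{i}$. A secondary, minor subtlety is that $\eta$ must be read as the stacked modality-specific variables $\{\bm\eta^{(m)}\}_{m}$ that are exogenous to the latent causal mechanism and hence do not enter $G$; once this is fixed, no further assumption on the joint law of $\eta$ is needed, as the claimed equivalence is a purely structural consequence of the $\vecz \leftrightarrow \bm\epsilon$ bijection.
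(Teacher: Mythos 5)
Your proof is correct, but its primary route is genuinely different from the paper's. The paper argues entirely at the level of densities: treating $(\bm\epsilon,\eta)\mapsto(\vecz,\eta)$ as an invertible map that is the identity on $\eta$, it applies the change-of-variables formula twice, $p(\vecz,\eta)=p(\bm\epsilon,\eta)\,\abs{\det(\partial\bm\epsilon/\partial\vecz)}$ and $p(\vecz)=p(\bm\epsilon)\,\abs{\det(\partial\bm\epsilon/\partial\vecz)}$, and then cancels the common Jacobian factor so that the product factorization in one coordinate system forces it in the other; this is precisely the ``density terms'' alternative you sketch at the end of your second paragraph. Your main argument instead avoids densities altogether: once $\vecz=G(\bm\epsilon)$ is a measurable bijection with measurable inverse, $\sigma(\vecz)=\sigma(\bm\epsilon)$, and independence of $\sigma$-algebras transfers immediately and symmetrically between the two statements. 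This is both more elementary (no Jacobians, no cancellation bookkeeping) and more general (it needs only measurability of $G$ and $G^{-1}$, not smoothness or even the existence of joint densities), whereas the paper's computation has the advantage of working directly with the objects its estimation framework manipulates (densities and log-determinants from the normalizing flow). Both arguments rest on the same structural fact --- that $\bm\epsilon\mapsto\vecz$ is invertible via the DAG/triangular structure, with each $g_{z^{(m)}_{i}}$ invertible in its own noise coordinate given the parents --- and you are right to flag that this is implicit rather than proved in the paper: its opening line $p(\vecz)=p(\bm\epsilon)\,\abs{\det(\partial\bm\epsilon/\partial\vecz)}$ already presupposes it, as does the normalizing-flow parameterization $\text{Pa}(\hat{z}_i)\mapsto\hat{\epsilon}_i$ in Section~\ref{sec:model}. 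Making that regularity explicit, and noting that $\eta$ does not enter $G$, are small but genuine improvements in rigor over the paper's presentation.
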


\paragraph{Sparsity regularization.}
We use flow to estimate the exogenous variables $ \bm\epsilon $ in Eq.~\eqref{eq:z_generation} and implement the causal relations through a learnable adjacency matrix $\hat{\mathbf{A}}$.
The binary values in $\hat{\mathbf{A}}$ represent the causal generation process between latent variables, e.g. $\hat{A}_{i,j}=1$ indicates $\hat{z}_j$ is the parent of $\hat{z}_i$, while $\hat{A}_{i,j}=0$ means $\hat{z}_j$ dose not contribute to the generation of $\hat{z}_i$.
For each component $\hat{z}_i$, we select its parents $\text{Pa}(\hat{z}_i)$ based on the adjacency matrix, and apply the flow transformation to get $\hat{\epsilon}_i$.

To encourage sparsity among the latent variables $\hat{\vecz}$, we impose a regularization term on the learned adjacency matrix. 
Based on the sparsity assumption, the optimal causal graph should be the minimal one that still allows the model to accurately match the ground truth generative distribution.
To achieve this, we reduce the dependencies between different components of $\hat{\vecz}$ by adding a $\mathcal{L}_1$ penalty on the adjacency matrix, s.t., $\mathcal{L}_{\text{Sp}}=||\hat{\mathbf{A}}||_1$.

\paragraph{Optimization.}
The model parameters are optimized using the combination objective:
\begin{align} \label{eq:objective}
\mathcal{L} = \alpha_{\text{Recon}} \mathcal{L}_{\text{Recon}} + \alpha_{\text{Ind}} \mathcal{L}_{\text{Ind}} + \alpha_{\text{Sp}} \mathcal{L}_{\text{Sp}}.
\end{align}

\section{Experiment Results} \label{sec:experiments}

To evaluate the efficacy of our proposed method, we conduct extensive experiments on (1) numerical, (2) synthetic and (3) real-world datasets. 
In terms of the \underline{baselines}, we compare our method with: 
(1) BetaVAE~\citep{higgins2017beta}, which does not consider causal relations in the latent space.
(2) CausalVAE~\citep{yang2020causalvae}, which considers the causally related latent variables with a single modality.
(3) Multimodal contrastive learning (MCL)~\citep{daunhaweridentifiability}, which recovers the shared latent factors from multiple modalities. 
Throughout the experiments, we consider the following \underline{evaluation metrics}: 
(1) {Mean Correlation Coefficient} (MCC) measures how well the estimated latent variables match the true ones, with an MCC of 1 indicating perfect identifiability up to component-wise transformations.
(2) {R2} measures the proportion of variance in the ground truth latent that is explained by the estimated latent, with a value of 1 indicating that all variance is explained.
(3) {Structural Hamming Distance} (SHD) compares graphs by their adjacency matrices, where a lower SHD indicates stronger similarity between graphs.

\vspace{-3mm}
\begin{figure}[ht]
\centering
\subfigure[Causal comparison between estimated and true graphs (SHD=0).]{
\includegraphics[height=0.18\textwidth]{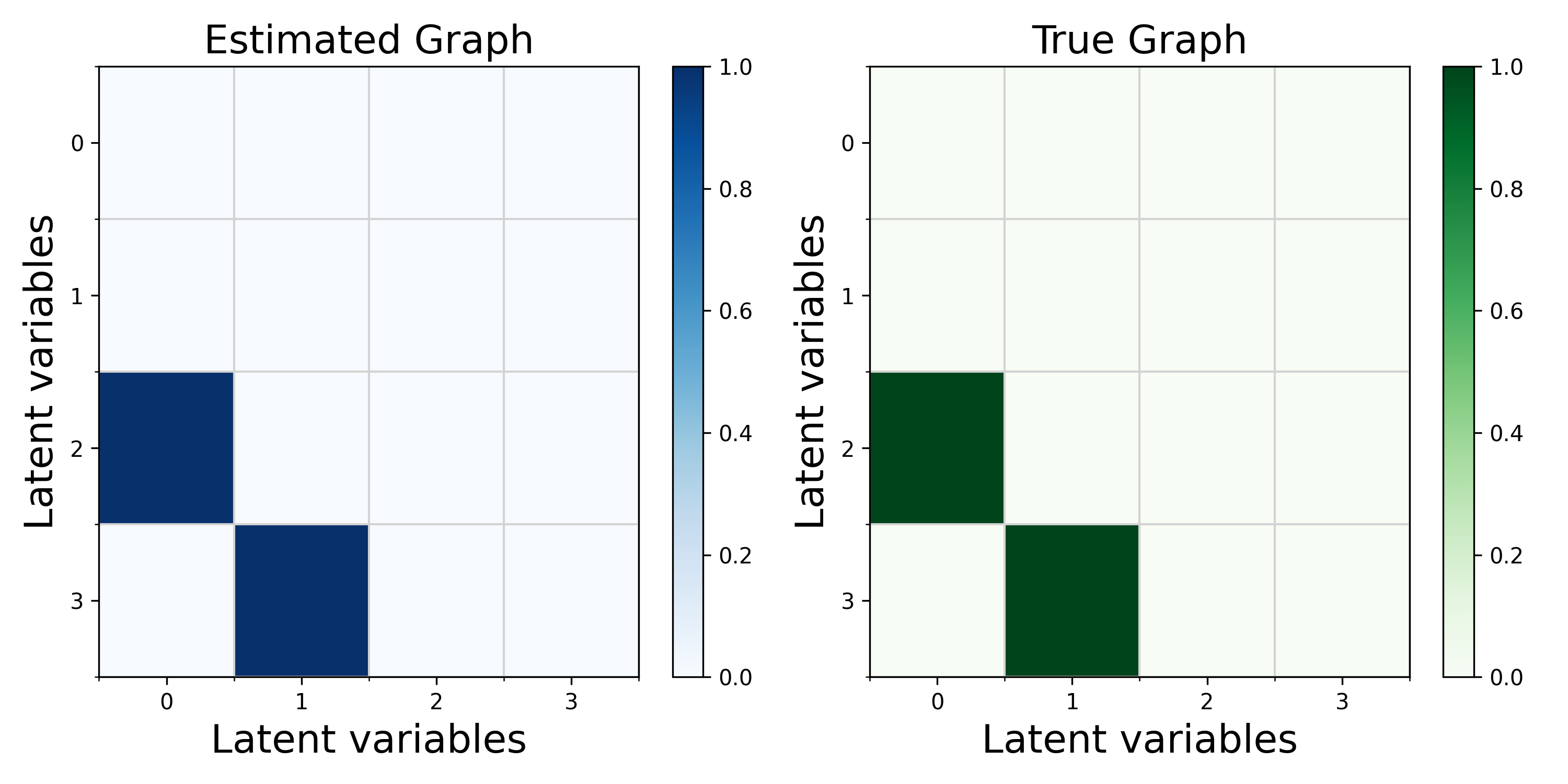}
\label{fig:CausalResult}
}
\hfill
\subfigure[Comparison of the identifiability result in different cases.]{
\includegraphics[height=0.18\textwidth]{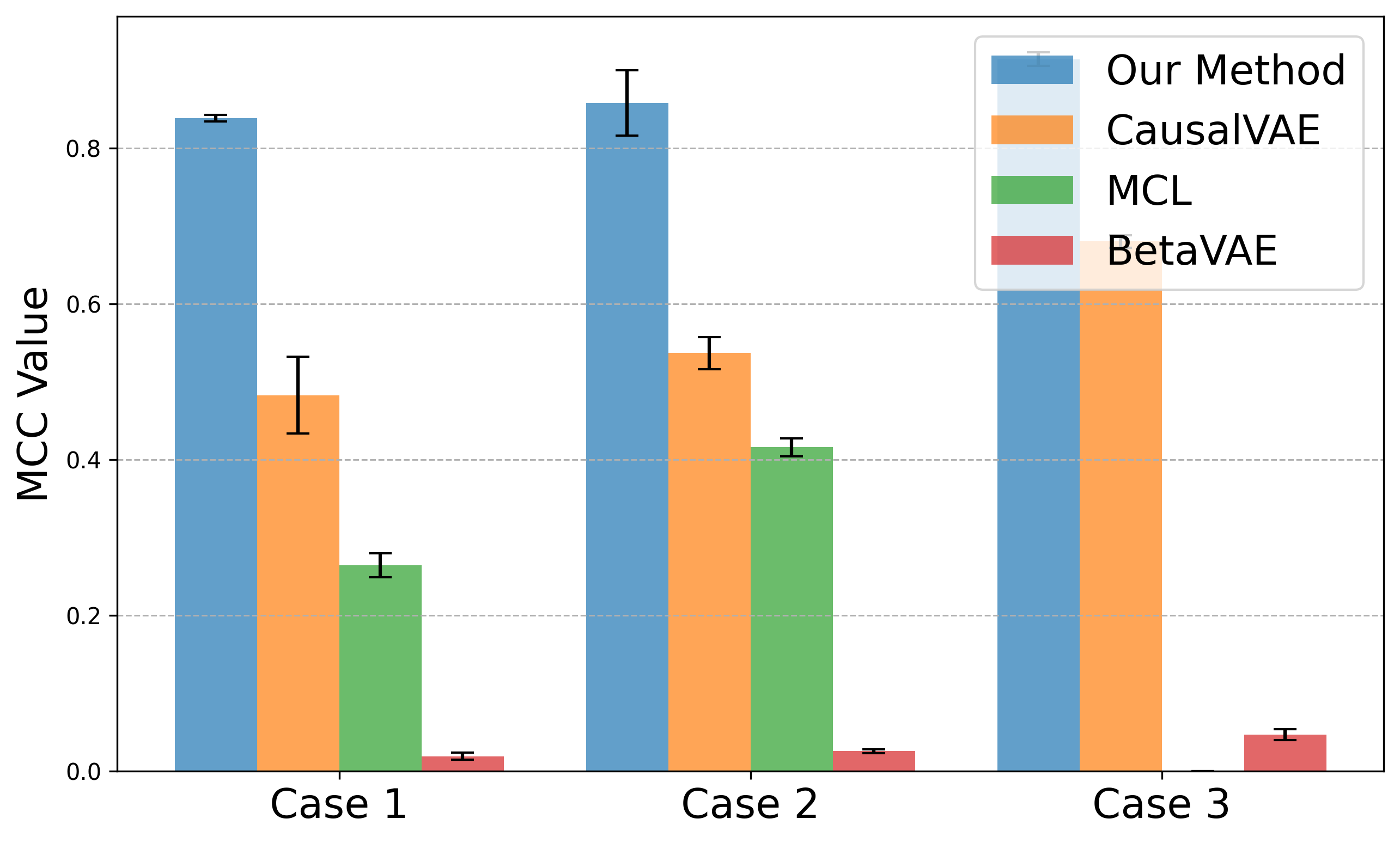}
\label{fig:MultiplegroupResult}
}
\hfill
\subfigure[Identifiability result under different sparsity ratios.]{
\includegraphics[height=0.18\textwidth]{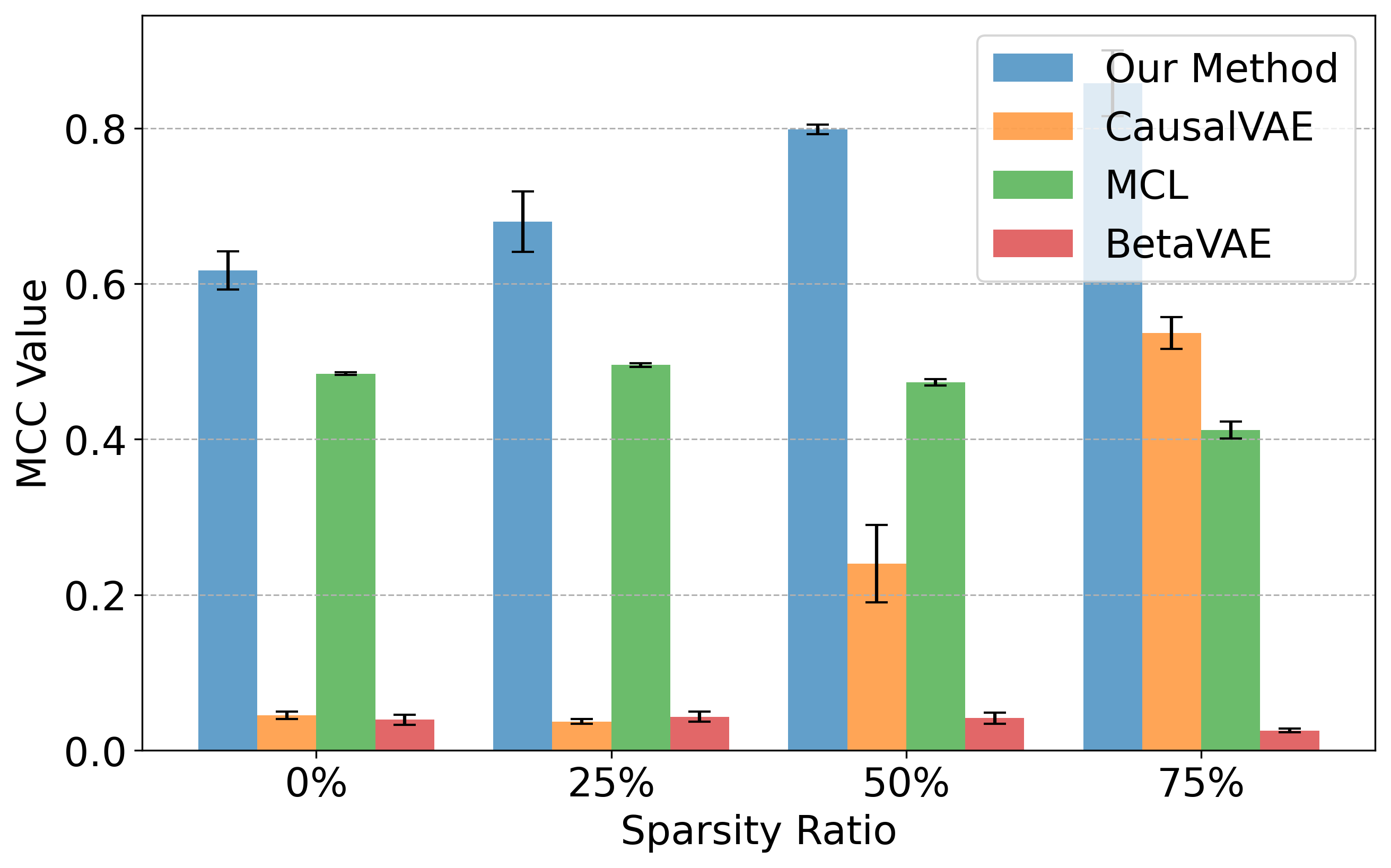}
\label{fig:SparseResult}
}
\caption{Numerical experiment results. 
(a) Successful recovery of the inter-modal causal graph.
(b) Baseline comparisons in different cases. 
(c) Result of sparsity ablation study.
\label{fig:multigroup}}
\end{figure}
\vspace{-3mm}

\subsection{Numerical Dataset}
\paragraph{Setup.} 
In the numerical simulations, we consider three cases with different numbers of modalities and inter-modal causal relations. 
\underline{Case 1}: 15-dimensional observations across two modalities, each with two latent and one exogenous variable. 
\underline{Case 2}: 20-dimensional observations across two modalities, each with three latent and one exogenous variable. 
\underline{Case 3}: 30-dimensional observations across four modalities, each with two latent and one exogenous variable. 
The nonparametric mixing function is simulated by a random MLP with LeakyReLU units, and the inter-modal latent variables are sparse causally related.
The detailed data-generation process is provided in Appendix~\ref{app:numerical}.

\paragraph{Results and ablation.}
Figure~\ref{fig:multigroup} shows the identifiability results in different cases, where the high MCC indicates the successful recovery of the latent variables.
The inter-modal causal relations are successfully recovered (SHD=0) and the causal comparison result in case 1 is shown in Figure~\ref{fig:CausalResult}.
The identifiability comparison results are shown in Figure~\ref{fig:MultiplegroupResult} (MCL is not applicable in case 3 due to the two-modality constraint).
CausalVAE requires additional supervision signals to establish identifiability, and MCL assumes content invariance and can only block-identify latent variables.
In general, these baselines neither account for the multimodal setting nor the modality-specific latent variables, and therefore do not recover the latent variables. 

As an ablation study, we further show the consequences of violating the sparsity assumptions to validate our theorem.
Based on case 2, we create four types of datasets with different sparsity ratios and report the MCC in each scenario in Figure~\ref{fig:SparseResult}. 
The sparsity ratio represents the ratio of existing causal links to all possible causal links between modality-specific latent variables. 
A value of 0 indicates that the latent variables between modalities are fully connected, while higher values correspond to sparser connections.
The result shows that identifiability can be better achieved with a higher sparsity ratio, and our framework outperforms other baselines in all scenarios.

\vspace{-2mm}
\subsection{Synthetic Dataset: Variant MNIST}
\paragraph{Setup.} 
We manually create a variant of the MNIST dataset to encode causal relationships between different modalities, using colored MNIST~\citep{arjovsky2019invariant} and fashion MNIST~\citep{xiao2017fashion} as two different modalities.
In colored MNIST, the horizontal position of the digit influences the image transparency. 
This horizontal position further serves as a causal factor for the vertical position of the fashion items in the fashion MNIST, which influences image grayscale. 
This design ensures a structured causal dependency across modalities while maintaining a non-deterministic mapping.
Further data descriptions are provided in the Appendix~\ref{app:synthetic}.

\vspace{-4mm} 
\paragraph{Results.}
\vspace{-1mm} 
\begin{wraptable}{r}{6.8cm}
\vspace{-4mm} 
\small
\centering
\caption{The results of MNIST dataset.\label{tab:mnist}}
\vspace{1mm}
\setlength{\tabcolsep}{1.8pt}
\begin{tabular}{c|cccc}
\toprule
& \textbf{MCL} & \textbf{BetaVAE} & \textbf{CausalVAE} & \textbf{Ours} \\
\midrule
R2  & 0.79 {\tiny $\pm$ 6e-5} & 0.68 {\tiny $\pm$ 2e-3} & 0.50 {\tiny $\pm$ 4e-3} & \textbf{0.90} {\tiny $\pm$ 9e-5} \\
MCC  &  0.63 {\tiny $\pm$ 2e-6} & 0.53 {\tiny $\pm$ 1e-3}  & 0.74 {\tiny $\pm$ 2e-3} &  \textbf{0.85} {\tiny $\pm$ 3e-5} \\
\bottomrule
\end{tabular}
\end{wraptable} 
Table~\ref{tab:mnist} presents the results of the identifiability comparison, where higher MCC and R2 indicate better performance of our method.
BetaVAE does not explicitly model causal relationships among latent variables, leading to suboptimal recovery in our setting.
CausalVAE, which relies on additional supervision, fails to recover the latent variables effectively.

\vspace{-2mm}
\subsection{Real-world Dataset: Human Phenotype}
The human phenotype dataset \citep{shilo202110} is a large-scale, longitudinal collection of phenotypic profiles from a diverse global population. 
It includes comprehensive human health data and provides a comprehensive view of health and disease factors.
The dataset contains various types of participant information, categorized into tabular, time series, and image data.
Specifically, it includes health information across 30 modalities, such as blood tests, anthropometry, fundus imaging, etc.
Detailed data descriptions can be found in the Appendix~\ref{app:realword}.
\vspace{-5pt}
\begin{figure}[ht]
\centering
\includegraphics[width=0.8\linewidth]{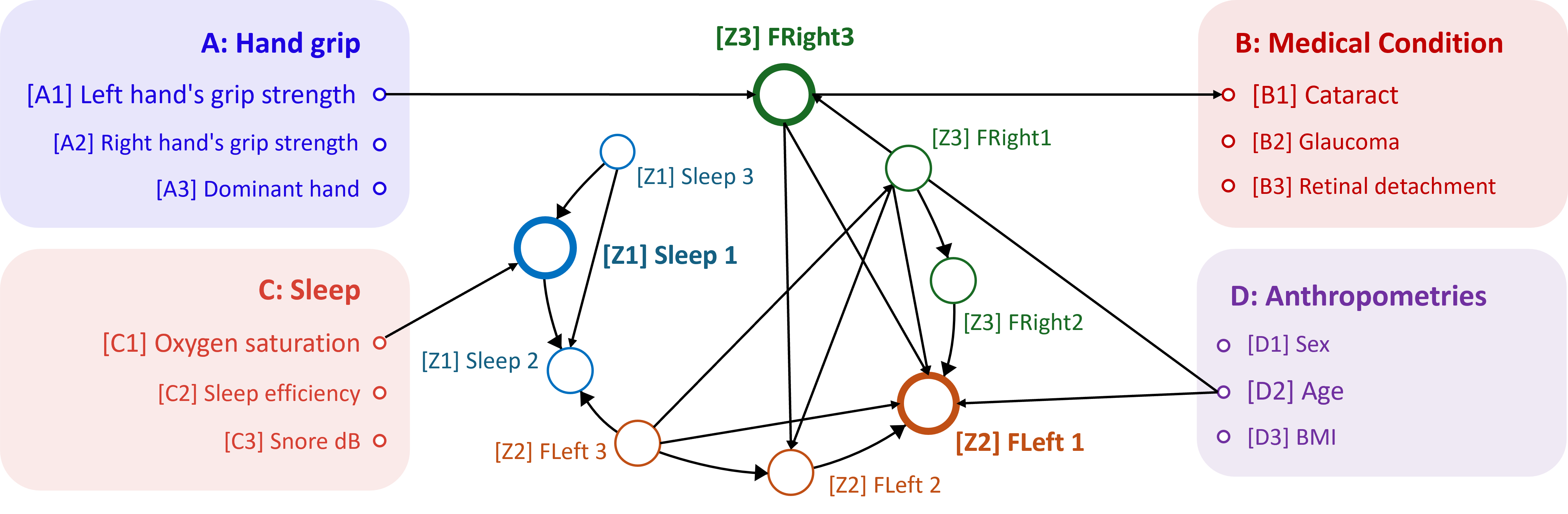}
\caption{
Causal analysis results across different modalities, including hand grip, medical conditions, sleep, and anthropometrics. 
We ran the causal algorithm on all variables, but for clarity only report the causal relationships that have direct connections to the estimated latent variables.
\label{fig:HumanResult}}
\end{figure}
\vspace{-10pt}

In this work, we focus on the time-series sleep monitoring dataset (\textit{Sleep}) and the fundus imaging dataset for both left and right eyes (\textit{FLeft} and \textit{FRight}) to estimate the latent factors (Z1, Z2, Z3) underlying each modality.
We applied the PC algorithm~\citep{spirtes2001causation} to discover causal relationships between the estimated latent variables and other four additional tabular modalities (A, B, C, D), providing an implicit evaluation on the effectiveness.
The result with direct causal relations is shown in Figure~\ref{fig:HumanResult}, where variables from the same modality have the same color and different modalities have different colors.

A key finding is that the discovered causal relationships are consistent with findings from medical research.
For example, \textit{Sleep 1} shows a direct causal relationship with \textit{Oxygen saturation}, suggesting that sleep conditions may influence blood oxygen levels. This observation is consistent with previous studies~\citep{wali2020correlation}.
In addition, the fundus-related latent variables \textit{FRight 1} and \textit{FLeft 1} have a direct causal relationship with \textit{Age}, suggesting that aging plays an important role in changes in retinal health~\citep{ege2002relationship,einbock2005changes}.
Interestingly, the fundus image of the right eye has a direct causal relationship with the grip strength of the left hand, as recently demonstrated in biomedical research~\citep{bikbov2023hand,qiu2020associations}.

\section{Conclusion and Limitations}
In this work, we develop a theoretically grounded framework for recovering latent causal variables from multi-modal observations. 
Extensive experimental results on synthetic and real-world datasets demonstrate the practical effectiveness of our approach.
\textbf{Limitations:}
Empirically, our framework assumes prior knowledge of the number of latent variables in each modality, which may be unrealistic in real-world scenarios. 
Additionally, a detailed evaluation against the quantitative benchmarks used in biomedical models remains an area for future exploration.
\section{Acknowledgment}
We would also like to acknowledge the support from NSF Award No. 2229881, AI Institute for Societal Decision Making (AI-SDM), the National Institutes of Health (NIH) under Contract R01HL159805, and grants from Quris AI, Florin Court Capital, and MBZUAI-WIS Joint Program.
The work of L. Kong is supported in part by NSF DMS-2134080 through an award to Y. Chi.
P. Stojanov was supported in part by the National Cancer Institute (NCI) grant number: K99CA277583-01, and funding from the Eric and Wendy Schmidt Center at the Broad Institute of MIT and Harvard.


\bibliography{reference}

\begin{thebibliography}{92}
\providecommand{\natexlab}[1]{#1}
\providecommand{\url}[1]{\texttt{#1}}
\expandafter\ifx\csname urlstyle\endcsname\relax
  \providecommand{\doi}[1]{doi: #1}\else
  \providecommand{\doi}{doi: \begingroup \urlstyle{rm}\Url}\fi

\bibitem[Ahuja et~al.(2023)Ahuja, Mahajan, Wang, and Bengio]{ahuja2023interventional}
Kartik Ahuja, Divyat Mahajan, Yixin Wang, and Yoshua Bengio.
\newblock Interventional causal representation learning.
\newblock In \emph{International conference on machine learning}, pp.\  372--407. PMLR, 2023.

\bibitem[Arjovsky et~al.(2019)Arjovsky, Bottou, Gulrajani, and Lopez-Paz]{arjovsky2019invariant}
Martin Arjovsky, L{\'e}on Bottou, Ishaan Gulrajani, and David Lopez-Paz.
\newblock Invariant risk minimization.
\newblock \emph{arXiv preprint arXiv:1907.02893}, 2019.

\bibitem[Babu et~al.(2004)Babu, Luscombe, Aravind, Gerstein, and Teichmann]{babu2004structure}
M~Madan Babu, Nicholas~M Luscombe, L~Aravind, Mark Gerstein, and Sarah~A Teichmann.
\newblock Structure and evolution of transcriptional regulatory networks.
\newblock \emph{Current opinion in structural biology}, 14\penalty0 (3):\penalty0 283--291, 2004.

\bibitem[Banavar et~al.(1999)Banavar, Maritan, and Rinaldo]{banavar1999size}
Jayanth~R Banavar, Amos Maritan, and Andrea Rinaldo.
\newblock Size and form in efficient transportation networks.
\newblock \emph{Nature}, 399\penalty0 (6732):\penalty0 130--132, 1999.

\bibitem[Bikbov et~al.(2023)Bikbov, Zainullin, Gilmanshin, Iakupova, Kazakbaeva, Panda-Jonas, Tuliakova, Fakhretdinova, Gilemzianova, and Jonas]{bikbov2023hand}
Mukharram~M Bikbov, Rinat~M Zainullin, Timur~R Gilmanshin, Ellina~M Iakupova, Gyulli~M Kazakbaeva, Songhomitra Panda-Jonas, Azaliia~M Tuliakova, Albina~A Fakhretdinova, Leisan~I Gilemzianova, and Jost~B Jonas.
\newblock Hand grip strength and ocular associations: the ural eye and medical study.
\newblock \emph{British Journal of Ophthalmology}, 107\penalty0 (10):\penalty0 1567--1574, 2023.

\bibitem[Buchholz et~al.(2024)Buchholz, Rajendran, Rosenfeld, Aragam, Sch{\"o}lkopf, and Ravikumar]{buchholz2024learning}
Simon Buchholz, Goutham Rajendran, Elan Rosenfeld, Bryon Aragam, Bernhard Sch{\"o}lkopf, and Pradeep Ravikumar.
\newblock Learning linear causal representations from interventions under general nonlinear mixing.
\newblock \emph{Advances in Neural Information Processing Systems}, 36, 2024.

\bibitem[Busiello et~al.(2017)Busiello, Suweis, Hidalgo, and Maritan]{busiello2017explorability}
Daniel~M Busiello, Samir Suweis, Jorge Hidalgo, and Amos Maritan.
\newblock Explorability and the origin of network sparsity in living systems.
\newblock \emph{Scientific reports}, 7\penalty0 (1):\penalty0 12323, 2017.

\bibitem[Celaj et~al.(2023)Celaj, Gao, Lau, Holgersen, Lo, Lodaya, Cole, Denroche, Spickett, Wagih, et~al.]{celaj2023rna}
Albi Celaj, Alice~Jiexin Gao, Tammy~TY Lau, Erle~M Holgersen, Alston Lo, Varun Lodaya, Christopher~B Cole, Robert~E Denroche, Carl Spickett, Omar Wagih, et~al.
\newblock An rna foundation model enables discovery of disease mechanisms and candidate therapeutics.
\newblock \emph{bioRxiv}, pp.\  2023--09, 2023.

\bibitem[Chen et~al.(2023)Chen, Zhou, Ding, Wang, Ren, and Yang]{chen2023self}
Ken Chen, Yue Zhou, Maolin Ding, Yu~Wang, Zhixiang Ren, and Yuedong Yang.
\newblock Self-supervised learning on millions of pre-mrna sequences improves sequence-based rna splicing prediction.
\newblock \emph{bioRxiv}, pp.\  2023--01, 2023.

\bibitem[Comon(1994)]{comon1994independent}
Pierre Comon.
\newblock Independent component analysis, a new concept?
\newblock \emph{Signal processing}, 36\penalty0 (3):\penalty0 287--314, 1994.

\bibitem[Dalla-Torre et~al.(2023)Dalla-Torre, Gonzalez, Revilla, Carranza, Grzywaczewski, Oteri, Dallago, Trop, Sirelkhatim, Richard, Skwark, Beguir, Lopez, and Pierrot]{Dalla-Torre2023.01.11.523679}
Hugo Dalla-Torre, Liam Gonzalez, Javier~Mendoza Revilla, Nicolas~Lopez Carranza, Adam~Henryk Grzywaczewski, Francesco Oteri, Christian Dallago, Evan Trop, Hassan Sirelkhatim, Guillaume Richard, Marcin Skwark, Karim Beguir, Marie Lopez, and Thomas Pierrot.
\newblock The nucleotide transformer: Building and evaluating robust foundation models for human genomics.
\newblock \emph{bioRxiv}, 2023.
\newblock \doi{10.1101/2023.01.11.523679}.
\newblock URL \url{https://www.biorxiv.org/content/early/2023/01/15/2023.01.11.523679}.

\bibitem[Daunhawer et~al.(2023)Daunhawer, Bizeul, Palumbo, Marx, and Vogt]{daunhaweridentifiability}
Imant Daunhawer, Alice Bizeul, Emanuele Palumbo, Alexander Marx, and Julia~E Vogt.
\newblock Identifiability results for multimodal contrastive learning.
\newblock In \emph{The Eleventh International Conference on Learning Representations}, 2023.

\bibitem[Diaz~Gonzalez et~al.(2023)Diaz~Gonzalez, Hughes, Yue, and Hayes]{diaz2023applying}
Armando~D Diaz~Gonzalez, Kevin~S Hughes, Songhui Yue, and Sean~T Hayes.
\newblock Applying biobert to extract germline gene-disease associations for building a knowledge graph from the biomedical literature.
\newblock In \emph{Proceedings of the 2023 7th International Conference on Information System and Data Mining}, pp.\  37--42, 2023.

\bibitem[Ege et~al.(2002)Ege, Hejlesen, Larsen, and Bek]{ege2002relationship}
Bernhard~M Ege, Ole~K Hejlesen, Ole~V Larsen, and Toke Bek.
\newblock The relationship between age and colour content in fundus images.
\newblock \emph{Acta Ophthalmologica Scandinavica}, 80\penalty0 (5):\penalty0 485--489, 2002.

\bibitem[Einbock et~al.(2005)Einbock, Moessner, Schnurrbusch, Holz, Wolf, and Group]{einbock2005changes}
Wilma Einbock, Andreas Moessner, Ute~EK Schnurrbusch, Frank~G Holz, Sebastian Wolf, and FAM~Study Group.
\newblock Changes in fundus autofluorescence in patients with age-related maculopathy. correlation to visual function: a prospective study.
\newblock \emph{Graefe's Archive for Clinical and Experimental Ophthalmology}, 243:\penalty0 300--305, 2005.

\bibitem[Fumero et~al.(2023)Fumero, Wenzel, Zancato, Achille, Rodol{\`a}, Soatto, Sch{\"o}lkopf, and Locatello]{fumero2023leveraging}
Marco Fumero, Florian Wenzel, Luca Zancato, Alessandro Achille, Emanuele Rodol{\`a}, Stefano Soatto, Bernhard Sch{\"o}lkopf, and Francesco Locatello.
\newblock Leveraging sparse and shared feature activations for disentangled representation learning.
\newblock \emph{Advances in Neural Information Processing Systems}, 36:\penalty0 27682--27698, 2023.

\bibitem[Garau-Luis et~al.(2024)Garau-Luis, Bordes, Gonzalez, Roller, de~Almeida, Hexemer, Blum, Laurent, Grzegorzewski, Lang, Pierrot, and Richard]{garauluis2024multimodaltransferlearningbiological}
Juan~Jose Garau-Luis, Patrick Bordes, Liam Gonzalez, Masa Roller, Bernardo~P. de~Almeida, Lorenz Hexemer, Christopher Blum, Stefan Laurent, Jan Grzegorzewski, Maren Lang, Thomas Pierrot, and Guillaume Richard.
\newblock Multi-modal transfer learning between biological foundation models, 2024.
\newblock URL \url{https://arxiv.org/abs/2406.14150}.

\bibitem[Gresele et~al.(2020)Gresele, Rubenstein, Mehrjou, Locatello, and Sch{\"o}lkopf]{gresele2020incomplete}
Luigi Gresele, Paul~K Rubenstein, Arash Mehrjou, Francesco Locatello, and Bernhard Sch{\"o}lkopf.
\newblock The incomplete rosetta stone problem: Identifiability results for multi-view nonlinear ica.
\newblock In \emph{Uncertainty in Artificial Intelligence}, pp.\  217--227. PMLR, 2020.

\bibitem[Hager et~al.(2024)Hager, Jungmann, Holland, Bhagat, Hubrecht, Knauer, Vielhauer, Makowski, Braren, Kaissis, et~al.]{hager2024evaluation}
Paul Hager, Friederike Jungmann, Robbie Holland, Kunal Bhagat, Inga Hubrecht, Manuel Knauer, Jakob Vielhauer, Marcus Makowski, Rickmer Braren, Georgios Kaissis, et~al.
\newblock Evaluation and mitigation of the limitations of large language models in clinical decision-making.
\newblock \emph{Nature medicine}, 30\penalty0 (9):\penalty0 2613--2622, 2024.

\bibitem[Higgins et~al.(2017)Higgins, Matthey, Pal, Burgess, Glorot, Botvinick, Mohamed, and Lerchner]{higgins2017beta}
Irina Higgins, Loic Matthey, Arka Pal, Christopher~P Burgess, Xavier Glorot, Matthew~M Botvinick, Shakir Mohamed, and Alexander Lerchner.
\newblock beta-vae: Learning basic visual concepts with a constrained variational framework.
\newblock \emph{ICLR (Poster)}, 3, 2017.

\bibitem[Huang et~al.(2018)Huang, Krueger, Lacoste, and Courville]{huang2018neural}
Chin-Wei Huang, David Krueger, Alexandre Lacoste, and Aaron Courville.
\newblock Neural autoregressive flows.
\newblock In \emph{International conference on machine learning}, pp.\  2078--2087. PMLR, 2018.

\bibitem[Hyvarinen \& Morioka(2016)Hyvarinen and Morioka]{hyvarinen2016unsupervised}
Aapo Hyvarinen and Hiroshi Morioka.
\newblock Unsupervised feature extraction by time-contrastive learning and nonlinear ica.
\newblock \emph{Advances in neural information processing systems}, 29, 2016.

\bibitem[Hyvarinen et~al.(2019)Hyvarinen, Sasaki, and Turner]{hyvarinen2019nonlinear}
Aapo Hyvarinen, Hiroaki Sasaki, and Richard Turner.
\newblock Nonlinear ica using auxiliary variables and generalized contrastive learning.
\newblock In \emph{The 22nd International Conference on Artificial Intelligence and Statistics}, pp.\  859--868. PMLR, 2019.

\bibitem[Jang et~al.(2024)Jang, Ahn, Choe, Yoon, Choi, and Kim]{jang2024disease}
Boa Jang, Youngbin Ahn, Eun~Kyung Choe, Chang~Ki Yoon, Hyuk~Jin Choi, and Young-Gon Kim.
\newblock A disease-specific foundation model using over 100k fundus images: Release and validation for abnormality and multi-disease classification on downstream tasks.
\newblock \emph{arXiv preprint arXiv:2408.08790}, 2024.

\bibitem[Jiang \& Aragam(2023)Jiang and Aragam]{jiang2023learning}
Yibo Jiang and Bryon Aragam.
\newblock Learning nonparametric latent causal graphs with unknown interventions.
\newblock \emph{Advances in Neural Information Processing Systems}, 36:\penalty0 60468--60513, 2023.

\bibitem[Jumper et~al.(2021)Jumper, Evans, Pritzel, Green, Figurnov, Ronneberger, Tunyasuvunakool, Bates, {\v{Z}}{\'\i}dek, Potapenko, et~al.]{jumper2021highly}
John Jumper, Richard Evans, Alexander Pritzel, Tim Green, Michael Figurnov, Olaf Ronneberger, Kathryn Tunyasuvunakool, Russ Bates, Augustin {\v{Z}}{\'\i}dek, Anna Potapenko, et~al.
\newblock Highly accurate protein structure prediction with alphafold.
\newblock \emph{nature}, 596\penalty0 (7873):\penalty0 583--589, 2021.

\bibitem[Khemakhem et~al.(2020{\natexlab{a}})Khemakhem, Kingma, Monti, and Hyvarinen]{khemakhem2020variational}
Ilyes Khemakhem, Diederik Kingma, Ricardo Monti, and Aapo Hyvarinen.
\newblock Variational autoencoders and nonlinear ica: A unifying framework.
\newblock In \emph{International conference on artificial intelligence and statistics}, pp.\  2207--2217. PMLR, 2020{\natexlab{a}}.

\bibitem[Khemakhem et~al.(2020{\natexlab{b}})Khemakhem, Monti, Kingma, and Hyvarinen]{khemakhem2020ice}
Ilyes Khemakhem, Ricardo Monti, Diederik Kingma, and Aapo Hyvarinen.
\newblock Ice-beem: Identifiable conditional energy-based deep models based on nonlinear ica.
\newblock \emph{Advances in Neural Information Processing Systems}, 33:\penalty0 12768--12778, 2020{\natexlab{b}}.

\bibitem[Khosla et~al.(2020)Khosla, Teterwak, Wang, Sarna, Tian, Isola, Maschinot, Liu, and Krishnan]{khosla2020supervised}
Prannay Khosla, Piotr Teterwak, Chen Wang, Aaron Sarna, Yonglong Tian, Phillip Isola, Aaron Maschinot, Ce~Liu, and Dilip Krishnan.
\newblock Supervised contrastive learning.
\newblock \emph{Advances in neural information processing systems}, 33:\penalty0 18661--18673, 2020.

\bibitem[Kobyzev et~al.(2020)Kobyzev, Prince, and Brubaker]{kobyzev2020normalizing}
Ivan Kobyzev, Simon~JD Prince, and Marcus~A Brubaker.
\newblock Normalizing flows: An introduction and review of current methods.
\newblock \emph{IEEE transactions on pattern analysis and machine intelligence}, 43\penalty0 (11):\penalty0 3964--3979, 2020.

\bibitem[Kong et~al.(2023)Kong, Xie, Yao, Zheng, Chen, Stojanov, Akinwande, and Zhang]{kong2023partial}
Lingjing Kong, Shaoan Xie, Weiran Yao, Yujia Zheng, Guangyi Chen, Petar Stojanov, Victor Akinwande, and Kun Zhang.
\newblock Partial identifiability for domain adaptation.
\newblock \emph{arXiv preprint arXiv:2306.06510}, 2023.

\bibitem[Lachapelle et~al.(2023)Lachapelle, Deleu, Mahajan, Mitliagkas, Bengio, Lacoste-Julien, and Bertrand]{lachapelle2023synergies}
S{\'e}bastien Lachapelle, Tristan Deleu, Divyat Mahajan, Ioannis Mitliagkas, Yoshua Bengio, Simon Lacoste-Julien, and Quentin Bertrand.
\newblock Synergies between disentanglement and sparsity: Generalization and identifiability in multi-task learning.
\newblock In \emph{International Conference on Machine Learning}, pp.\  18171--18206. PMLR, 2023.

\bibitem[Lachapelle et~al.(2024)Lachapelle, L{\'o}pez, Sharma, Everett, Priol, Lacoste, and Lacoste-Julien]{lachapelle2024nonparametric}
S{\'e}bastien Lachapelle, Pau~Rodr{\'\i}guez L{\'o}pez, Yash Sharma, Katie Everett, R{\'e}mi~Le Priol, Alexandre Lacoste, and Simon Lacoste-Julien.
\newblock Nonparametric partial disentanglement via mechanism sparsity: Sparse actions, interventions and sparse temporal dependencies.
\newblock \emph{arXiv preprint arXiv:2401.04890}, 2024.

\bibitem[LeCun(1998)]{lecun1998mnist}
Yann LeCun.
\newblock The mnist database of handwritten digits.
\newblock \emph{http://yann. lecun. com/exdb/mnist/}, 1998.

\bibitem[Levine et~al.(2024)Levine, Kalka, Kolobkov, Rossman, Godneva, Shilo, Keshet, Weissglas-Volkov, Shor, Diament, et~al.]{levine2024genome}
Zachary Levine, Iris Kalka, Dmitry Kolobkov, Hagai Rossman, Anastasia Godneva, Smadar Shilo, Ayya Keshet, Daphna Weissglas-Volkov, Tal Shor, Alon Diament, et~al.
\newblock Genome-wide association studies and polygenic risk score phenome-wide association studies across complex phenotypes in the human phenotype project.
\newblock \emph{Med}, 5\penalty0 (1):\penalty0 90--101, 2024.

\bibitem[Li et~al.(2024{\natexlab{a}})Li, Dai, Al~Ghothani, Huang, Zhang, Harel, Bentwich, Chen, and Zhang]{licausal}
Loka Li, Haoyue Dai, Hanin Al~Ghothani, Biwei Huang, Jiji Zhang, Shahar Harel, Isaac Bentwich, Guangyi Chen, and Kun Zhang.
\newblock On causal discovery in the presence of deterministic relations.
\newblock In \emph{The Thirty-eighth Annual Conference on Neural Information Processing Systems}, 2024{\natexlab{a}}.

\bibitem[Li et~al.(2024{\natexlab{b}})Li, Ng, Luo, Huang, Chen, Liu, Gu, and Zhang]{li2024federated}
Loka Li, Ignavier Ng, Gongxu Luo, Biwei Huang, Guangyi Chen, Tongliang Liu, Bin Gu, and Kun Zhang.
\newblock Federated causal discovery from heterogeneous data.
\newblock \emph{arXiv preprint arXiv:2402.13241}, 2024{\natexlab{b}}.

\bibitem[Li et~al.(2023)Li, Moayedpour, Li, Bailey, Riahi, Kogler-Anele, Miladi, Miner, Zheng, Wang, et~al.]{li2023codonbert}
Sizhen Li, Saeed Moayedpour, Ruijiang Li, Michael Bailey, Saleh Riahi, Lorenzo Kogler-Anele, Milad Miladi, Jacob Miner, Dinghai Zheng, Jun Wang, et~al.
\newblock Codonbert: Large language models for mrna design and optimization.
\newblock \emph{bioRxiv}, pp.\  2023--09, 2023.

\bibitem[Li et~al.(2024{\natexlab{c}})Li, Sun, and Feng]{li2024causal}
Xiutian Li, Siqi Sun, and Rui Feng.
\newblock Causal representation learning via counterfactual intervention.
\newblock In \emph{Proceedings of the AAAI Conference on Artificial Intelligence}, pp.\  3234--3242, 2024{\natexlab{c}}.

\bibitem[Lin et~al.(2023)Lin, Akin, Rao, Hie, Zhu, Lu, Smetanin, Verkuil, Kabeli, Shmueli, et~al.]{lin2023evolutionary}
Zeming Lin, Halil Akin, Roshan Rao, Brian Hie, Zhongkai Zhu, Wenting Lu, Nikita Smetanin, Robert Verkuil, Ori Kabeli, Yaniv Shmueli, et~al.
\newblock Evolutionary-scale prediction of atomic-level protein structure with a language model.
\newblock \emph{Science}, 379\penalty0 (6637):\penalty0 1123--1130, 2023.

\bibitem[Liu et~al.(2011)Liu, Slotine, and Barab{\'a}si]{liu2011controllability}
Yang-Yu Liu, Jean-Jacques Slotine, and Albert-L{\'a}szl{\'o} Barab{\'a}si.
\newblock Controllability of complex networks.
\newblock \emph{nature}, 473\penalty0 (7346):\penalty0 167--173, 2011.

\bibitem[Luo et~al.(2025)Luo, Dai, Sun, Li, Huang, Stojanov, and Zhang]{luo2025gene}
Gongxu Luo, Haoyue Dai, Boyang Sun, Loka Li, Biwei Huang, Petar Stojanov, and Kun Zhang.
\newblock Gene regulatory network inference in the presence of selection bias and latent confounders.
\newblock \emph{arXiv preprint arXiv:2501.10124}, 2025.

\bibitem[Manzoor et~al.(2023)Manzoor, Albarri, Xian, Meng, Nakov, and Liang]{manzoor2023multimodality}
Muhammad~Arslan Manzoor, Sarah Albarri, Ziting Xian, Zaiqiao Meng, Preslav Nakov, and Shangsong Liang.
\newblock Multimodality representation learning: A survey on evolution, pretraining and its applications.
\newblock \emph{ACM Transactions on Multimedia Computing, Communications and Applications}, 20\penalty0 (3):\penalty0 1--34, 2023.

\bibitem[Mao et~al.(2022)Mao, Liu, Dou, and Benos]{mao2022towards}
Haiyi Mao, Hongfu Liu, Jason~Xiaotian Dou, and Panayiotis~V Benos.
\newblock Towards cross-modal causal structure and representation learning.
\newblock In \emph{Machine Learning for Health}, pp.\  120--140. PMLR, 2022.

\bibitem[Milo et~al.(2002)Milo, Shen-Orr, Itzkovitz, Kashtan, Chklovskii, and Alon]{milo2002network}
Ron Milo, Shai Shen-Orr, Shalev Itzkovitz, Nadav Kashtan, Dmitri Chklovskii, and Uri Alon.
\newblock Network motifs: simple building blocks of complex networks.
\newblock \emph{Science}, 298\penalty0 (5594):\penalty0 824--827, 2002.

\bibitem[Moran et~al.(2022)Moran, Sridhar, Wang, and Blei]{moran2022identifiable}
G~Moran, D~Sridhar, Y~Wang, and D~Blei.
\newblock Identifiable deep generative models via sparse decoding.
\newblock \emph{Transactions on machine learning research}, 2022.

\bibitem[Morioka \& Hyvarinen(2023)Morioka and Hyvarinen]{morioka2023connectivity}
Hiroshi Morioka and Aapo Hyvarinen.
\newblock Connectivity-contrastive learning: Combining causal discovery and representation learning for multimodal data.
\newblock In \emph{International conference on artificial intelligence and statistics}, pp.\  3399--3426. PMLR, 2023.

\bibitem[Morioka \& Hyvarinen(2024)Morioka and Hyvarinen]{morioka2024causal}
Hiroshi Morioka and Aapo Hyvarinen.
\newblock Causal representation learning made identifiable by grouping of observational variables.
\newblock In \emph{Forty-first International Conference on Machine Learning}, 2024.

\bibitem[Nacher \& Akutsu(2013)Nacher and Akutsu]{nacher2013structural}
Jose~C Nacher and Tatsuya Akutsu.
\newblock Structural controllability of unidirectional bipartite networks.
\newblock \emph{Scientific reports}, 3\penalty0 (1):\penalty0 1647, 2013.

\bibitem[Nguyen et~al.(2023)Nguyen, Poli, Faizi, Thomas, Birch-Sykes, Wornow, Patel, Rabideau, Massaroli, Bengio, Ermon, Baccus, and Ré]{nguyen2023hyenadnalongrangegenomicsequence}
Eric Nguyen, Michael Poli, Marjan Faizi, Armin Thomas, Callum Birch-Sykes, Michael Wornow, Aman Patel, Clayton Rabideau, Stefano Massaroli, Yoshua Bengio, Stefano Ermon, Stephen~A. Baccus, and Chris Ré.
\newblock Hyenadna: Long-range genomic sequence modeling at single nucleotide resolution, 2023.
\newblock URL \url{https://arxiv.org/abs/2306.15794}.

\bibitem[Nguyen et~al.(2024)Nguyen, Poli, Durrant, Thomas, Kang, Sullivan, Ng, Lewis, Patel, Lou, et~al.]{nguyen2024sequence}
Eric Nguyen, Michael Poli, Matthew~G Durrant, Armin~W Thomas, Brian Kang, Jeremy Sullivan, Madelena~Y Ng, Ashley Lewis, Aman Patel, Aaron Lou, et~al.
\newblock Sequence modeling and design from molecular to genome scale with evo.
\newblock \emph{BioRxiv}, pp.\  2024--02, 2024.

\bibitem[Ogarrio et~al.(2016)Ogarrio, Spirtes, and Ramsey]{ogarrio2016hybrid}
Juan~Miguel Ogarrio, Peter Spirtes, and Joe Ramsey.
\newblock A hybrid causal search algorithm for latent variable models.
\newblock In \emph{Conference on probabilistic graphical models}, pp.\  368--379. PMLR, 2016.

\bibitem[Oord et~al.(2018)Oord, Li, and Vinyals]{oord2018representation}
Aaron van~den Oord, Yazhe Li, and Oriol Vinyals.
\newblock Representation learning with contrastive predictive coding.
\newblock \emph{arXiv preprint arXiv:1807.03748}, 2018.

\bibitem[Pal et~al.(2023)Pal, Bhattacharya, Islam, and Chakraborty]{pal2023chatgpt}
Soumen Pal, Manojit Bhattacharya, Md~Aminul Islam, and Chiranjib Chakraborty.
\newblock Chatgpt or llm in next-generation drug discovery and development: pharmaceutical and biotechnology companies can make use of the artificial intelligence-based device for a faster way of drug discovery and development.
\newblock \emph{International Journal of Surgery}, 109\penalty0 (12):\penalty0 4382--4384, 2023.

\bibitem[Pei et~al.(2024)Pei, Wu, Gao, Liang, Fang, Zhu, Xie, Qin, and Yan]{pei2024biot5+}
Qizhi Pei, Lijun Wu, Kaiyuan Gao, Xiaozhuan Liang, Yin Fang, Jinhua Zhu, Shufang Xie, Tao Qin, and Rui Yan.
\newblock Biot5+: Towards generalized biological understanding with iupac integration and multi-task tuning.
\newblock \emph{arXiv preprint arXiv:2402.17810}, 2024.

\bibitem[P{\'e}rez-Castilla et~al.(2021)P{\'e}rez-Castilla, Garc{\'\i}a-Ramos, Redondo, Andr{\'e}s, Jim{\'e}nez, and Vera]{perez2021determinant}
Alejandro P{\'e}rez-Castilla, Amador Garc{\'\i}a-Ramos, Beatr{\'\i}z Redondo, Fern{\'a}ndez-Revelles Andr{\'e}s, Raimundo Jim{\'e}nez, and Jes{\'u}s Vera.
\newblock Determinant factors of intraocular pressure responses to a maximal isometric handgrip test: hand dominance, handgrip strength and sex.
\newblock \emph{Current Eye Research}, 46\penalty0 (1):\penalty0 64--70, 2021.

\bibitem[Qiu et~al.(2020)Qiu, Wang, Tan, He, Wang, Li, Gong, and Huang]{qiu2020associations}
Zihan Qiu, Wei Wang, Yan Tan, Miao He, Langhua Wang, Yuting Li, Xia Gong, and Wenyong Huang.
\newblock Associations of grip strength with retinal and choroidal thickness in patients with type 2 diabetes mellitus without retinopathy: a cross-sectional study.
\newblock \emph{BMJ open}, 10\penalty0 (7):\penalty0 e036782, 2020.

\bibitem[Radford et~al.(2021)Radford, Kim, Hallacy, Ramesh, Goh, Agarwal, Sastry, Askell, Mishkin, Clark, et~al.]{radford2021learning}
Alec Radford, Jong~Wook Kim, Chris Hallacy, Aditya Ramesh, Gabriel Goh, Sandhini Agarwal, Girish Sastry, Amanda Askell, Pamela Mishkin, Jack Clark, et~al.
\newblock Learning transferable visual models from natural language supervision.
\newblock In \emph{International conference on machine learning}, pp.\  8748--8763. PMLR, 2021.

\bibitem[Rawls et~al.(2021)Rawls, Kummerfeld, and Zilverstand]{rawls2021integrated}
Eric Rawls, Erich Kummerfeld, and Anna Zilverstand.
\newblock An integrated multimodal model of alcohol use disorder generated by data-driven causal discovery analysis.
\newblock \emph{Communications biology}, 4\penalty0 (1):\penalty0 435, 2021.

\bibitem[Sch{\"o}lkopf et~al.(2021)Sch{\"o}lkopf, Locatello, Bauer, Ke, Kalchbrenner, Goyal, and Bengio]{scholkopf2021toward}
Bernhard Sch{\"o}lkopf, Francesco Locatello, Stefan Bauer, Nan~Rosemary Ke, Nal Kalchbrenner, Anirudh Goyal, and Yoshua Bengio.
\newblock Toward causal representation learning.
\newblock \emph{Proceedings of the IEEE}, 109\penalty0 (5):\penalty0 612--634, 2021.

\bibitem[Shilo et~al.(2021)Shilo, Bar, Keshet, Talmor-Barkan, Rossman, Godneva, Aviv, Edlitz, Reicher, Kolobkov, et~al.]{shilo202110}
Smadar Shilo, Noam Bar, Ayya Keshet, Yeela Talmor-Barkan, Hagai Rossman, Anastasia Godneva, Yaron Aviv, Yochai Edlitz, Lee Reicher, Dmitry Kolobkov, et~al.
\newblock 10 k: a large-scale prospective longitudinal study in israel.
\newblock \emph{European journal of epidemiology}, 36\penalty0 (11):\penalty0 1187--1194, 2021.

\bibitem[Shulgina et~al.(2024)Shulgina, Trinidad, Langeberg, Nisonoff, Chithrananda, Skopintsev, Nissley, Patel, Boger, Shi, et~al.]{shulgina2024rna}
Yekaterina Shulgina, Marena~I Trinidad, Conner~J Langeberg, Hunter Nisonoff, Seyone Chithrananda, Petr Skopintsev, Amos~J Nissley, Jaymin Patel, Ron~S Boger, Honglue Shi, et~al.
\newblock Rna language models predict mutations that improve rna function.
\newblock \emph{bioRxiv}, 2024.

\bibitem[Slabaugh et~al.(2013)Slabaugh, Chen, Smit, and Today]{slabaugh2013cataract}
M~Slabaugh, P~Chen, B~Smit, and Glaucoma Today.
\newblock Cataract surgery and iop.
\newblock \emph{Glaucoma Today}, pp.\  17--8, 2013.

\bibitem[Spirtes et~al.(2001)Spirtes, Glymour, and Scheines]{spirtes2001causation}
Peter Spirtes, Clark Glymour, and Richard Scheines.
\newblock \emph{Causation, prediction, and search}.
\newblock MIT press, 2001.

\bibitem[Sturma et~al.(2023)Sturma, Squires, Drton, and Uhler]{sturma2023unpaired}
Nils Sturma, Chandler Squires, Mathias Drton, and Caroline Uhler.
\newblock Unpaired multi-domain causal representation learning.
\newblock \emph{Advances in Neural Information Processing Systems}, 36, 2023.

\bibitem[Tang et~al.(2023)Tang, Zhang, He, Zhang, Lin, Partarrieu, Hanna, Ren, Shen, Yang, et~al.]{tang2023explainable}
Xin Tang, Jiawei Zhang, Yichun He, Xinhe Zhang, Zuwan Lin, Sebastian Partarrieu, Emma~Bou Hanna, Zhaolin Ren, Hao Shen, Yuhong Yang, et~al.
\newblock Explainable multi-task learning for multi-modality biological data analysis.
\newblock \emph{Nature communications}, 14\penalty0 (1):\penalty0 2546, 2023.

\bibitem[Taylor et~al.(2022)Taylor, Kardas, Cucurull, Scialom, Hartshorn, Saravia, Poulton, Kerkez, and Stojnic]{taylor2022galacticalargelanguagemodel}
Ross Taylor, Marcin Kardas, Guillem Cucurull, Thomas Scialom, Anthony Hartshorn, Elvis Saravia, Andrew Poulton, Viktor Kerkez, and Robert Stojnic.
\newblock Galactica: A large language model for science, 2022.
\newblock URL \url{https://arxiv.org/abs/2211.09085}.

\bibitem[Van~Essen et~al.(2013)Van~Essen, Smith, Barch, Behrens, Yacoub, Ugurbil, Consortium, et~al.]{van2013wu}
David~C Van~Essen, Stephen~M Smith, Deanna~M Barch, Timothy~EJ Behrens, Essa Yacoub, Kamil Ugurbil, Wu-Minn~HCP Consortium, et~al.
\newblock The wu-minn human connectome project: an overview.
\newblock \emph{Neuroimage}, 80:\penalty0 62--79, 2013.

\bibitem[Varici et~al.(2023)Varici, Acarturk, Shanmugam, Kumar, and Tajer]{varici2023score}
Burak Varici, Emre Acarturk, Karthikeyan Shanmugam, Abhishek Kumar, and Ali Tajer.
\newblock Score-based causal representation learning with interventions.
\newblock \emph{arXiv preprint arXiv:2301.08230}, 2023.

\bibitem[Von~K{\"u}gelgen et~al.(2021)Von~K{\"u}gelgen, Sharma, Gresele, Brendel, Sch{\"o}lkopf, Besserve, and Locatello]{von2021self}
Julius Von~K{\"u}gelgen, Yash Sharma, Luigi Gresele, Wieland Brendel, Bernhard Sch{\"o}lkopf, Michel Besserve, and Francesco Locatello.
\newblock Self-supervised learning with data augmentations provably isolates content from style.
\newblock \emph{Advances in neural information processing systems}, 34:\penalty0 16451--16467, 2021.

\bibitem[von K{\"u}gelgen et~al.(2023)von K{\"u}gelgen, Besserve, Wendong, Gresele, Keki{\'c}, Bareinboim, Blei, and Sch{\"o}lkopf]{von2024nonparametric}
Julius von K{\"u}gelgen, Michel Besserve, Liang Wendong, Luigi Gresele, Armin Keki{\'c}, Elias Bareinboim, David Blei, and Bernhard Sch{\"o}lkopf.
\newblock Nonparametric identifiability of causal representations from unknown interventions.
\newblock \emph{Advances in Neural Information Processing Systems}, 36, 2023.

\bibitem[Wali et~al.(2020)Wali, Abaalkhail, AlQassas, Alhejaili, Spence, and Pandi-Perumal]{wali2020correlation}
Siraj~Omar Wali, Bahaa Abaalkhail, Ibrahim AlQassas, Faris Alhejaili, David~W Spence, and Seithikurippu~R Pandi-Perumal.
\newblock The correlation between oxygen saturation indices and the standard obstructive sleep apnea severity.
\newblock \emph{Annals of thoracic medicine}, 15\penalty0 (2):\penalty0 70--75, 2020.

\bibitem[Wang et~al.(2022)Wang, Jiang, Lu, Zheng, Cheng, Yin, and Luo]{wang2022vlmixer}
Teng Wang, Wenhao Jiang, Zhichao Lu, Feng Zheng, Ran Cheng, Chengguo Yin, and Ping Luo.
\newblock Vlmixer: Unpaired vision-language pre-training via cross-modal cutmix.
\newblock In \emph{International Conference on Machine Learning}, pp.\  22680--22690. PMLR, 2022.

\bibitem[West et~al.(2002)West, Enquist, and Brown]{west2002modelling}
Geoffrey~B West, Brian~J Enquist, and James~H Brown.
\newblock Modelling universality and scaling.
\newblock \emph{Nature}, 420\penalty0 (6916):\penalty0 626--627, 2002.

\bibitem[Xiao et~al.(2017)Xiao, Rasul, and Vollgraf]{xiao2017fashion}
Han Xiao, Kashif Rasul, and Roland Vollgraf.
\newblock Fashion-mnist: a novel image dataset for benchmarking machine learning algorithms.
\newblock \emph{arXiv preprint arXiv:1708.07747}, 2017.

\bibitem[Xu et~al.(2024)Xu, Yao, Lachapelle, Taslakian, von K{\"u}gelgen, Locatello, and Magliacane]{xu2024sparsity}
Danru Xu, Dingling Yao, S{\'e}bastien Lachapelle, Perouz Taslakian, Julius von K{\"u}gelgen, Francesco Locatello, and Sara Magliacane.
\newblock A sparsity principle for partially observable causal representation learning.
\newblock \emph{arXiv preprint arXiv:2403.08335}, 2024.

\bibitem[Yang et~al.(2020)Yang, Liu, Chen, Shen, Hao, and Wang]{yang2020causalvae}
Mengyue Yang, Furui Liu, Zhitang Chen, Xinwei Shen, Jianye Hao, and Jun Wang.
\newblock Causalvae: Structured causal disentanglement in variational autoencoder.
\newblock \emph{arXiv preprint arXiv:2004.08697}, 2020.

\bibitem[Yao et~al.(2023)Yao, Xu, Lachapelle, Magliacane, Taslakian, Martius, von K{\"u}gelgen, and Locatello]{yao2023multi}
Dingling Yao, Danru Xu, S{\'e}bastien Lachapelle, Sara Magliacane, Perouz Taslakian, Georg Martius, Julius von K{\"u}gelgen, and Francesco Locatello.
\newblock Multi-view causal representation learning with partial observability.
\newblock \emph{arXiv preprint arXiv:2311.04056}, 2023.

\bibitem[Yao et~al.(2021)Yao, Sun, Ho, Sun, and Zhang]{yao2021learning}
Weiran Yao, Yuewen Sun, Alex Ho, Changyin Sun, and Kun Zhang.
\newblock Learning temporally causal latent processes from general temporal data.
\newblock \emph{arXiv preprint arXiv:2110.05428}, 2021.

\bibitem[Zagirova et~al.(2023)Zagirova, Pushkov, Leung, Liu, Urban, Sidorenko, Kalashnikov, Kozlova, Naumov, Pun, et~al.]{zagirova2023biomedical}
Diana Zagirova, Stefan Pushkov, Geoffrey Ho~Duen Leung, Bonnie Hei~Man Liu, Anatoly Urban, Denis Sidorenko, Aleksandr Kalashnikov, Ekaterina Kozlova, Vladimir Naumov, Frank~W Pun, et~al.
\newblock Biomedical generative pre-trained based transformer language model for age-related disease target discovery.
\newblock \emph{Aging (Albany NY)}, 15\penalty0 (18):\penalty0 9293, 2023.

\bibitem[Zhang et~al.(2020)Zhang, Yang, He, and Deng]{zhang2020multimodal}
Chao Zhang, Zichao Yang, Xiaodong He, and Li~Deng.
\newblock Multimodal intelligence: Representation learning, information fusion, and applications.
\newblock \emph{IEEE Journal of Selected Topics in Signal Processing}, 14\penalty0 (3):\penalty0 478--493, 2020.

\bibitem[Zhang et~al.(2024{\natexlab{a}})Zhang, Greenewald, Squires, Srivastava, Shanmugam, and Uhler]{zhang2024identifiability}
Jiaqi Zhang, Kristjan Greenewald, Chandler Squires, Akash Srivastava, Karthikeyan Shanmugam, and Caroline Uhler.
\newblock Identifiability guarantees for causal disentanglement from soft interventions.
\newblock \emph{Advances in Neural Information Processing Systems}, 36, 2024{\natexlab{a}}.

\bibitem[Zhang et~al.(2024{\natexlab{b}})Zhang, Xie, Ng, and Zheng]{zhang2024causal}
Kun Zhang, Shaoan Xie, Ignavier Ng, and Yujia Zheng.
\newblock Causal representation learning from multiple distributions: A general setting.
\newblock \emph{arXiv preprint arXiv:2402.05052}, 2024{\natexlab{b}}.

\bibitem[Zheng et~al.(2024{\natexlab{a}})Zheng, Chen, He, and Chen]{zheng2024multi}
Lecheng Zheng, Zhengzhang Chen, Jingrui He, and Haifeng Chen.
\newblock Multi-modal causal structure learning and root cause analysis.
\newblock \emph{arXiv preprint arXiv:2402.02357}, 2024{\natexlab{a}}.

\bibitem[Zheng et~al.(2023)Zheng, Koh, Ju, Nguyen, May, Webb, and Pan]{zheng2023large}
Yizhen Zheng, Huan~Yee Koh, Jiaxin Ju, Anh~TN Nguyen, Lauren~T May, Geoffrey~I Webb, and Shirui Pan.
\newblock Large language models for scientific synthesis, inference and explanation.
\newblock \emph{arXiv preprint arXiv:2310.07984}, 2023.

\bibitem[Zheng et~al.(2024{\natexlab{b}})Zheng, Koh, Yang, Li, May, Webb, Pan, and Church]{zheng2024large}
Yizhen Zheng, Huan~Yee Koh, Maddie Yang, Li~Li, Lauren~T May, Geoffrey~I Webb, Shirui Pan, and George Church.
\newblock Large language models in drug discovery and development: From disease mechanisms to clinical trials.
\newblock \emph{arXiv preprint arXiv:2409.04481}, 2024{\natexlab{b}}.

\bibitem[Zheng \& Zhang(2023)Zheng and Zhang]{zheng2023generalizing}
Yujia Zheng and Kun Zhang.
\newblock Generalizing nonlinear ica beyond structural sparsity.
\newblock \emph{Advances in Neural Information Processing Systems}, 36:\penalty0 13326--13355, 2023.

\bibitem[Zheng et~al.(2022)Zheng, Ng, and Zhang]{zheng2022identifiability}
Yujia Zheng, Ignavier Ng, and Kun Zhang.
\newblock On the identifiability of nonlinear ica: Sparsity and beyond.
\newblock \emph{Advances in neural information processing systems}, 35:\penalty0 16411--16422, 2022.

\bibitem[Zhou et~al.(2023)Zhou, Chia, Wagner, Ayhan, Williamson, Struyven, Liu, Xu, Lozano, Woodward-Court, et~al.]{zhou2023foundation}
Yukun Zhou, Mark~A Chia, Siegfried~K Wagner, Murat~S Ayhan, Dominic~J Williamson, Robbert~R Struyven, Timing Liu, Moucheng Xu, Mateo~G Lozano, Peter Woodward-Court, et~al.
\newblock A foundation model for generalizable disease detection from retinal images.
\newblock \emph{Nature}, 622\penalty0 (7981):\penalty0 156--163, 2023.

\bibitem[Zhou et~al.(2024)Zhou, Ji, Li, Dutta, Davuluri, and Liu]{zhou2024dnabert2efficientfoundationmodel}
Zhihan Zhou, Yanrong Ji, Weijian Li, Pratik Dutta, Ramana Davuluri, and Han Liu.
\newblock Dnabert-2: Efficient foundation model and benchmark for multi-species genome, 2024.
\newblock URL \url{https://arxiv.org/abs/2306.15006}.

\bibitem[Zimmermann et~al.(2021)Zimmermann, Sharma, Schneider, Bethge, and Brendel]{zimmermann2021contrastive}
Roland~S Zimmermann, Yash Sharma, Steffen Schneider, Matthias Bethge, and Wieland Brendel.
\newblock Contrastive learning inverts the data generating process.
\newblock In \emph{International Conference on Machine Learning}, pp.\  12979--12990. PMLR, 2021.

\bibitem[Ziu et~al.(2024)Ziu, Hanzely, Li, Zhang, Tak{\'a}{\v{c}}, and Kamzolov]{ziu2024psi}
Klea Ziu, Slavom{\'\i}r Hanzely, Loka Li, Kun Zhang, Martin Tak{\'a}{\v{c}}, and Dmitry Kamzolov.
\newblock $\psi$ dag: Projected stochastic approximation iteration for dag structure learning.
\newblock \emph{arXiv preprint arXiv:2410.23862}, 2024.

\end{thebibliography}
\bibliographystyle{iclr2025_conference}

\clearpage

\appendix

\clearpage
\normalsize	
\appendix
\begin{center}
{\Large \bf Supplementary Materials for ``\paper''}
\end{center}

\tableofcontents
\addtocontents{toc}{\protect\setcounter{tocdepth}{2}}

\newpage

\section{Notation and Terminology} \label{app:notations}
We summarize the notations used throughout the paper in Table~\ref{tab:notation}.

\begin{table}[h!]
\centering
\begin{tabular}{p{0.2\textwidth}p{0.7\textwidth}}
\toprule
\multicolumn{2}{l}{\textbf{Index}} \\
\midrule
$m, n$          &  Modality index \\ 
$i, j$          &  Variable element index  \\
$I(\cdot)$      &  Component indices of a given argument \\
$d(\cdot)$      &  Dimensionality indices of a given argument  \\
\midrule
\multicolumn{2}{l}{\textbf{Variable}} \\
\midrule
$\vecx^{(m)}$               &  Observation/measurement in each modality \\
$\vecz^{(m)}$               &  Causally related latent variables in each modality \\
$\vecv^{(m)}, \vecv^{(-m)}$ &  One specific observation in modality $m$, and the rest of others \\
$\vecs^{(m)}, \vecs^{(-m)}$ &  One specific latent variables in modality $m$, and the rest of others \\
$\eta$                      &  Domain-specific information \\
$\epsilon$                  &  Mutually independent exogenous variables \\
$\hat{z}_i$                 &  Estimated latent variables over $z_i$ \\
$\hat{\vecx}^{(m)}$         &  Reconstructed observation in modality $m$\\
$\text{Pa}(z_i^{(m)})$      &  Set of direct cause nodes/parents of variable $z_i^{(m)}$ \\
\midrule
\multicolumn{2}{l}{\textbf{Function and Hyperparameter}} \\
\midrule
$g_{z_i^{m}}$               &  Causal function among latent variables \\
$g_{\vecx^{(m)}}$           &  Nonparametric mixing function in modality $m$\\
$h$                         &  Invertible mapping from true latent to the estimated latent\\
$p$                         &  Distribution function (e.g., $p_{z_i}$ is the distribution of $z_i$) \\
$\alpha_{\text{Recon}}, \alpha_{\text{Ind}}, \alpha_{\text{Sp}}$ & Weights in the combination objective \\
\bottomrule
\end{tabular}
\caption{List of notations.}
\label{tab:notation}
\end{table}

\paragraph{Notions of identifiability.}
Following the literature on ICA~\citep{hyvarinen2016unsupervised,hyvarinen2019nonlinear,comon1994independent} and causal representation learning~\citep{yao2023multi,von2021self,daunhaweridentifiability}, we assume that the generating function $ g _{ \mathbf{x}^{(m)} }  $ (Eq.~\eqref{eq:x_generation}) is an invertible map from $ (\mathbf{ z }^{(m)}, \mathbf{ \eta }^{ (m) })  $ to $ \mathbf{ x }^{ (m) } $ (Condition~\ref{cond:subspace_identification}). 
Under this invertibility assumption, given the value of $ \mathbf{x }^{ (m) } $, one can perfectly determine the value of $ \mathbf{ z}^{ (m) } $, which is essentially the posterior $ p ( \mathbf{ z} | \mathbf{ x } ) $ (a point mass here). Here, $\mathbf{z} ^{ (m) }$ is a function of $\mathbf{x} ^{ (m) }$, and the identifiability of $g$ gives rise to the result that $\mathbf{z} ^{ (m) }$ values can be identified from $\mathbf{x} ^{ (m) }$.

In statistics, to show identifiability, we start with equal distributions $  p_{\phi_1}=p_{\phi_2} $ to derive the equivalence of the parameters $ \phi_1=\phi_2 $.
In our case, since the functions $g _{ \mathbf{x}^{(m)} }$, $ \hat{g} _{ \mathbf{x}^{(m)} } $ are invertible, one can reason about the relation between the two specifications $g _{ \mathbf{x}^{(m)} }$ and $ \hat{g} _{ \mathbf{x}^{(m)} } $ through a composition $ h: = \hat{g}^{-1} _{ \mathbf{x}^{(m)} } \circ g _{ \mathbf{x}^{(m)} }$. For instance, $ h $ is the identity when the two specifications are identical.
Similarly, in this work, we start with equal values of $ \mathbf{x} $ to establish the relation between $ \mathbf{z} $ and $ \hat{\mathbf{z}} $.

\section{Constraints in the Estimation Framework}\label{app:CIRealization}
Here we provide the proofs for the constraints utilized in the estimation framework.
\begin{restatable}{proposition}{ConInd}[Conditional Independence Condition]
Let $\vecx^{(m)}$ and $\vecx^{(n)}$ be two different multimodal observations. 
$\vecz^{(m)}\subset\vecz$ are the set of block-identifying latent variables, and 
$\eta^{(m)}\subset\eta$ are domain-specific information in modality $m$. 
We have 
\begin{equation}\label{eq:ConInd}
\vecx^{(m)}\ind\vecx^{(n)}\given\vecz^{(m)} \Longleftrightarrow \eta^{(m)}\ind\eta^{(n)}.
\end{equation}
\end{restatable}

\begin{proof}
Given the data generation process in Eq.~\eqref{eq:x_generation}, the following assumptions hold true for any $m, n\in[M]$: (1) $\vecz^{(m)}\ind\eta^{(m)}$; (2) $\vecz^{(m)}\ind\eta^{(n)}$; (3) $\eta^{(m)}\ind\vecx^{(n)}$.

\textbf{Sufficient condition.} \ \
Given LHS of Eq.~\eqref{eq:ConInd}, we have 
\[
p(\vecx^{(m)},\vecx^{(n)}\given\vecz^{(m)}) = p(\vecx^{(m)}\given\vecz^{(m)})p(\vecx^{(n)}\given\vecz^{(m)}).
\]
\begin{align}
\xRightarrow{RHS} p(\vecx^{(m)},\vecx^{(n)}\given\vecz^{(m)}) &= \frac{p(\vecx^{(m)},\vecx^{(n)},\vecz^{(m)})}{p(\vecz^{(m)})} = \frac{p(\eta^{(m)},\eta^{(n)},\vecz^{(m)})}{p(\vecz^{(m)})}|\text{det}\frac{\partial \eta^{(m)}}{\partial \vecx^{(m)}}||\text{det}\frac{\partial \eta^{(n)}}{\partial \vecx^{(n)}}| \nonumber\\
&= p(\eta^{(m)},\eta^{(n)}\given\vecz^{(m)}) |\text{det}\frac{\partial \eta^{(m)}}{\partial \vecx^{(m)}}||\text{det}\frac{\partial \eta^{(n)}}{\partial \vecx^{(n)}}| \nonumber\\
\xRightarrow{LHS} p(\vecx^{(m)}\given\vecz^{(m)})p(\vecx^{(n)}\given\vecz^{(m)}) &= \frac{p(\vecx^{(m)},\vecz^{(m)})}{p(\vecz^{(m)})}\frac{p(\vecx^{(n)},\vecz^{(m)})}{p(\vecz^{(m)})} \nonumber\\
&= \frac{p(\eta^{(m)},\vecz^{(m)})}{p(\vecz^{(m)})}|\text{det}\frac{\partial \eta^{(m)}}{\partial \vecx^{(m)}}|\frac{p(\eta^{(n)},\vecz^{(m)})}{p(\vecz^{(m)})}|\text{det}\frac{\partial \eta^{(n)}}{\partial \vecx^{(n)}}| \nonumber\\
&= p(\eta^{(m)}\given\vecz^{(m)})p(\eta^{(n)}\given\vecz^{(n)})|\text{det}\frac{\partial \eta^{(m)}}{\partial \vecx^{(m)}}||\text{det}\frac{\partial \eta^{(n)}}{\partial \vecx^{(n)}}| \nonumber
\end{align}
Thus we have
\[
p(\eta^{(m)},\eta^{(n)}\given\vecz^{(m)}) = p(\eta^{(m)}\given\vecz^{(m)})p(\eta^{(n)}\given\vecz^{(n)})\Rightarrow
p(\eta^{(m)},\eta^{(n)}) = p(\eta^{(m)})p(\eta^{(n)})\Rightarrow
\eta^{(m)}\ind\eta^{(n)}
\]

\textbf{Necessary condition.} \ \
Given RHS of Eq.~\eqref{eq:ConInd} and above conclusion, we have 
\[
p(\vecx^{(m)}\given\vecz^{(m)})=p(\eta^{(m)})|\text{det}\frac{\partial \eta^{(m)}}{\partial \vecx^{(m)}}|, 
\quad p(\vecx^{(n)}\given\vecz^{(m)})=p(\eta^{(n)})|\text{det}\frac{\partial \eta^{(n)}}{\partial \vecx^{(n)}}|
\]
\begin{align}
&\xRightarrow{Multiplication} p(\vecx^{(m)}\given\vecz^{(m)})p(\vecx^{(n)}\given\vecz^{(m)}) = p(\eta^{(m)})p(\eta^{(n)})|\text{det}\frac{\partial \eta^{(m)}}{\partial \vecx^{(m)}}||\text{det}\frac{\partial \eta^{(n)}}{\partial \vecx^{(n)}}| \nonumber\\
&= p(\eta^{(m)},\eta^{(n)})|\text{det}\frac{\partial \eta^{(m)}}{\partial \vecx^{(m)}}||\text{det}\frac{\partial \eta^{(n)}}{\partial \vecx^{(n)}}| = \frac{p(\eta^{(m)},\eta^{(n)},\vecz^{(m)})}{p(\vecz^{(m)})}|\text{det}\frac{\partial \eta^{(m)}}{\partial \vecx^{(m)}}||\text{det}\frac{\partial \eta^{(n)}}{\partial \vecx^{(n)}}| = \frac{p(\vecx^{(m)},\vecx^{(n)},\vecz^{(m)})}{p(\vecz^{(m)})}\nonumber\\
&\Rightarrow p(\vecx^{(m)}\given\vecz^{(m)})p(\vecx^{(n)}\given\vecz^{(m)}) = p(\vecx^{(m)},\vecx^{(n)}\given\vecz^{(m)}) \Rightarrow \vecx^{(m)}\ind\vecx^{(n)}\given\vecz^{(m)} \nonumber\\
\end{align}
\end{proof}

\begin{proposition}[Independent Noise Condition]
Let $\vecz$ and $\eta$ be the block-identified latent variables and domain-specific information, respectively, across all modalities.
Denote $\epsilon$ as the exogenous variables in the latent causal structure.
We have 
\begin{equation}\label{eq:NoiseInd}
\eta\ind\vecz \Longleftrightarrow \eta\ind\epsilon.
\end{equation}
\end{proposition}

\begin{proof}
Given the causal function in Eq.~\eqref{eq:z_generation}, we have $p(\vecz)=p(\epsilon)|\text{det}\frac{\partial \epsilon}{\partial \vecz}|$. 

\textbf{Sufficient condition.} \ \
Suppose $(\vecz,\eta)=h(\epsilon,\eta)$ and $\eta\ind\vecz$, we have
\begin{align}
p(\vecz, \eta) = p(\epsilon, \eta)|\text{def}\frac{\partial \epsilon}{\partial \vecz}| 
&\Rightarrow p(\vecz) p(\eta) = p(\epsilon,\eta)|\text{def}\frac{\partial \epsilon}{\partial \vecz}| 
\Rightarrow p(\epsilon) p(\eta) |\text{det}\frac{\partial \epsilon}{\partial \vecz}| = p(\epsilon,\eta)|\text{def}\frac{\partial \epsilon}{\partial \vecz}| \nonumber\\
&\Rightarrow p(\epsilon) p(\eta) = p(\epsilon,\eta) \Rightarrow \eta\ind\epsilon\nonumber\\
\end{align}

\textbf{Necessary condition.} \ \
Suppose $(\vecz,\eta)=h(\epsilon,\eta)$ and $\eta\ind\epsilon$, we have
\begin{align}
p(\vecz, \eta) = p(\epsilon, \eta)|\text{def}\frac{\partial\epsilon}{\partial\vecz}| 
\Rightarrow p(\vecz,\eta) = p(\epsilon)|\text{def}\frac{\partial\epsilon}{\partial\vecz}|p(\eta) 
\Rightarrow p(\vecz,\eta) = p(\vecz)p(\eta) \Rightarrow \eta\ind\vecz\nonumber\\
\end{align}
\end{proof}

\section{Identifiability Theory}

\subsection{Proof for Theorem~\ref{thm:subspace_identification}} \label{app:subspace_identification_proof}
We present the proof for Theorem~\ref{thm:subspace_identification}.
For ease of reference, we duplicate Condition~\ref{cond:subspace_identification} and Theorem~\ref{thm:subspace_identification} below.

\subspaceconditions*

\subspaceidentification*

\begin{proof}

Given the generating processes in Eq.~\eqref{eq:x_generation} and Eq.~\eqref{eq:z_generation}, we can express any observed group $ \vecx^{(m)} $ and its complement $ \vecx^{(-m)}:= \vecx \setminus \vecx^{(m)} $ as two views of the latent variables of group $m$:
\begin{align}
    \vecv^{(m)} &:= g^{(m)} (\vecs^{(m)}, \bm\eta^{(m)}), \label{eq:generating_xm} \\ 
    \vecv^{(-m)} &:= g^{(-m)} (\vecs^{(m)}, \tilde{\bm\eta}^{(-m)}), \label{eq:generating_x_minus_m},
\end{align}
where $ \bm\eta^{(m)} $ stands for exogenous variables for the group $ \vecx^{(m)} $ and $ \tilde{\bm\eta}^{(-m)} $ represents all the information necessary to generate the complement group $ \vecx^{(-m)} $ beyond $\vecz^{(m)} $.

Following the classic definition of identifiability, we define two specifications $ \bm\theta  =\{ g _{ \vecx^{(m)} }, g _{ \vecz^{(m)} }, p ( \bm\epsilon^{ (m) }  ) \} _{ m=1 }^{M}  $ and $ \hat{\bm\theta} := \{ \hat{g} _{ \vecx^{(m)} }, \hat{g} _{ \vecz^{(m)} }, \hat{p} ( \bm\epsilon^{ (m) }  ) \} _{ m=1 }^{M}  $ that fit the observation distribution $ p(\vecx) $. 
To show the identifiability in terms of the functions in $ \bm\theta$ and $ \hat{\bm\theta} $, we show that given the same $\vecx^{(m)}$ value the identifiability between $ \vecz^{(m)} $ and $ \hat{ \vecz}^{(m)} $.

Thus, the subspace identification is equivalent to show that for each group $m$, the estimated latent variable $ \hat{\vecz}^{(m)} $ and the true counterpart are related via an invertible map $ h$, i.e., $ \hat{\vecz}^{(m)} = h ( \vecz^{(m)} ) $.

Eq.~\eqref{eq:generating_xm} and the invertibility of the map $ ( \vecz, \bm\eta^{(m)}, \tilde{\bm\eta}^{(-m)} ) \mapsto (\vecx^{(m)}, \vecx^{(-m)}) $ (Condition~\ref{cond:subspace_identification}-\ref{asmp:smoothness_invertibility}) give rise to an invertible map $\tilde{h}: (\hat{\vecz}^{(m)}, \hat{\bm\eta}^{(m)}, \hat{\tilde{\bm\eta}}^{(-m)}) \mapsto (\vecz^{(m)}, \bm\eta^{m}, \tilde{\bm\eta}^{(-m)})$.

The matched observed distribution between the true and the estimated models for the generating process Eq.~\eqref{eq:generating_x_minus_m} yields that
\begin{align}
    g^{(-m)} (\vecz^{(m)}, \tilde{\bm\eta}^{(-m)}) = \hat{g}^{(-m)} (\hat{\vecz}^{(m)}, \hat{\tilde{\bm\eta}}^{(-m)}).
\end{align}

Plugging in $\tilde{h}$ gives
\begin{align} \label{eq:subspace_equivalence}
    \hat{g}^{(-m)} ( \hat{\vecz}^{(m)}, \hat{\tilde{\bm\eta}}^{(-m)}) = g^{(-m)} \left( \left[ \tilde{h} \left( \hat{\vecz}^{(m)}, \hat{\bm\eta}^{(m)}, \hat{\tilde{\bm\eta}}^{(-m)} \right) \right]_{ I(\vecz^{(m)}), I(\tilde{\bm\eta}^{(-m)}) } \right).
\end{align}
where we adopt $ I(\cdot) $ to indicate the indices of its argument.

For any $ i \in [d( \vecx^{(m)} )] $ and $ j \in [ d( \hat{\bm\eta}^{(m)} ) ] $, we take partial derivative w.r.t. $ \hat{\eta}^{(m)}_{j} $ on both sides of Eq.~\eqref{eq:subspace_equivalence}:
\begin{align}
    \underbrace{\frac{ \partial [\hat{g}^{(-m)}]_{i} }{ \partial [\hat{\eta}^{(m)}]_{j} }}_{=0} = \frac{ \partial [g^{(-m)}]_{i} }{ \partial [\hat{\eta}^{(m)}]_{j} }.
\end{align}

The left-hand side of Eq.~\eqref{eq:subspace_equivalence} equals to zero because $ \hat{g}^{(-m)} $ is not a function of $ \hat{\bm\eta}^{(m)} $.

Therefore, expanding the right-hand side of Eq.~\eqref{eq:subspace_equivalence} gives:
\begin{align} \label{eq:derivative_sums}
    \sum_{ k \in I(\vecz^{(-m)}) \cup I( \tilde{\bm\eta }^{(-m)}) } \frac{ \partial [ g^{(-m)}]_{i} }{\partial [\tilde{h}]_{ k } } \cdot \frac{ \partial [\tilde{h}]_{k} }{ \partial [\hat{\eta}^{(m)}]_{j} } = \sum_{ k \in I(\vecz^{(-m)}) } \frac{ \partial [ g^{(-m)}]_{i} }{\partial [\tilde{h}]_{ k } } \cdot \frac{ \partial [\tilde{h}]_{k} }{ \partial [\hat{\eta}^{(m)}]_{j} } = 0.
\end{align}

The first equality in Eq.~\eqref{eq:derivative_sums} is due to the fact that $ \bm\tilde{\bm\eta}^{(-m)} $ is a function of $ \vecx^{(-m)}$ and varying $ \hat{\bm\eta}^{(m)} $ doesn't vary $ \vecx^{(-m)} $ ($\hat{\bm\eta}^{(m)} $ is a function of $ \vecx^{(m)} $ thanks to the invertibility of $ \hat{g}^{(m)} $), i.e., $ \frac{ \partial [\tilde{\eta }^{(-m)}]_{k} }{ \partial [ \hat{\eta}^{(m)}]_{j} } = 0$.

Condition~\ref{cond:subspace_identification}-\ref{asmp:linear_independence} implies that the matrix $ \left( \frac{ \partial [ g^{(-m)}]_{i} }{\partial [\tilde{h}]_{ k } } \right)_{ i, k } $ has a full column rank.
Therefore, its null space contains only a zero vector, which, together with Eq.~\eqref{eq:derivative_sums}, implies that $ \frac{ \partial [z^{(m)}]_{k} }{ \partial [\hat{\eta}^{(m)}]_{j} } = 0 $.
Consequently, given the generating process Eq.~\eqref{eq:generating_xm} and the invertibility of $ g^{(m)} $ and $ \hat{g}^{(m)} $ (Condition~\ref{cond:subspace_identification}-\ref{asmp:smoothness_invertibility}), the estimated latent variable $ \hat{\vecz}^{(m)} $ and the true latent variable $ \vecz^{(m)} $ are related via an invertible map, as desired.

\end{proof}

\subsection{Proof for Theorem~\ref{thm:component_identification}} \label{app:component_identification_proof}
We present the proof for Theorem~\ref{thm:component_identification}.
For ease of reference, we duplicate Condition~\ref{cond:component_identification} and Theorem~\ref{thm:component_identification}.

\componentconditions*

\componentidentification*

\begin{proof}

    Given Theorem~\ref{thm:subspace_identification}, Condition~\ref{cond:subspace_identification} implies that the estimated group-wise latent variable $ \hat{\vecz}^{(m)} $ is related to the true variable $ \vecz^{(m)} $ through an invertible transformation $h^{(m)}$, i.e.,
    \begin{align} \label{eq:subspapce_map}
        \hat{\vecz}^{(m)} = h^{(m)} (\vecz^{(m)}).
    \end{align}
    
    It follows that the Jacobian matrix $ \mT_{ \frac{ \partial \hat{\vecz} }{ \partial \vecz } } $ can be arranged into a block-diagonal matrix, in which diagonal block $m$ corresponds to a Jacobian matrix $ \mT_{ \frac{ \partial \hat{\vecz}^{(m)} }{ \partial \vecz^{(m)} } } $.
    Then, the goal is to prove that these diagonal blocks are actually generalized permutation matrices, whose each column only contains one nonzero entry.

    We divide the proof into several steps for the sake of exposition.
    At step 1, we derive an equivalence relation between the estimation model $( \hat{g}_{z}, \hat{g}_{x} )$ and the true model $ ( g_{z}, g_{x} ) $.
    At step 2, we apply Theorem~\ref{thm:subspace_identification} to the equivalence to characterize the relation between the true and the estimated graph structure.
    At step 3, we leverage the sparsity condition (Condition~\ref{cond:component_identification}) to reason about the identifiability of each component $ z^{(m)}_{i} $ for $m \in [M]$ and $ i \in [d(z^{(m)})] $.

    \paragraph{Step 1.}
    The generating process in Eq.~\eqref{eq:z_generation} and the subspace identification Eq.~\eqref{eq:subspapce_map} imply
    \begin{align} \label{eq:z_equivalence}
        \hat{ g }_{z} ( \hat{\vecz}, \hat{\bm\epsilon} ) = h \circ g_{z} ( \vecz, \bm\epsilon ),
    \end{align}
    where $ h $ is defined as the Cartesian product of individual $ h^{(m)} $ functions.

    Taking partial derivatives w.r.t, $ z_{i} $ of both sides of Eq.~\eqref{eq:z_equivalence} yields:
    \begin{align} \label{eq:jacobian_equivalence_all}
        \begin{bmatrix}
            \mG_{ \frac{ \partial \hat{\vecz} }{ \partial \hat{\vecz} } } & \mT_{ \frac{ \partial \hat{\vecz} }{ \partial \hat{\bm\epsilon} } }
        \end{bmatrix}
        \begin{bmatrix}
            \mT_{ \frac{ \partial \hat{\vecz} }{ \partial \vecz }  } \\
            \mT_{\frac{ \partial \hat{\bm\epsilon} }{ \partial \vecz }}
        \end{bmatrix}
        =
        \mT_{ \frac{ \partial \hat{\vecz} }{ \partial \vecz } }
        \mG_{\frac{ \partial \vecz }{ \partial \vecz }}.
    \end{align}
    Each $ \mT $ matrix is the Jacobian matrix consisting of the corresponding partial derivatives.
    We use $ \mG_{\frac{ \partial \vecz }{ \partial \vecz } }$ to denote the derivatives from the function $ g_{z} $ which encodes the dependence structure among $ z $ components.
    The same applies to $ \mG_{\frac{ \partial \hat{\vecz} }{ \partial \hat{\vecz} } }$.
    As discussed above, the matrix $ \mT_{ \frac{ \partial \hat{\vecz} }{ \partial \vecz }  } $ has a block-diagonal structure (after proper permutations) with block $m$ corresponding to the Jacobian matrix of $ h^{(m)} $.
    Moreover, the matrix $ \mT_{ \frac{ \partial \hat{\vecz} }{ \partial \hat{\bm\epsilon} } } $ is strictly diagonal due to the generating function Eq.~\eqref{eq:z_generation}.

    \paragraph{Step 2.}
    In this step, we simplify Eq.~\eqref{eq:jacobian_equivalence_all} to derive the relation between the estimated graph structures and true graph structures encoded in $ \mG_{\frac{ \partial \hat{\vecz} }{ \partial \hat{\vecz} } } $ and  $\mG_{\frac{ \partial \vecz }{ \partial \vecz } }$ respectively.

    First, we note that the $\mT_{\frac{ \partial \hat{\bm\epsilon} }{ \partial \vecz }}$ is also block-diagonal w.r.t. the groups.
    To see this, we compute the partial derivatives therein as follows:
    $
        \frac{ \partial \hat{\epsilon}^{(m)}_{i} }{ \partial z^{(n)}_{j} } = \frac{ \partial \hat{\epsilon}^{(m)}_{i} }{ \partial \hat{ \tilde{ z } }^{(m)}_{i} } \frac{ \partial \hat{ \tilde{ z } }^{(m)}_{i} }{ \partial z^{(n)}_{j} }
    $,
    where we denote that output of $ \hat{g}_{z} $ with $ \tilde{ \hat{z} } $ in the derivative.
    Due to the equivalent relation $ \vecz = \tilde{\vecz} $ (Eq.~\eqref{eq:z_generation}), we have $ \frac{ \partial \hat{ \tilde{ z } }^{(m)}_{i} }{ \partial z^{(n)}_{j} } = \frac{ \partial \hat{ z }^{(m)}_{i} }{ \partial z^{(n)}_{j} }$ which is zero for distinct groups $ m \neq n $ (Eq.~\eqref{eq:subspapce_map}).
    It follows that
    \begin{align}
        \frac{ \partial \hat{\epsilon}^{(m)}_{i} }{ \partial z^{(n)}_{j} } = 0, \, m \neq n.
    \end{align}
    Therefore, we have shown that $\mT_{\frac{ \partial \hat{\bm\epsilon} }{ \partial \vecz }}$ is block-diagonal w.r.t. the groups.
    
    This structure allows us to simplify Eq.~\eqref{eq:jacobian_equivalence_all} to directly characterize the relation between the two graphical structures $ \mG_{\frac{ \partial \hat{\vecz} }{ \partial \hat{\vecz} } } $ and  $\mG_{\frac{ \partial \vecz }{ \partial \vecz } }$.
    In particular, since $\mT_{\frac{ \partial \hat{\bm\epsilon} }{ \partial \vecz }}$ is block-diagonal and $ \mT_{ \frac{\partial \hat{\vecz}}{ \partial \hat{\bm\epsilon} } } $ is diagonal, the off-diagonal blocks on the left-hand side of Eq.~\eqref{eq:jacobian_equivalence_all} are determined by $ \mG_{\frac{ \partial \hat{\vecz} }{ \partial \hat{\vecz} }} \mT_{ \frac{ \partial \hat{\vecz} }{ \partial \vecz } } $.
    Therefore, it follows from Eq.~\eqref{eq:jacobian_equivalence_all}:
    \begin{align} \label{eq:off_diagonal_jacobian}
        \left[ \mG_{\frac{ \partial \hat{\vecz} }{ \partial \hat{\vecz} }} \mT_{ \frac{ \partial \hat{\vecz} }{ \partial \vecz } } \right]_{ (m), (n) } = \left[ \mT_{\frac{ \partial \hat{\vecz} }{ \partial \vecz }} \mG_{ \frac{ \partial \vecz }{ \partial \vecz } } \right]_{ (m), (n)}, \; m \neq n,
    \end{align}
    where we adopt subscripts $(m)$ to denote the block for group $m$.

    On account of the block-diagonal structure of $ \mT_{\frac{ \partial \hat{\vecz} }{ \partial \vecz }} $, the left-hand side of Eq.~\eqref{eq:off_diagonal_jacobian} can be expressed as follows:
    \begin{align} \label{eq:lhs_off_diagonal}
        \left[ \mG_{\frac{ \partial \hat{\vecz} }{ \partial \hat{\vecz} }} \mT_{ \frac{ \partial \hat{\vecz} }{ \partial \vecz } } \right]_{ (m), (n) } = \left[ \mG_{\frac{ \partial \hat{\vecz} }{ \partial \hat{\vecz} }} \right]_{(m), :} \left[ \mT_{ \frac{ \partial \hat{\vecz} }{ \partial \vecz } } \right]_{ :, (n) } =  \left[ \mG_{\frac{ \partial \hat{\vecz} }{ \partial \hat{\vecz} }} \right]_{(m), (n)} \left[ \mT_{ \frac{ \partial \hat{\vecz} }{ \partial \vecz } } \right]_{ (n), (n) }.
    \end{align}
    Analogously, the right-hand side of Eq.~\eqref{eq:off_diagonal_jacobian} can be expressed as:
    \begin{align} \label{eq:rhs_off_diagonal}
        \left[ \mT_{\frac{ \partial \hat{\vecz} }{ \partial \vecz }} \mG_{ \frac{ \partial \vecz }{ \partial \vecz } } \right]_{ (m), (n)} = \left[ \mT_{\frac{ \partial \hat{\vecz} }{ \partial \vecz }} \right]_{(m), :} \left[\mG_{ \frac{ \partial \vecz }{ \partial \vecz } } \right]_{ :, (n) } = \left[ \mT_{\frac{ \partial \hat{\vecz} }{ \partial \vecz }} \right]_{(m), (m)} \left[ \mG_{ \frac{ \partial \vecz }{ \partial \vecz } } \right]_{ (m), (n) }.
    \end{align}

    It follows from Eq.~\eqref{eq:off_diagonal_jacobian}, Eq.~\eqref{eq:lhs_off_diagonal}, and Eq.~\eqref{eq:rhs_off_diagonal} that
    \begin{align}
        &\left[ \mG_{\frac{ \partial \hat{\vecz} }{ \partial \hat{\vecz} }} \right]_{(m), (n)} \left[ \mT_{ \frac{ \partial \hat{\vecz} }{ \partial \vecz } } \right]_{ (n), (n) } = \left[ \mT_{\frac{ \partial \hat{\vecz} }{ \partial \vecz }} \right]_{(m), (m)} \left[ \mG_{ \frac{ \partial \vecz }{ \partial \vecz } } \right]_{ (m), (n) } \nonumber \\
        & \qquad \implies \nonumber \\
        &\left[ \mG_{\frac{ \partial \hat{\vecz} }{ \partial \hat{\vecz} }} \right]_{(m), (n)} = \left[ \mT_{\frac{ \partial \hat{\vecz} }{ \partial \vecz }} \right]_{(m), (m)} \left[ \mG_{ \frac{ \partial \vecz }{ \partial \vecz } } \right]_{ (m), (n) } \left[ \mT_{ \frac{ \partial \vecz }{ \partial \hat{\vecz} } } \right]_{ (n), (n) }. \label{eq:graph_relation}
    \end{align}

    Eq.~\eqref{eq:graph_relation} relates the true off-diagonal ($m\neq n$) structure $\left[ \mG_{ \frac{ \partial \vecz }{ \partial \vecz } } \right]_{ (m), (n) }$ and its estimated counterpart $\left[ \mG_{\frac{ \partial \hat{\vecz} }{ \partial \hat{\vecz} }} \right]_{(m), (n)} $.

    \paragraph{Step 3.}
    We now reason about the component-wise identifiability within each modality through the sparsity of the off-diagonal regions.

    The component-wise identifiability is equivalent to that each block sub-matrix $ \left[\mT_{ \frac{ \partial \hat{ \vecz } }{ \partial \vecz }  } \right]_{(m), (m)} $ is a generalized permutation matrix, each row/column of which contains only one nonzero element.
    Suppose that this was not the case, then it would follow from Eq.~\eqref{eq:graph_relation} and Condition~\ref{cond:component_identification} that
    \begin{align} \label{eq:sparsity_comparison_strict}
        \begin{split}
            \sum_{ m \neq n \in [M] } \norm{
                \left[ \mG_{ \frac{ \partial \hat{\vecz} }{ \partial \hat{\vecz }} } \right]_{ (m), (n) }
            }_{0}
            &= \sum_{ m \neq n \in [M] } 
            \norm{
                \left[ \mT_{\frac{ \partial \hat{\vecz} }{ \partial \vecz }} \right]_{(m), (m)} \left[ \mG_{ \frac{ \partial \vecz }{ \partial \vecz } } \right]_{ (m), (n) } \left[ \mT_{ \frac{ \partial \vecz }{ \partial \hat{\vecz} } } \right]_{ (n), (n) } 
            }_{0} \\
            &\underbrace{>}_{ \text{Condition~\ref{cond:component_identification}}} \sum_{ m \neq n \in [M] } 
            \norm{
                 \left[ \mG_{ \frac{ \partial \vecz }{ \partial \vecz } } \right]_{ (m), (n) } 
            }_{0},
        \end{split}
    \end{align}
    which would violate the sparsity constraint Eq.~\eqref{eq:sparsity_constraint}.

    That is, the component $ \hat{z}^{(m)}_{ \hat{i} } $ cannot functionally influence components in $U^{(m)}$ other than $ \vecz^{(m)}_{i} $.
    Therefore, we have shown that each block sub-matrix $ \left[\mT_{ \frac{ \partial \hat{ \vecz } }{ \partial \vecz }  } \right]_{(m), (m)} $ is a generalized permutation matrix. Consequently, we have a bijection $ \hat{z}^{(m)}_{ \hat{i} } = h^{(m)}_{ \hat{i} } ( z^{(m)}_{i} ) $.
    Since this holds for any group $m$ and any component $i$, we have arrived at the desired conclusion.
    
\end{proof}

\subsection{Extended Theorem~\ref{thm:subspace_identification} and its Proof} \label{ap:extended_subspace_theorem}

We restate Theorem~\ref{thm:yao_shared_block_identification} from \citet{yao2023multi}, which we invoke in our Theorem~\ref{thm:generalized_subspace_identification}.
We drop the entropy regularization term in \citet{yao2023multi}, since we assume the invertibility of estimated functions $ \hat{g}^{(m)} $ directly.

\begin{definition}[View-Specific Encoders]
    \label{defn:view_specific_encoders}
        The \emph{view-specific encoders} $R := \{r_k:\gX_k\to \gZ_{S_k}\}_{k \in V}$ consist of smooth functions mapping from the respective observation spaces to the view-specific latent space, where the dimension of the %
        $k$\textsuperscript{th} latent space $\abs{S_k}$ is assumed known for all $k \in V$.
\end{definition}

\begin{definition}[Selection]
    \label{defn:selection}
        A selection $\oslash$ operates between two vectors $a \in \{0, 1\}^d\, , b \in \R^d$ s.t.
        \begin{equation*}
            \textstyle
            a \oslash b := [b_j: a_j = 1, j \in [d]]
        \end{equation*}
    \end{definition}

\begin{definition}[Content Selectors]
    \label{defn:content_selectors} The content selectors $\Phi:= \{\phi (i, k) \}_{V_i \in \gV, k \in V_i}$ with $\phi^{(i, k)} \in\{0,1\}^{| d(\vecz^{(m)}) |}$ perform selection~\ref{defn:selection} on the encoded information: for any subset $V_i \subset [M] $ and view $k \in V_i$ we have the selected representation:
    $\phi(i, m) \oslash \hat{\vecz}^{(m)}$ with $\norm{\phi^{(i, k)}}_0 = \norm{\phi^{(i, k')}}_0$ for all $V_i \in \gV, k, k' \in V_i$.
\end{definition}

\begin{definition}[Information-Sharing Regularizer]
\label{defn:info_sharing_reg}
    The following regularizer penalizes the $\ell_0$-norm~$\norm{\cdot}_0$ of the content selectors $\Phi$:
    $
        \textstyle
        \label{eq:reGax_information_sharing}
        \mathrm{Reg}(\Phi) := - \sum_{V_i \in \gV} \sum_{k \in V_i} 
        \norm{\phi^{(i, k)}}_0\,.
    $
\end{definition}

\begin{restatable}[View-Specific Encoder for Identifiability~\citep{yao2023multi}]{theorem}{unifEnc}
    \label{thm:yao_shared_block_identification}
    Let $ R:= \{ \hat{g}_{(m)} \}_{m=1}^{M} $ and $\Phi$ respectively be the generating functions and content selectors (Definition~\ref{defn:content_selectors}) that solve the following constrained optimization problem:
    \begin{equation}
    \label{eq:main_loss_set_size_unknown}
    \min\, \mathrm{Reg}(\Phi) \qquad  \text{subject to:} \qquad  R, \Phi \in \argmin_{} \,\gL_{ \text{alignment} } \left(R, \Phi \right),
    \end{equation}
    where
    \begin{equation}\label{eq:main_loss_set_size_unknown_no_reg}
        \textstyle
        \begin{aligned}
        \gL_{ \text{alignment} }
        \left(R, \Phi \right)
        =&
        \sum_{V_i \in \gV} \sum_{\substack{m_{1}, m_{2} \in V_i \\ k < k'}}
        \E
        \left[\norm{\phi(i, m_{1}) \oslash [\hat{g}^{(m_{1})}]^{-1}(\vecx_{k}) - \phi(i, m_{2}) \oslash [\hat{g}^{m_{2}}]^{-1} (\vecx_{m_{2}})}_2\right]
        \end{aligned}
    \end{equation}
    Then for any subset of modalities $V_{i} \subset [M]$ and any modality $m \in V_i$\,, $\phi(i, m) \oslash [\hat{g}^{(m)}]^{-1}$ identifies the shared subspace $ \vecz^{ \left(\cap_{m \in V_{i}} m \right)} $. 
\end{restatable}

\begin{definition}[Reconstruction Loss]
    \label{defn:reconstruction_loss}
        The following loss penalizes the deviation of the estimate $\hat{\vecx} $ and its corresponding true counterpart $ \vecx $ in $\ell_{2}$
        $
            L_{\text{recons}} := \E_{ \vecx } \left( \vecx - \hat{\vecx} \right).
        $
    \end{definition}

\paragraph{Additional notations.}
We slightly abuse the notation to denote both sets and vectors with bold symbols $ \vecz $.
Let $ \vecz^{(m \cap n)} $ be the set of latent components shared by modality $m$ and $n$, i.e., $ \vecz^{(m \cap n)}: = \vecz^{(m)} \cap \vecz^{(n)} $. 
Analogously, let $ \vecz^{(m \setminus n)} $ be the set of latent components in modality $m$ that are not shared by $n$, i.e., $ \vecz^{(m \setminus n)}: = \vecz^{(m)} \setminus \vecz^{(n)} $.

\begin{restatable}[Generalized Subspace Identifiability]{theorem}{generalizedsubspaceidentification}\label{thm:generalized_subspace_identification}
    Let $ \{(g_{\vecx^{(m)}}, \tilde{g}_{ \vecx^{(-m)}})\}_{m=1}^{M} $ and $ \{(\hat{g}_{\vecx^{(m)}}, \hat{\tilde{g}}_{ \vecx^{(-m)}})\}_{m=1}^{M} $ be two specifications of the generating process Eq.~\eqref{eq:basis_generating} with potentially shared variables $ \vecz^{(m \cap n)} $ over any two modalities $m$ and $n$.
    Suppose that they both match the observational distribution $ p(\vecx) = \hat{p} (\vecx) $ and satisfy Condition~\ref{cond:subspace_identification}. 
    Then any subspace $\hat{\vecz}^{(m)} $, shared subspace $ \hat{\vecz}^{(m \cap n)} $ and their counterparts $\vecz^{(m)} $, $\vecz^{(m \cap n)} $ are equivalent up to invertible maps.
\end{restatable}

\begin{proof}

We note that the latent model with shared latent variables across modalities can still be cast into Equation~\eqref{eq:basis_generating} and satisfies Condition~\ref{cond:subspace_identification}.
As a consequence, Theorem~\ref{thm:subspace_identification} gives us the subspace identification for each modality as in the disjoint case.
Moreover, we can identify any blocks among modalities thanks to Theorem~\ref{thm:yao_shared_block_identification}.
This concludes the proof.

\end{proof}

\subsection{Extended Theorem~\ref{thm:component_identification} and its Proof}\label{ap:extended_component_theorem}

\paragraph{Additional notations and discussion.}
The participation of multiple modalities requires a new definition of the shared blocks in $ \vecz $ since the sharing structure could be nested and various numbers of modalities could share one partition.
We partition the entire latent space $ \vecz $ into disjoint blocks $ \{ \vecz^{(b)} \}_{b \in B} $, whose components $z$ have exactly the same modality membership $ \gM (z) := \{ m \in [M]: z \in \vecz^{(m)} \} $.
We define the $ \vecz^{H(b)} $ as the smallest (the least components) \textit{identified} block in $ \vecz $ that contains $ \vecz^{(b)} $.
In the two-modal case, we have $ B = \{ (m \cap n), (m \setminus n), (n \setminus m) \}$ and  $\vecz^{H(m \setminus n)} = \vecz^{(m)}$.

We denote $ \vecz^{(b_{1})} \prec \vecz^{(b_{2})} $ if block $ \vecz^{(b_{1})} $ is shared by a strict subset of modalities that share $ \vecz^{(b_{2})} $, i.e., $ \gM ( \vecz^{(b_{1})} ) \subsetneq \gM ( \vecz^{(b_{2})} ) $.
Therefore, we have either $ \vecz^{H(b)} = \vecz^{(b)} $ (it is identifiable itself) or $ \vecz^{(b)} \prec \vecz^{H(b)} \setminus \vecz^{(b)} $ (it is not identifiable by itself but belongs to an identifiable block $ \vecz^{H(b)} $ together with a more deeply shared block $\vecz^{H(b)} \setminus \vecz^{(b)}$).
We denote former blocks as $ b^{+} \in B^{+} $, i.e.,  $ \vecz^{H(b^{+})} = \vecz^{(b^{+})} $, and the latter blocks as $b^{-} \in B^{-} = B \setminus B^{+} $.
In the two-modal case, the modal-specific blocks $ \vecz^{ (m \setminus n)} $, $\vecz^{ (n \setminus m)}$ are not identifiable themselves, and we have $ \vecz^{ (m \setminus n)} \prec \vecz^{ (m \cap n)} $ and $ \vecz^{ (n \setminus m)} \prec \vecz^{ (m \cap n)} $, and $ B^{+} =\{ (m \cap n) \} $ and $ B^{-} = \{ (m \setminus n), (n \setminus m) \} $.    

We note that all shared blocks $ \vecz^{(b^{+})} $ are identified. Thus their bijective indeterminacies are w.r.t, themselves, i.e., $ \vecz^{(b^{+})} \mapsto \hat{\vecz}^{(b^{+})} $, which implies the square shape of their indeterminacy matrices $ [\mT_{ \frac{ \partial \hat{\vecz} }{ \partial \vecz } }]_{( b^{+}), ( b^{+} )}$.
In contrast, the unidentifiable blocks $ \hat{\vecz}^{(b^{-})} $ can potentially receive the influence from all other blocks in its minimal block $ \vecz^{H(b^{-})} $. However, $\vecz^{(b^{-})}$ do not influence the complement block $ \hat{\vecz}^{H(b^{-})} \setminus \hat{\vecz}^{(b^{-})} $, since $ \vecz^{(b^{-})} \prec \vecz^{H(b^{-})} \setminus \vecz^{(b^{-})} $.
For instance, $ \hat{\vecz}^{(m \setminus n)} $ may receive influences from $ \vecz^{(m \cap n)} $ and $ \vecz^{{(m \setminus n)}} $.
Consequently, their associated non-trivial indeterminacy matrices are $ [\mT_{ \frac{ \partial \hat{\vecz} }{ \partial \vecz } }]_{( b^{-} ), H(b^{-})} $ (e.g., $ [\mT_{ \frac{ \partial \hat{\vecz} }{ \partial \vecz } }]_{ (m \setminus n), (m) } $).
Thus, the indeterminacy matrix $ \mT $ can be expressed as $ \mT := \mT_{\text{on}} + \mT_{\text{off}}$, where The matrix $ \mT_{ \text{on} } $ contains all the on-diagonal square invertible matrices $ \mT_{ \text{on} } := \text{diag} ( \mT_{\text{on}}^{1}, \dots, \mT_{\text{on}}^{(\abs{B})} )$ and $ \mT_{\text{off} } $ contains all the off-diagonal elements potentially nonzero in the regions $ ( b, H(b) \setminus b ) $ for $b \in B$.
We denote this class of matrix $\mT$ as $ \gT $.
We denote a set of blocks $E(b)$ whose memberships are either a strict superset or do not nest with $b$'s membership $ E(b): = \{ \tilde{b} \in B | \vecz^{(b)} \prec \vecz^{(\tilde{b})} \lor \left( \gM(\vecz^{(b)} \not\subset \gM(\vecz^{(\tilde{b})} ) \land \gM(\vecz^{(\tilde{b})} \not\subset \gM(\vecz^{(b)} ) \right) \}$.
In the two-modality case, we have $ E( m \setminus n ) = \{ (m \cap n), (n \setminus m) \} $.
The regions $ \{ ( E(b), b ) \}_{b \in B} $ in the alternative graph $ \hat{\mG} $ reveal identifiability of the latent variables.
With these notations, we state the generalized component identification result in Theorem~\ref{thm:generalized_component_identification}.

\begin{restatable}[Generalized Component Identifiability Conditions]{condition}{generalizedcomponentconditions} \label{cond:generalized_component_identification} {\ }
    Over the domain of $ (\vecz, \bm\epsilon) $, for any modality $m$, for $ \mT \in \gT$ and $\mT \not\in \gP(d(\vecz))$, we have
    \begin{align} \label{eq:generalized_global_density}
        \sum_{ b \in B, \tilde{b} \in E(b) } 
            \norm{
                [\mT^{-1}]_{ (\tilde{b}), H(\tilde{b}) } \left[ \mG \right]_{ H(\tilde{b} ), (b) } [\mT]_{(b), (b)} 
            }_{0} > \sum_{ b \in B, \tilde{b} \in E(b) }
            \norm{
                 \left[ \mG \right]_{ (\tilde{b}), (b) } 
            }_{0}.        
    \end{align}
\end{restatable}
Notice that at the absence of the shared block $ (m \cap n) $, Condition~\ref{cond:generalized_component_identification} recovers Condition~\ref{cond:component_identification} where $ E(m) = B \setminus \{m\} $ and $ H( m \setminus n ) = m $, and $ H( n \setminus m ) = n $.

\begin{restatable}[Generalized Component-wise Identifiability]{theorem}{generalizedcomponentidentification}\label{thm:generalized_component_identification}
    Let $ \bm\theta := ( \{ g_{\vecx^{(m)}}, g_{\vecz^{(m)}}, p( \bm\epsilon^{(m)} ) \}_{m=1}^{M} ) $ and $ \hat{\bm\theta} := ( \{ \hat{g}_{\vecx^{(m)}}, \hat{g}_{\vecz^{(m)}}, \hat{p}( \bm\epsilon^{(m)} ) \}_{m=1}^{M} ) $ be two specifications of the data-generating process in Eq.~\eqref{eq:z_generation} and Eq.~\eqref{eq:x_generation} with potentially shared variables $ \vecz^{(m \cap n)} $ over any modalities $m$ and $n$.
    Suppose that they generate identical observational distributions (i.e., $ p( \vecx ) = \hat{p}( \vecx )) $ and $ \bm\theta $ satisfies Condition~\ref{cond:subspace_identification} and Condition~\ref{cond:generalized_component_identification}.
    If $ \hat{\bm\theta} $ satisfies the following condition: 
    \begin{align} \label{eq:sparsity_constraint_generalized}
        \sum_{ b \in B, \tilde{b} \in E(b) } \norm{ [\hat{\mG}]_{ (\tilde{b}), (b) } }_{0} \leq \sum_{ b \in B, \tilde{b} \in E(b) } \norm{ [ \mG ]_{ ( \tilde{b} ), ( b ) } }_{0},
    \end{align}
    each component $z^{(m)}_{i} $ and its counterpart $\hat{z}^{(m)}_{ \pi(i) } $ are equivalent up to an invertible map $h(\cdot)$, i.e., $ \hat{z}^{(m)}_{ \pi(i) }  = h ( z^{(m)}_{i} )$ under a permutation $ \pi $ over $ [d (\vecz^{(m)})] $.
\end{restatable}

\begin{proof}
    This proof closely follows that of Theorem~\ref{thm:component_identification}.
    We illustrate the key discrepancies as follows.

    We start with only two modalities $ \vecz^{(m)} $ and $ \vecz^{(n)} $ for simplicity and then move on to general cases.

    \paragraph{The structure of the indeterminacy matrix $ \mT_{ \frac{ \partial \hat{ \vecz } }{ \partial \vecz } } $.}
    Identical to Equation~\ref{eq:jacobian_equivalence_all}, we have the relationship between Jacobian matrices:
    \begin{align} \label{eq:jacobian_equivalence_all_extension}
        \begin{bmatrix}
            \mG_{ \frac{ \partial \hat{\vecz} }{ \partial \hat{\vecz} } } & \mT_{ \frac{ \partial \hat{\vecz} }{ \partial \hat{\bm\epsilon} } }
        \end{bmatrix}
        \begin{bmatrix}
            \mT_{ \frac{ \partial \hat{\vecz} }{ \partial \vecz }  } \\
            \mT_{\frac{ \partial \hat{\bm\epsilon} }{ \partial \vecz }}
        \end{bmatrix}
        =
        \mT_{ \frac{ \partial \hat{\vecz} }{ \partial \vecz } }
        \mG_{\frac{ \partial \vecz }{ \partial \vecz }}.
    \end{align}
    The presence of the shared block $ \vecz^{(m \cap n)} $ alters the indeterminacy matrix $ \mT_{ \frac{ \partial \hat{ \vecz } }{ \partial \vecz } } $ -- instead of the disjoint diagonal-block shape, $ \mT_{ \frac{ \partial \hat{ \vecz } }{ \partial \vecz } } $, the columns belonging to the shared variables $ \vecz^{(m \cap n)} $ (shared between two modalities) are possibly nonzero over rows belonging to $ \vecz^{(m \cap n)} $. 
    That is, the shared variables $ \vecz^{(m \cap n)} $ can still mix in the estimates of the two individual parts $ \hat{\vecz}^{(m \setminus n)} $ and $ \hat{\vecz}^{(n \setminus m)} $. 
    However, since we have identified the subspace of $ \vecz^{ (m \cap n) } $, its estimates would not contain information of the individual blocks $ \vecz^{(m \setminus n)} $ and $ \vecz^{(n \setminus m)} $, rendering the blocks $ \frac{ \partial \hat{\vecz}^{ (m \cap n) } }{ \partial \vecz^{(m \setminus n)} } = 0$ and $ \frac{ \partial \hat{\vecz}^{ (m \cap n) } }{ \partial \vecz^{(n \setminus m)} } = 0$. 
    
    \paragraph{The sparse connection among modalities.}
    The reasoning in \textbf{Step 2} in the proof of Theorem~\ref{thm:component_identification} implies that the structure of the matrix $ \mT_{ \frac{ \partial \hat{ \bm\epsilon } }{ \partial \vecz } } $ is consistent with that of the matrix $ \mT_{ \frac{ \partial \hat{\vecz}  }{ \partial \vecz } } $. 
    That is, they have zero block matrices at the same positions. 
    In particular, since the subspace identifiability in Theorem~\ref{thm:generalized_subspace_identification} implies that the estimated shared variable $ \hat{\vecz}^{(m \cap n)} $ and the modality-specific variable $ \hat{\vecz}^{(n \setminus m)} $ are not influenced by the other modality-specific variables $ \vecz^{(m \setminus n)} $, the same applies to the estimated exogenous variable $ \hat{\bm\epsilon}^{(m\cap n)} $ and $\hat{\bm\epsilon}^{ (m\setminus n) } $.
    This structure permits us to disregard $ \mT_{ \frac{ \partial \hat{ \vecz } }{ \partial \hat{\bm\epsilon} } } $ (an identity matrix) and $ \mT_{ \frac{ \partial \hat{ \bm\epsilon } }{ \partial \vecz } } $ on the left-hand side of Eq.~\eqref{eq:jacobian_equivalence_all_extension} when computing a sub-matrix of the right-hand side product:
    \begin{align} \label{eq:connection_regions}
        \left[ \mG_{\frac{ \partial \hat{\vecz} }{ \partial \hat{\vecz} }} \mT_{ \frac{ \partial \hat{\vecz} }{ \partial \vecz } } \right]_{ (n), (m \setminus n) } = \left[ \mT_{\frac{ \partial \hat{\vecz} }{ \partial \vecz }} \mG_{ \frac{ \partial \vecz }{ \partial \vecz } } \right]_{ (n), (m\setminus n)}.
    \end{align}

    We further divide the block $ [(n), (m \setminus n) ]$ into two blocks along their rows: $ [ (m \cap n), (m \setminus n) ] $ and $ [ (n \setminus m), (m \setminus n) ] $ that represent the influence from $ \vecz^{(m \setminus n)} $ to $ \hat{\vecz}^{(m \cap n)} $ and $ \hat{ \vecz }^{( n \setminus  m)} $.
    
    Expressing the block $ [ (m \cap n), (m \setminus n) ] $ on the left-hand side of Eq.~\eqref{eq:connection_regions} gives:
    \begin{align} \label{eq:lhs_individual_shared}
        \left[ \mG_{\frac{ \partial \hat{\vecz} }{ \partial \hat{\vecz} }} \mT_{ \frac{ \partial \hat{\vecz} }{ \partial \vecz } } \right]_{ (m \cap n), (m \setminus n) } 
        = \left[ \mG_{\frac{ \partial \hat{\vecz} }{ \partial \hat{\vecz} }} \right]_{(m \cap n), :} \left[ \mT_{ \frac{ \partial \hat{\vecz} }{ \partial \vecz } } \right]_{ :, (m \setminus n) } 
        = \left[ \mG_{\frac{ \partial \hat{\vecz} }{ \partial \hat{\vecz} }} \right]_{(m \cap n), (m \setminus n)} \left[ \mT_{ \frac{ \partial \hat{\vecz} }{ \partial \vecz } } \right]_{ (m \setminus n), (m \setminus n) }.
    \end{align}

    Analogously, this block on the right-hand side of Eq.~\eqref{eq:connection_regions} can be expressed as:
    \begin{align} \label{eq:rhs_individual_shared}
        \left[ \mT_{\frac{ \partial \hat{\vecz} }{ \partial \vecz }} \mG_{ \frac{ \partial \vecz }{ \partial \vecz } } \right]_{ (m \cap n), (m \setminus n)} 
        = \left[ \mT_{\frac{ \partial \hat{\vecz} }{ \partial \vecz }} \right]_{(m \cap n), :} \left[\mG_{ \frac{ \partial \vecz }{ \partial \vecz } } \right]_{ :, (m \setminus n) } 
        = \left[ \mT_{\frac{ \partial \hat{\vecz} }{ \partial \vecz }} \right]_{( m \cap n ), ( m \cap n )} \left[ \mG_{ \frac{ \partial \vecz }{ \partial \vecz } } \right]_{ (m \cap n), ( m \setminus n ) }.
    \end{align}

    Thus, we have the equality for the block $ [ (m \cap n), (m \setminus n) ] $:
    \begin{align}
        &\left[ \mG_{\frac{ \partial \hat{\vecz} }{ \partial \hat{\vecz} }} \right]_{ (m \cap n), (m \setminus n) } 
        \left[ \mT_{ \frac{ \partial \hat{\vecz} }{ \partial \vecz } } \right]_{ (m \setminus n), (m \setminus n) } 
        = \left[ \mT_{\frac{ \partial \hat{\vecz} }{ \partial \vecz }} \right]_{( m \cap n ), ( m \cap n )} 
        \left[ \mG_{ \frac{ \partial \vecz }{ \partial \vecz } } \right]_{ (m \cap n), ( m \setminus n ) } \nonumber \\
        & \qquad \implies \nonumber \\
        &\left[ \mG_{\frac{ \partial \hat{\vecz} }{ \partial \hat{\vecz} }} \right]_{(m \cap n), (m \setminus n)} 
        = \left[ \mT_{\frac{ \partial \hat{\vecz} }{ \partial \vecz }} \right]_{(m \cap n), (m \cap n) } 
        \left[ \mG_{ \frac{ \partial \vecz }{ \partial \vecz } } \right]_{(m \cap n), ( m \setminus n )} \left[ \mT_{ \frac{ \partial \vecz }{ \partial \hat{\vecz} } } \right]_{(m \setminus n), (m \setminus n)}. \label{eq:relation_individual_shared}
    \end{align}
    This graphical relation is identical to that in Eq.~\eqref{eq:graph_relation}.
    
    However, the relation for the block $ [ (n \setminus m), (m \setminus n) ] $ between two modality-specific parts varies, due to the potential mixing of the shared part into these blocks, which may increase the inbound edges (not outbound edges), as we show below.
    
    For the block $ [ (n \setminus m), (m \setminus n) ] $ on the left-hand side of Eq.~\eqref{eq:connection_regions} gives:
    \begin{align} \label{eq:lhs_individual_individual}
        \left[ \mG_{\frac{ \partial \hat{\vecz} }{ \partial \hat{\vecz} }} \mT_{ \frac{ \partial \hat{\vecz} }{ \partial \vecz } } \right]_{ (n \setminus m), (m \setminus n) } 
        = \left[ \mG_{\frac{ \partial \hat{\vecz} }{ \partial \hat{\vecz} }} \right]_{(n \setminus m), :} \left[ \mT_{ \frac{ \partial \hat{\vecz} }{ \partial \vecz } } \right]_{ :, (m \setminus n) } 
        = \left[ \mG_{\frac{ \partial \hat{\vecz} }{ \partial \hat{\vecz} }} \right]_{(n \setminus m),  ( m \setminus n )} \left[ \mT_{ \frac{ \partial \hat{\vecz} }{ \partial \vecz } } \right]_{ (m \setminus n), (m \setminus n) }.
    \end{align}

    Unlike previous cases, the right-hand side of Eq.~\eqref{eq:connection_regions} for the block involves more than atomic blocks (i.e., it involves the entire modality $ (n) $):
    \begin{align} \label{eq:rhs_individual_individual}
        \left[ \mT_{\frac{ \partial \hat{\vecz} }{ \partial \vecz }} \mG_{ \frac{ \partial \vecz }{ \partial \vecz } } \right]_{ (n \setminus m), (m \setminus n)} 
        = \left[ \mT_{\frac{ \partial \hat{\vecz} }{ \partial \vecz }} \right]_{(n \setminus m), :} \left[\mG_{ \frac{ \partial \vecz }{ \partial \vecz } } \right]_{ :, (m \setminus n) } 
        = \left[ \mT_{\frac{ \partial \hat{\vecz} }{ \partial \vecz }} \right]_{(n \setminus m), (n)} 
        \left[ \mG_{ \frac{ \partial \vecz }{ \partial \vecz } } \right]_{ (n), ( m \setminus n ) }.
    \end{align}

    Then, it follows from Eq.~\eqref{eq:lhs_individual_individual} and Eq.~\eqref{eq:rhs_individual_individual} that
    \begin{align}
        &\left[ \mG_{\frac{ \partial \hat{\vecz} }{ \partial \hat{\vecz} }} \right]_{(n \setminus m),  ( m \setminus n )}
        \left[ \mT_{ \frac{ \partial \hat{\vecz} }{ \partial \vecz } } \right]_{ (m \setminus n), (m \setminus n) }
        = \left[ \mT_{\frac{ \partial \hat{\vecz} }{ \partial \vecz }} \right]_{(n \setminus m), (n)} 
        \left[ \mG_{ \frac{ \partial \vecz }{ \partial \vecz } } \right]_{ (n), ( m \setminus n ) } 
        \nonumber \\
        & \qquad \implies \nonumber \\
        &\left[ \mG_{\frac{ \partial \hat{\vecz} }{ \partial \hat{\vecz} }} \right]_{(n \setminus m),  ( m \setminus n )}
        = \left[ \mT_{\frac{ \partial \hat{\vecz} }{ \partial \vecz }} \right]_{(n \setminus m), (n)} 
        \left[ \mG_{ \frac{ \partial \vecz }{ \partial \vecz } } \right]_{ (n), ( m \setminus n ) } \left[ \mT_{ \frac{ \partial \vecz }{ \partial \hat{\vecz} } } \right]_{ ( m \setminus n ), ( m \setminus n )}. \label{eq:relation_individual_individual}
    \end{align}

    We can observe that the existence of the shared variables $ \vecz^{(m \cap n)} $ divides the latent space into finer blocks $ \vecz^{(m \setminus n)} $, $ \vecz^{(n \setminus m)} $, and $ \vecz^{(m \cap n)} $.
    Eq.~\eqref{eq:relation_individual_shared} and Eq.~\eqref{eq:relation_individual_individual} reveal that the bijective indeterminacy relation hold over these finer blocks, exception for the non-square transition matrix $ \left[ \mT_{\frac{ \partial \hat{\vecz} }{ \partial \vecz }} \right]_{(n \setminus m), (n)} $ on the right-hand side of Eq.~\eqref{eq:relation_individual_individual}.
    This is because that the shared part $ \vecz^{ (m \cap n) } $ can potentially mix in $ \hat{\vecz}^{ (n \setminus m) } $, so $ \hat{\vecz}^{ (n \setminus m) } $ may receive edges inbound to $ \vecz^{ (m \cap n) } $.

    \paragraph{Interplay among multiple modalities.}
    In light of the graphical condition for the two-modality case (Eq.~\eqref{eq:relation_individual_shared} and Eq.~\eqref{eq:relation_individual_individual}), we can derive the conditions for the multi-modality case.

    Specifically, we classify the blocks in the estimation graph $ \hat{\mG}_{ \frac{ \partial \hat{\vecz} }{ \partial \hat{\vecz} } } $ into the following categories for two distinct atomic blocks $ b_{1} $ and $ b_{2} $.
    \begin{enumerate}[label=Region \arabic*]
        \item: 
        Blocks $b_{1}$ and $ b_{2} $ do not have nested memberships, i.e., $ \gM( \vecz^{(b_{1})} ) \not\subset \gM( \vecz^{(b_{2})} ) $ and $ \gM( \vecz^{(b_{2})} ) \not\subset \gM( \vecz^{(b_{1})} ) $;
        \label{sparsity_regions:no_common}
        \item: Block $b_{1}$ has fewer memberships than block $ b_{2} $: $ \vecz^{(b_{1})} \prec \vecz^{(b_{2})}$;
        \label{sparsity_regions:common_downstream_shared}
        \item: 
        Block $b_{1}$ has more memberships than block $ b_{2} $: $ \vecz^{(b_{2})} \prec \vecz^{(b_{1})}$.
        \label{sparsity_regions:common_downstream_individual}
    \end{enumerate}

    Eq.~\eqref{eq:relation_individual_shared} and Eq.~\eqref{eq:relation_individual_individual} reveal that the sparsity for \ref{sparsity_regions:no_common} and \ref{sparsity_regions:common_downstream_shared} is informative, whereas \ref{sparsity_regions:common_downstream_individual} is not.
    This is because in these the inherent indeterminacy from the subspace identifiability within each modality (Theorem~\ref{thm:subspace_identification}) will engage the product $ \mT_{ \frac{ \partial \hat{\vecz} }{ \partial \hat{\bm\epsilon} } } \mT_{ \frac{ \partial \hat{\bm\epsilon} }{ \partial \vecz } } $ in Eq.~\eqref{eq:jacobian_equivalence_all_extension} in addition to the sparsity in the estimated graph $ \mG_{ \frac{ \partial \hat{\vecz} }{ \partial \hat{\vecz} } } $.

    \paragraph{Overall conditions.}
    Consolidating all the considerations above, we re-define objects in Condition~\ref{cond:component_identification} as follows.
    \begin{enumerate}[leftmargin=*]
        
        \item The indeterminacy matrix $ \mT := \mT_{\text{on}} + \mT_{\text{off}} $ is not strictly block-diagonal: The matrix $ \mT_{ \text{on} } $ contains all the on-diagonal square invertible matrices $ \mT_{ \text{on} } := \text{diag} ( \mT_{b_{1}}, \dots, \mT_{b_{\abs{B}}} )$ and $ \mT_{\text{off} } $ contains all the off-diagonal elements potentially nonzero in the regions $ ( b, H(b) \setminus b ) $ for $b \in B$.
        Also, the matrix multiplication becomes $ \left[ \mT_{\frac{ \partial \hat{\vecz} }{ \partial \vecz }} \right]_{(\tilde{b}), H(\tilde{b}) } 
        \left[ \mG_{ \frac{ \partial \vecz }{ \partial \vecz } } \right]_{ H(\tilde{b}), ( b ) } \left[ \mT_{ \frac{ \partial \vecz }{ \partial \hat{\vecz} } } \right]_{ ( b ), ( b )} $ as a unified expression of Eq.~\eqref{eq:relation_individual_shared} and Eq.~\eqref{eq:relation_individual_individual}.

        \item The sub-matrices on which we impose the sparsity controls are exactly the union of \ref{sparsity_regions:common_downstream_shared} and \ref{sparsity_regions:no_common}, i.e., the complement of \ref{sparsity_regions:common_downstream_individual}.
        We denote such a region as the function of the block index $ (E(b), b) $ for each $b \in B $. 
    \end{enumerate}

    With these modifications, the rest of the proof follows exactly from that of Theorem~\ref{thm:component_identification}.

\end{proof}

\section{Experimental Details}
\subsection{Numerical Dataset}\label{app:numerical}
We use six numerical datasets in this paper, including three multimodal datasets that satisfy our assumptions and three that slightly violate the sparsity assumptions in the proposed theorems.

\paragraph{Multi-modality settings}
We generate $n=10000$ samples according to Eq.~\eqref{eq:z_generation} and Eq.~\eqref{eq:x_generation}.
Following prior work~\citep{von2021self,yao2021learning,zimmermann2021contrastive}, we generate observations using a multi-layer perceptron (MLP). 
Specifically, the mixing function $g$ is modeled as a three-layer MLP with randomly initialized weights and leaky ReLU activations, enabling $g$ to represent a general nonparametric mixing function.
The causal noise terms $\epsilon$ are independently and identically distributed (i.i.d.), and the exogenous variables are mutually independent. 
Sparse inter-modality causal dependencies are randomly generated, ensuring that each modality's latent variables maintain at least one causal connection with another modality.

\paragraph{Ablation settings} 
For the ablation study, we generate two modality observations under different sparsity ratios.
Each observation is generated from three causally related latent variables and one exogenous variable. 
The sample size for each dataset is set to $n=10000$, and the dimensionality of the observations in each modality is $d(\vecx)=20$.
The sparsity ratio determines the extent of inter-modality connections among these latent variables.
A higher sparsity ratio leads to a sparser causal structure, meaning fewer causal connections between latent variables. 
Conversely, a lower sparsity ratio yields a denser causal matrix with more causal dependencies. 
For example, a sparsity ratio of 0\% indicates that all inter-modality latent variables are fully connected, whereas a sparsity ratio of 50\% implies that half of the possible causal edges are removed.

\subsection{Synthetic Dataset}\label{app:synthetic}
\paragraph{Variant MNIST} 
In real-world scenarios, the ground-truth latent processes are often unknown, making it challenging to evaluate model performance. 
To address this, we construct a synthetic dataset based on the real image dataset MNIST~\citep{lecun1998mnist} with known causal relationships, which supports the setting considered in our work.
Our synthetic dataset consists of two modalities, each with latent variables that exhibit causal relationships. 
The design is flexible.
The modalities could correspond to different MNIST variants, such as colored MNIST~\citep{arjovsky2019invariant} or fashion MNIST~\citep{xiao2017fashion}. 
The causally related latent variables could be, for example, digit identity, image color, clothing category, image rotation, etc.

In order to make the synthetic setting more intuitive, we introduce an alternative setting: object position acts as a latent variable that influences the appearance of MNIST images. 
Across different modalities, such causal influence may vary. 
Furthermore, position in modality 1 may causally influence position in modality 2, which aligns with the data generation process in our work.
For example, the horizontal position of a digit — such as the six — directly influences the transparency of the MNIST image. 
This horizontal position then serves as a causal factor for the vertical position of shoes in the fashion MNIST, which in turn affects the grayscale intensity of the shoe image.
To systematically evaluate the performance of our algorithm under different observational conditions, we consider three variations in colored MNIST, where the digits are assigned one of three colors: red, green, or blue. 
These relationships are visually illustrated in Figure~\ref{fig:realword} (a) for clarity.

\subsection{Real-world Dataset}\label{app:realword}
In this paper, we consider three types of datasets, including image, time series, and tabular data.
Visualizations of the image and time-series datasets are shown in Figure~\ref{fig:realword} (b-c).

\begin{figure}[ht]
\centering
\includegraphics[width=0.95\linewidth]{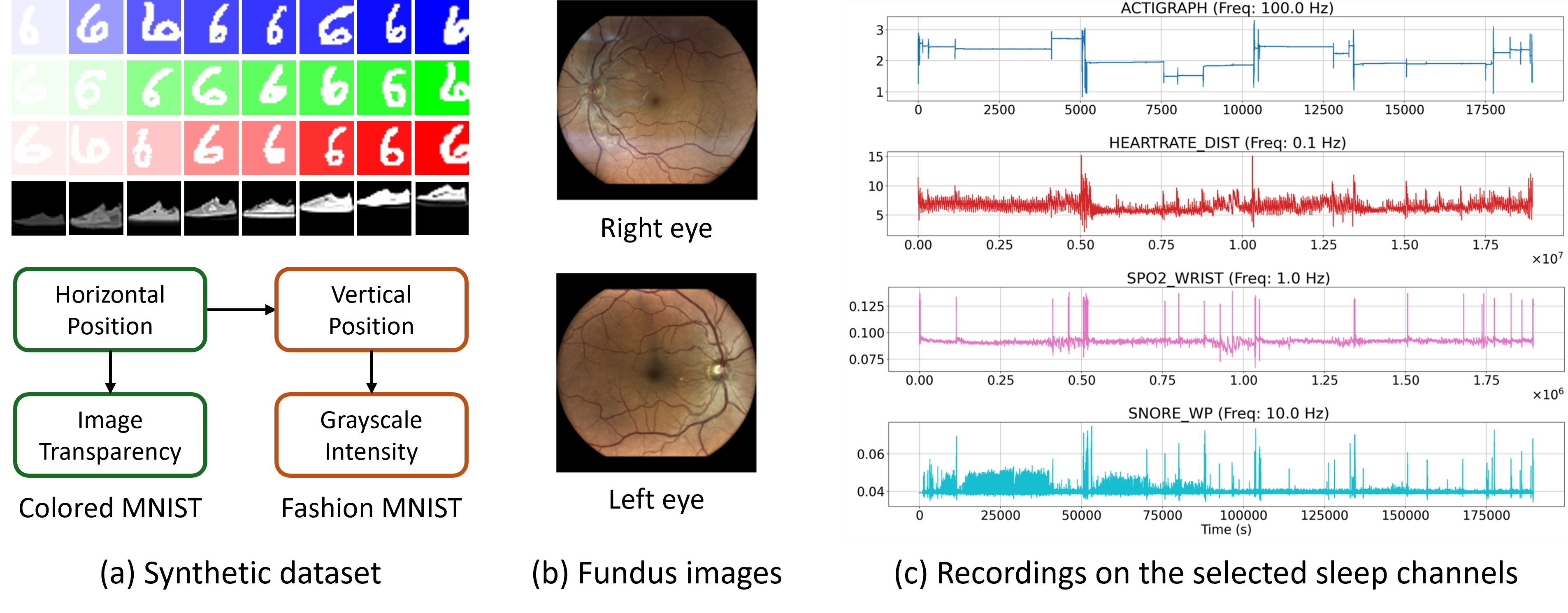}
\caption{Visualization on the datasets: 
(a) Synthetic dataset: Variant MNIST.
(b) Real-world dataset: Fundus imaging shows the interior surface of the eyes.
(c) Real-world dataset: Sleep monitoring shows the time-series recording of sleep-related metrics overnight.
\label{fig:realword}}
\end{figure}

\textbf{Fundus imaging} is the visualization of the interior surface of the fundus, which includes structures such as the optic disc, retina, and retinal microvasculature. 
High-resolution images of the back of the eye are essential for the diagnosis and monitoring of a variety of eye diseases and conditions.

For example, the retinal microvasculature, which consists of small blood vessels that supply blood to the retina, provides valuable information about eye health.
Moreover, fundus imaging can improve understanding of the underlying mechanisms of various eye diseases.
It serves as a non-invasive tool to assess the overall health of the microvascular circulation health and provides a direct view of part of the central nervous system. 

\textbf{Sleep monitoring} is a time-series dataset collected over three consecutive nights that records various metrics including sleep stage, body position, respiratory events, heart rate, oxygen saturation, and snoring.
This dataset focuses on obstructive sleep apnea (OSA), a sleep disorder in which a person's breathing is interrupted during sleep due to the relaxation of throat muscles, causing upper airway obstruction.
These interruptions often lead to loud snoring, reduced blood oxygen levels, stress responses, awakenings, and fragmented sleep.

This dataset is collected from a Home Sleep Apnea Test (HSAT), a non-invasive diagnostic method for sleep apnea. 
Patients wear a portable device overnight to monitor their breathing patterns, heart rate, oxygen levels, snoring, and other sleep patterns. 
The dataset includes multiple channels, such as ACTIGRAPH for movement, HEARTRATE\_DIST for heart rate, SPO2\_WRIST for blood oxygen saturation, and SBORE\_WP for snoring, capturing key aspects of physical activity and sleep patterns during the HSAT.
The device calculates apnea-related indices, including the Apnea/Hypopnea Index (AHI), Respiratory Disturbance Index (RDI), and Oxygen Desaturation Index (ODI), as well as indices for diagnosing conditions such as atrial fibrillation.

\subsection{Evaluation Metrics}
\paragraph{MCC: Mean Correlation Coefficient}
MCC is a standard metric used to evaluate the recovery of latent factors in causal representation learning. 
It measures the alignment between the ground-truth factors and the estimated latent variables. 
Specifically, MCC first computes the absolute values of the correlation coefficients between each ground-truth factor and each estimated latent variable. 
To account for possible permutations of the latent variables, the metric solves a linear sum assignment problem on the computed correlation matrix in polynomial time, ensuring optimal matching between the factors and their corresponding latent representations.

\paragraph{R2: Coefficient of Determination}
R2 is a standard metric used to evaluate the goodness of fit in regression models. 
It measures the proportion of variance in the dependent variable that is explained by the independent variables in the model. 
Specifically, R2 compares the residual sum of squares of the model with the total sum of squares and returns a value between 0 and 1. 
A higher R2 indicates that the model explains a larger portion of the variance in the data, with 1 representing a perfect fit and 0 indicating that the model explains none of the variability.

\paragraph{SHD: Structural Hamming Distance}
SHD is a widely used metric for evaluating the accuracy of graph structure recovery in causal discovery. 
It quantifies the difference between the true causal graph and the estimated graph. 
Specifically, SHD counts the number of edge modifications—additions, deletions, or reversals—required to transform the estimated graph into the ground-truth graph. 
This metric provides a simple yet effective measure of structural similarity, with a lower SHD indicating a closer alignment between the estimated and true causal structures.

\subsection{Detailed Discussion on Human Phenotype}\label{app:10kresult}
Without learning such latent variables, we cannot provide a causal explanation between different modalities. 
The estimated model shows all causal influences involved, suggests the existence of hidden causal variables, and illustrates their relationships with each other and with observable data. 
Asymptotically, the learned adjacency matrix $A$ corresponds to a graph within the Markov equivalence class given by the PC algorithm.

To interpret the learned hidden variables, we primarily refer to the existing medical literature, which supports their alignment with background knowledge, thereby adding validity to our results.
For example, the latent variable FRight3 relates handgrip strength to fundus imaging, consistent with findings showing that handgrip strength correlates with intraocular pressure (IOP)~\citep{perez2021determinant}. 
In addition, the association between the cataract and changes in IOP~\citep{slabaugh2013cataract} is consistent with the findings of the model.  
These connections underline the physiological relevance of the learned hidden variable.
Similarly, FRight1 and FLeft1, associated with fundus imaging and age estimation, are consistent with studies demonstrating age-related changes in fundus image color content~\citep{ege2002relationship}. 
Another latent variable Sleep1 associated with oxygen saturation and sleep metrics aligns with findings that oxygen saturation is a strong predictor of obstructive sleep apnea (OSA) severity~\citep{wali2020correlation}. This indicates that the model's latent variable effectively captures critical factors related to sleep disorders.

\section{Extended Experiment}
To further assess the robustness, scalability, and applicability of our proposed method, we conducted a series of extended experiments under more complex scenarios. 
These experiments aim to evaluate the performance under diverse latent variable configurations, varying sample sizes, and different structural assumptions, including non-DAG settings and shared latent variables. 

\paragraph{Performance in complex scenarios.}
To evaluate the scalability and generalizability of our method to complex causal structures, we conducted additional experiments on higher-dimensional simulated tasks with diverse configurations of latent variables and modalities. 
These setups introduce significantly more complex causal relationships between variables. 
Specifically, we consider three extended scenarios: 
(1) \textit{Five-mods}, with 30-dimensional observations from five modalities with two latent variables and one exogenous variable per modality. 
(2) \textit{Six-mods}, with 30-dimensional observations from six modalities with two latent variables and one exogenous variable per modality.
(3) \textit{Eight-mods}, involving 30-dimensional observations from eight modalities with two latent variables and one exogenous variable per modality.
The results, summarized in Table~\ref{tab:extendexp}, show that our method consistently delivers robust performance under these challenging conditions.

\paragraph{Impact of the number of latent variables.}
In real-world applications, the true number of latent variables is typically unknown, and arbitrarily predefining this number may introduce bias and degrade model performance. 
In this section, we discuss how our method can eliminate the redundant effect of the latent variables, and introduce a cross-validation-based method to determine the appropriate number of latent nodes.
By manually setting a range for the number of latent variables and selecting the one with the lowest validation loss, we ensure a principled approach that is both simple and widely applicable~\citep{khemakhem2020ice}.
Here we conduct synthetic experiments to validate its effectiveness. 
We followed the data generation process in Section~\ref{app:numerical}, where the ground-truth number of latent variables is two per modality. 
The results, as shown in Figure~\ref{fig:latentdim}, demonstrate that our approach accurately recovers the correct number of latent variables.

\paragraph{Impact of sample size.}
To investigate the impact of sample size on model performance, we conducted an additional experiment evaluating the MCC as the number of data samples increased. 
Following the data generation process described in Section~\ref{app:numerical}, where the ground-truth number of latent variables is two for two modalities. 
We systematically increased the sample size from 10,000 to 40,000 and measured MCC and R2 accordingly.
The results, presented in Figure~\ref{fig:samplenumber}, show a consistent improvement in MCC as the sample size increases.
This finding confirms the hypothesis that greater data availability enhances the model’s ability to recover the underlying causal structure.

\begin{figure}[ht]
\centering
\subfigure[]{
\includegraphics[height=0.25\textwidth]{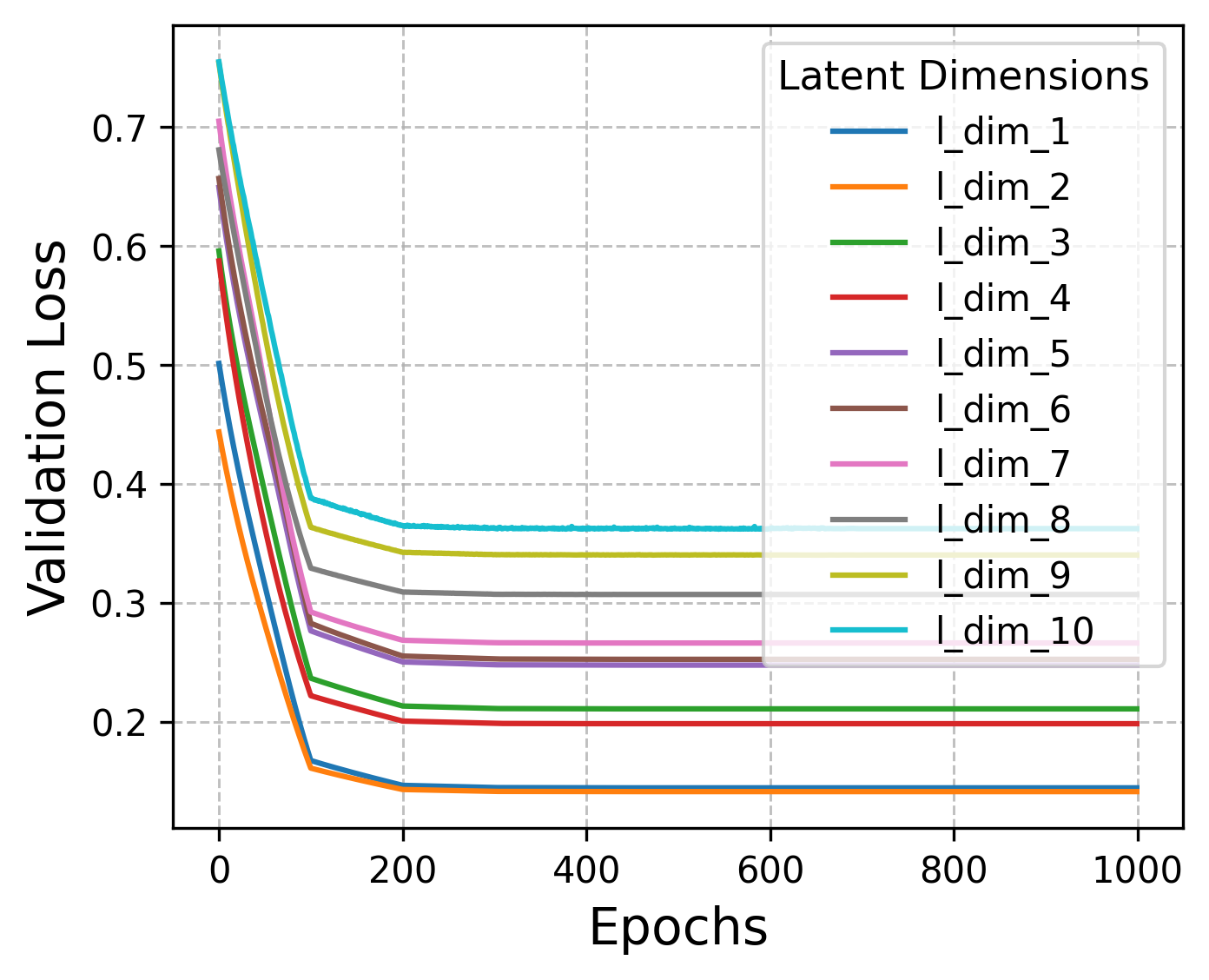}
\label{fig:latentdim}
}
\quad
\subfigure[]{
\includegraphics[height=0.25\textwidth]{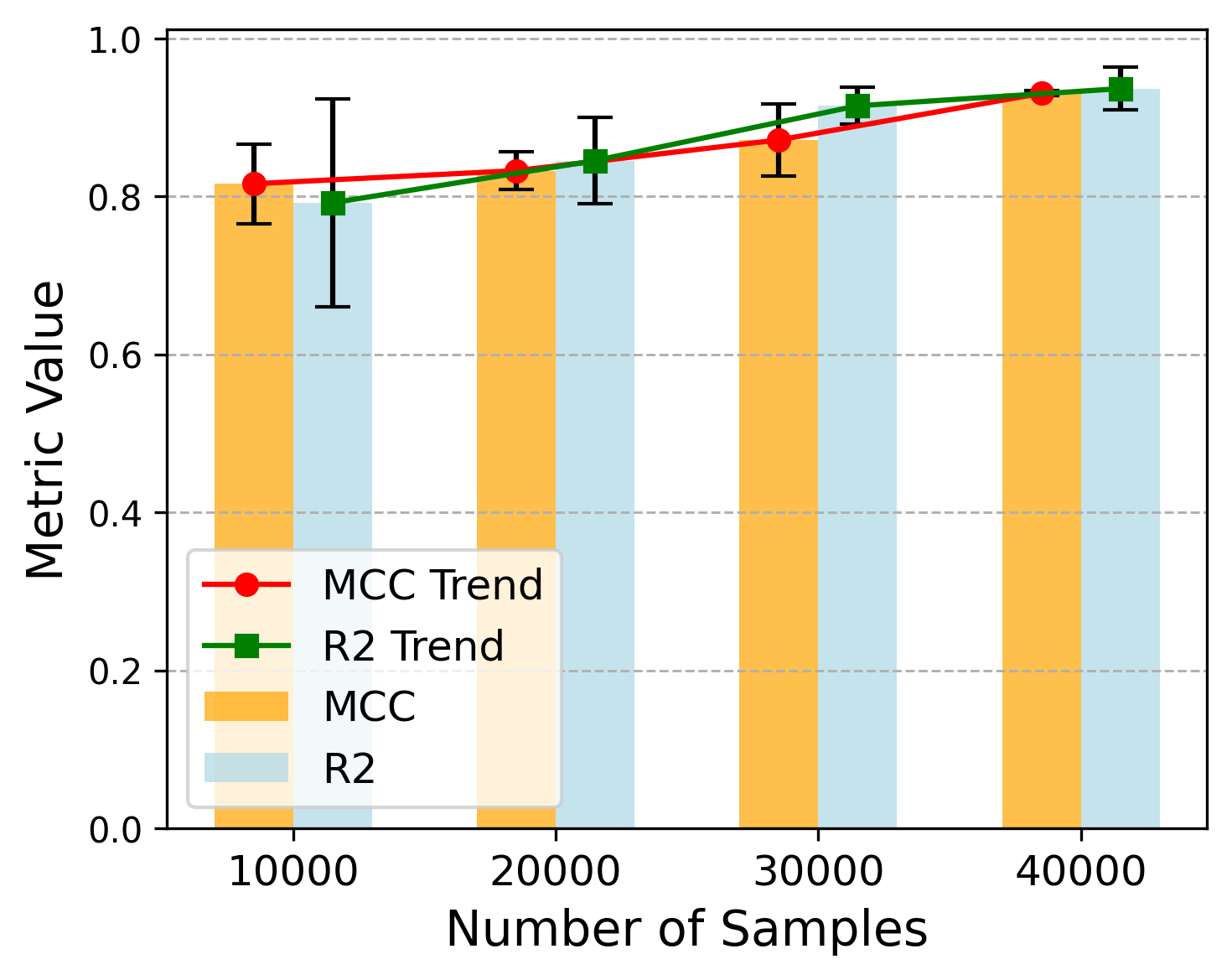}
\label{fig:samplenumber}
}
\caption{(a) Comparison of loss across different latent dimensions. (b) The effect of sample size.}
\end{figure}

\paragraph{Evaluation under non-DAG assumptions.}\label{ap:nondag}
The theoretical results in this paper do not strictly require the assumption of Directed Acyclic Graphs (DAGs) for latent variable structures within or across modalities. 
To evaluate our method under non-DAG settings, we conducted synthetic experiments where cycles were introduced within and across modalities. 
Specifically, we followed the data generation process in Section~\ref{app:numerical}, and considered: (1) cyclic influence within modality; and (2) cyclic influence across modalities.
Empirical results in Table~\ref{tab:extendexp} demonstrate that the presence of cycles does not hinder the identification of latent variables.

\paragraph{Discussion on the shared latent variables.}\label{ap:shared}
We present how to preprocess the current framework to accommodate shared variables across modalities and provide empirical results. 
The extended framework incorporates an additional mechanism to estimate the shared latent variable.
Inspired by previous works~\citep{yao2023multi,daunhaweridentifiability,von2021self}, we incorporate an additional contrastive loss to enforce similarity in the shared latent representations. 
To evaluate the effectiveness of this extension, we modify the data generation process in Section~\ref{app:numerical} and allow for the existence of a shared variable across modalities.  
The results, summarized in Table~~\ref{tab:extendexp}, show that our method accurately recovers both shared and modality-specific latent variables across different scenarios, confirming the theoretical guarantees of the extended framework.

\begin{table}[ht]
\centering
\resizebox{\textwidth}{!}{%
\begin{tabular}{l|ccc|cc|cc}
\toprule
\multirow{2}{*}{Metric} & \multicolumn{3}{c|}{Complex Scenarios} & \multicolumn{2}{c|}{Non-DAG Settings} & \multicolumn{2}{c}{Shared Latent Variables} \\
\cmidrule(lr){2-4} \cmidrule(lr){5-6} \cmidrule(lr){7-8}
& Five mods & Six mods & Eight mods & Cyclic within & Cyclic across & Two mods & Three mods \\
\midrule
R2  & 0.89\text{\scriptsize $\pm$ 1e-4} & 0.97\text{\scriptsize $\pm$ 8e-7} & 0.83\text{\scriptsize $\pm$ 1e-3} & 0.95\text{\scriptsize $\pm$ 1e-5} & 0.94\text{\scriptsize $\pm$ 2e-4} & 0.86\text{\scriptsize $\pm$ 5e-4} & 0.90\text{\scriptsize $\pm$ 1e-4} \\
MCC & 0.84\text{\scriptsize $\pm$ 3e-4} & 0.82\text{\scriptsize $\pm$ 4e-4} & 0.91\text{\scriptsize $\pm$ 5e-4} & 0.89\text{\scriptsize $\pm$ 2e-4} & 0.92\text{\scriptsize $\pm$ 1e-5} & 0.83\text{\scriptsize $\pm$\text{\scriptsize $\pm$ 7e-4}} & 0.83\text{\scriptsize $\pm$ 4e-6} \\
\bottomrule
\end{tabular}%
}
\caption{Extended experiment results across different experimental settings.\label{tab:extendexp}}
\end{table}

\section{Implementation Details}\label{ap:implement}
In this section, we provide details of the network architecture, including the optimization scheme and hyperparameter setting. 

\subsection{Network Architecture}\label{ap:arch}
We summarize our network architecture below and describe it in detail in Table~\ref{tab:arch-details}.

\begin{itemize}[leftmargin=*] \itemsep0em
\item  \textbf{(1,2) Encoder and Decoder}: 
The encoder transforms raw observations into latent representations, while the decoder reconstructs the inputs from the latent variables.
The encoder-decoder design varies depending on the downstream task. 
For synthetic data, MLPs with leaky ReLU activation were used. 
For image data, CNN was used as the encoder, and ConvTranspose2D as the decoder. 
LSTMs were used for time series data.
Based on the universal approximation theorem, the model is theoretically able to approximating the underlying mixing function. 

\item  \textbf{(3) Learnable Adjacency Matrix}: 
The causal relationships are embedded in the learned adjacency matrix, where the binary elements indicate whether specific pairs of vertices contribute to the generation of components. 
It initializes a learnable matrix that captures these dependencies. 
During the forward pass, the matrix is processed to ensure a directional structure where only certain connections are allowed based on a threshold. 
This allows the model to learn sparse, meaningful relationships between the latent variables.

\item  \textbf{(4) Flow-based Transformation}: 
The flow-based transformation is implemented using an MLP to process the latent variable and a flow model for the transformation. 
The MLP first extracts features from the latent variable, which are then used as input to the flow model, which applies an invertible transformation to the latent space, allowing the model to estimate the noise distribution. 

\end{itemize}

\begin{table}[ht]
\resizebox{\textwidth}{!}{%
\begingroup
\begin{tabular*}{1.05\textwidth}{@{\extracolsep{\fill}}|lll|}
\toprule
\textbf{Configuration} & \textbf{Description} &  \textbf{Output} \\
\toprule
\toprule
\textbf{1.1 MLP-Encoder} & Encoder for numerical data & \\
\toprule
Input & Multi-modality observations & BS $\times$ d\_x \\
Dense & h\_dim neurons, LeakyReLU & BS $\times$ h\_dim\\
Dense & h\_dim neurons, LeakyReLU & BS $\times$ h\_dim \\
Dense & Latent embeddings & BS $\times$ l\_dim \\
\toprule
\toprule
\textbf{2.1 MLP-Decoder} & Decoder for numerical data & \\
\toprule
Input & Latent embeddings & BS $\times$ l\_dim \\
Dense & h\_dim neurons, LeakyReLU & BS $\times$ h\_dim \\
Dense & h\_dim neurons, LeakyReLU & BS $\times$ h\_dim \\
Dense & Reconstructed observations & BS $\times$ d\_x \\
\toprule
\toprule
\textbf{1.2 Image-Encoder} & Encoder for image data & \\
\toprule
Input & Image input & BS $\times$ 3 $\times$ H $\times$ W \\
ResNet18 & ResNet backbone, LeakyReLU & BS $\times$ h\_dim\\
Dense & Latent embeddings & BS $\times$ l\_dim \\
\toprule
\toprule
\textbf{2.2 Image-Decoder} & Decoder for image data & \\
\toprule
Input & Latent embeddings & BS $\times$ l\_dim \\
Dense & h\_dim neurons & BS $\times$ h\_dim $\times$ H' $\times$ W' \\
ConvTranspose2D & Reconstructed observations & BS $\times$ 3 $\times$ H $\times$ W \\
\toprule
\toprule
\textbf{1.3 Time-series Encoder} & Encoder for time-series data & \\
\toprule
Input & Multi-channel time-series data & BS $\times$ seq\_len $\times$ n\_channel \\
LSTM & Sequences into hidden representations & BS $\times$ h\_dim \\
Output & Latent representation & BS $\times$ l\_dim \\
\toprule
\toprule
\textbf{2.3 Time-series Decoder} & Decoder for time-series data & \\
\toprule
Input & Latent representation & BS $\times$ l\_dim \\
LSTM & Sequence into output features & BS $\times$ seq\_len $\times$ h\_dim \\
Output & Reconstructed time-series data & BS $\times$ seq\_len $\times$ n\_channel \\
\toprule
\toprule
\textbf{3. Adjacency Matrix} & Sparsity regularization & \\
\toprule
Input & Latent variables from encoders & BS $\times$ z\_all \\
Masking & Lower triangular mask & z\_all $\times$ z\_all \\
Thresholding & Retain entries exceeding threshold & z\_all $\times$ z\_all\\
Output & Learned causal adjacency matrix & z\_all $\times$ z\_all\\
\toprule
\toprule
\textbf{4. Flow Transformation} & Estimate the noise term & \\
\toprule
Input & Latent variables across modalities & BS $\times$ z\_all \\
Condition Input & Apply adjacency matrix to latent & BS $\times$ z\_all $\times$ z\_all \\
Flow Transformation & Apply transformation to latent & BS $\times$ z\_all \\
Output & Estimated noise variables & BS $\times$ z\_all \\
\bottomrule
\end{tabular*}
\endgroup
}
\caption{
Architecture details. BS: batch size, d\_x: input dimension, l\_dim: latent dimension in each modality, z\_all: latent dimensions across all modalities, h\_dim: hidden dimension, H/W: height/width of the input image, seq\_len: sequence length, n\_channel: number of channels.}
\label{tab:arch-details}
\end{table}

\subsection{Training Details}\label{ap:train}
\paragraph{Optimization Scheme.}
The estimation framework was trained using the Adam optimizer on GPU, and the StepLR scheduler was used to reduce the learning rate periodically. 
The training process ran for a maximum of 10000 epochs, with early stopping applied if the validation loss does not improve for 20 consecutive epochs. 
Random seeds were used to ensure reproducibility, and results were averaged across experiments, with variance reported.

The training loss combines multiple components.
\begin{itemize}[leftmargin=*] \itemsep0em
\item Reconstruction loss: Mean squared error between reconstructed inputs and original data.
\item KL divergence loss: Encourages estimated variables to follow a standard normal prior.
\item Sparsity loss: An L1-norm penalty is applied to the adjacency matrix to enforce sparsity.
\end{itemize}

\paragraph{Hyperparameter.}
The hyperparameters $\alpha = [\alpha_{\text{Ind}},\alpha_{\text{Sp}},\alpha_{\text{Recon}}]$ represent the weights assigned to each term in the composite objective function. 
For each dataset, they were tuned within appropriate logarithmic intervals, ensuring a balance between independence, sparsity, and reconstruction.
For the experiments, the following settings were applied: $\alpha=[1e-1, 1e-2, 1]$ for the synthetic dataset, $\alpha=[1e-2, 1e-3, 2]$ for the MNIST dataset, and $\alpha=[1e-1, 1e-2, 1]$ for the phenotype dataset.

\section{Algorithm Pseudocode}\label{app:algorithm}
The pseudocode for the proposed algorithm is presented in Algorithm~\ref{alg:main}.

\clearpage
\begin{algorithm}[H]
\caption{Pseudocode for the proposed algorithm.}
\label{alg:main}
\begin{algorithmic}[1]
\STATE \textbf{Input: } Grouped observations $\{\vecx^{(m)}\}_{m=1}^{M}$
\STATE \textbf{Output: } Estimated latent variables $\{\hat{\vecz}^{(m)}\}_{m=1}^{M}$

\STATE \text{}
\STATE \emph{\# Random Initialization}
\STATE Initialize encoders $\{\text{En}^{(m)}\}_{m=1}^{M}$ and decoders $\{\text{De}^{(m)}\}_{m=1}^{M}$ for each group

\STATE \text{}
\STATE \emph{\# Encoder}
\STATE \textbf{Input:} Grouped observations $\{\vecx^{(m)}\}_{m=1}^{M}$
\STATE \textbf{Output:} Estimated latent variables \(\hat{\vecz}^{(m)}\) for each group \(m\)
\FOR{\textbf{each group} \(m = 1\) \textbf{to} \(M\)}
\STATE Encode the current group latent and exogenous variables: $\hat{\vecz}^{(m)}, \hat{\eta}^{(m)} =\text{En}^{(m)}(\vecx^{(m)})$
\ENDFOR
\STATE Concatenate latent representations: $\{\hat{\vecz}^{(m)}\}_{m=1}^{M} = \hat{\vecz}^{(1)}\oplus\hat{\vecz}^{(2)}\oplus\hdots\oplus\hat{\vecz}^{(M)}$
\STATE \textbf{return} Estimated latent variables and exogenous variables $\{\hat{\vecz}^{(m)},\hat{\eta}^{(m)}\}_{m=1}^{M}$

\STATE \text{}
\STATE \emph{\# Flow-based Noise Estimation}
\STATE \textbf{Input:} Estimated latent variables for each group $\{\hat{\vecz}^{(m)}\}_{m=1}^{M}$
\STATE \textbf{Output:} Estimated noise term $\hat{\epsilon}^{d(\vecz)}_{i=1}$
\STATE Initialize adjacency matrix $\hat{\mathbf{A}}$
\STATE Select the parents of latent variable based on the adjacency matrix 
\STATE Pass through flow model to obtain estimated residuals $\hat{\epsilon}_i$
\STATE Update the estimated causal graph based on the adjacency matrix with threshold
\STATE Compute sparsity loss based on $L_1$ norm
\STATE Compute the KL divergence between $[\{\hat{\eta}^{(m)}\}_{m=1}^{M},\hat{\epsilon}^{d(\vecz)}_{i=1}]$ and Gaussian prior
\STATE \textbf{return} Estimated noise term $\hat{\epsilon}^{d(\vecz)}_{i=1}$

\STATE \text{}
\STATE \emph{\# Decoder}
\STATE \textbf{Input:} Estimated latent and exogenous variables in each group $\{\hat{\vecz}^{(m)},\hat{\eta}^{(m)}\}_{m=1}^{M}$
\STATE \textbf{Output:} Reconstructed grouped features $\{\hat{\vecx}^{(m)}\}_{m=1}^{M}$
\FOR{\textbf{each group} \(m = 1\) \textbf{to} \(M\)}
\STATE Decode $(\hat{\vecz}^{(m)},\hat{\eta}^{(m)})$ to reconstruct features $\hat{\vecx}^{(m)}$: $\hat{\vecx}^{(m)}=\text{De}^{(m)}(\hat{\vecz}^{(m)},\hat{\eta}^{(m)})$
\STATE Compute reconstruction loss using MSE: $\mathcal{L}_{\text{Recon}}^{(m)} = \text{MSE}(\hat{\vecx}^{(m)}, \vecx^{(m)})$
\ENDFOR

\end{algorithmic}
\end{algorithm}

\end{document}